\documentclass[12pt]{article}
\usepackage{float}
\usepackage{amsmath}
\usepackage{amsthm}
\usepackage{amssymb}
\usepackage{fullpage}
\usepackage{graphicx}
\usepackage[numbers]{natbib}
\usepackage{url}
\usepackage{epstopdf}
\makeatletter

\floatstyle{ruled}
\newfloat{algorithm}{tbp}{loa}
\providecommand{\algorithmname}{Algorithm}
\floatname{algorithm}{\protect\algorithmname}

\theoremstyle{plain}
\newtheorem{prop}{\protect\propositionname}
\theoremstyle{plain}
\newtheorem{thm}{\protect\theoremname}
\theoremstyle{plain}
\newtheorem{assumption}{\protect\assumptionname}
\theoremstyle{plain}
\newtheorem{lem}{\protect\lemmaname}
\theoremstyle{plain}

\@ifundefined{date}{}{\date{}}
\usepackage{booktabs}
\usepackage{amsfonts}
\usepackage{nicefrac}
\usepackage{microtype}

\usepackage{lipsum}
\usepackage{microtype}
\usepackage{algorithm}
\usepackage{algorithmic}


%

\newcommand{\boldtheta}{{\boldsymbol{\theta}}}
\newcommand{\bolddelta}{{\boldsymbol{\delta}}}

\newcommand{\boldf}{{\boldsymbol{f}}}
\newcommand{\boldu}{{\boldsymbol{u}}}

\newcommand{\boldone}{{\boldsymbol{1}}}

\newcommand{\boldDelta}{{\boldsymbol{\Delta}}}

\newcommand{\boldx}{{\boldsymbol{x}}}

\newcommand{\boldw}{{\boldsymbol{w}}}

\newcommand{\boldzero}{{\boldsymbol{0}}}

\newcommand{\boldX}{{\boldsymbol{X}}}

\newcommand{\iid}{\stackrel{\mathrm{i.i.d.}}{\sim}}


\newcommand{\vertiii}[1]{{\left\vert\kern-0.25ex\left\vert\kern-0.25ex\left\vert #1 
    \right\vert\kern-0.25ex\right\vert\kern-0.25ex\right\vert}}

\usepackage[dvipsnames]{xcolor}

\colorlet{mylinkcolor}{NavyBlue}
\colorlet{mycitecolor}{NavyBlue}
\colorlet{myurlcolor}{NavyBlue}

\@ifundefined{showcaptionsetup}{}{%
	\PassOptionsToPackage{caption=false}{subfig}}

\providecommand{\assumptionname}{Assumption}
\providecommand{\lemmaname}{Lemma}
\providecommand{\propositionname}{Proposition}
\providecommand{\corollaryname}{Corollary}
\providecommand{\theoremname}{Theorem}

\@ifundefined{showcaptionsetup}{}{%
	\PassOptionsToPackage{caption=false}{subfig}}
\usepackage{subfig}
\makeatother

\providecommand{\assumptionname}{Assumption}
\providecommand{\lemmaname}{Lemma}
\providecommand{\propositionname}{Proposition}
\providecommand{\corollaryname}{Corollary}
\providecommand{\theoremname}{Theorem}
\begin{document}
\title{Trimmed Density Ratio Estimation}
\author{Song Liu \thanks{This work was done when Song Liu was at The Institute of Statistical Mathematics, Japan} \\
	song.liu@bristol.ac.uk, \\
	University of Bristol\\
	\\
	Akiko Takeda \\
	atakeda@ism.ac.jp, \\
	The Institute of Statistical Mathematics, \\
	Center for Advanced Intelligence Project (AIP), RIKEN\\
	\\
	Taiji Suzuki \\
	suzuki.t.ct@m.titech.ac.jp, \\
	Tokyo Institute of Technology, \\
	Sakigake (PRESTO), JST \\
	Center for Advanced Intelligence Project (AIP), RIKEN \\
	\\
	Kenji Fukumizu \\
	fukumizu@ism.ac.jp\\
	The Institute of Statistical Mathematics}
\maketitle
\begin{abstract}
Density ratio estimation is a vital tool in both machine learning and statistical community. However, due to the unbounded nature of density ratio, the estimation procedure can be vulnerable to corrupted data points, which often
pushes the estimated ratio toward infinity. In this paper, we present
a robust estimator which automatically identifies and trims outliers.
The proposed estimator has a convex formulation, and the global optimum
can be obtained via subgradient descent. We analyze the parameter
estimation error of this estimator under high-dimensional settings.
Experiments are conducted to verify
the effectiveness of the estimator. 
\end{abstract}

\section{Introduction}
\label{sec.intro}
Density ratio estimation (DRE) \cite{Nguyen2010, Huang2007,Sugiyama2012} is an important tool
in various branches of machine learning and statistics.
Due to its ability of directly modelling the differences between two
probability density functions, DRE finds its applications in change
detection \cite{Kawahara2012,Fazayeli2016}, two-sample test
\cite{Wornowizki2016} and outlier detection \cite{azmandian2012local, Smola2009}. In
recent years, a sampling framework called Generative Adversarial Network
(GAN) (see e.g., \cite{Goodfellow2014,Nowozin2016}) uses the density
ratio function to compare artificial samples from a generative distribution and real samples from
an unknown distribution. DRE has also been widely discussed in statistical literatures for adjusting non-parametric density estimation \cite{Efron1996}, stabilizing the estimation of heavy tailed distribution \cite{Fithian2015} and fitting multiple distributions at once \cite{Fokianos2004}.

However, as a density ratio function can grow unbounded,
DRE can suffer from robustness and stability issues: 
a few corrupted points may completely mislead the
estimator (see Figure \ref{fig.mnchange} in  Section \ref{sec.label} for example). Considering a density ratio $p(x)/q(x)$, a point $x$ that is extremely far away from the high density region of $q$ may have an almost infinite ratio value and DRE results can be \emph{dominated} by such points.   
This makes DRE performance very sensitive to rare pathological data or small modifications of the dataset. Here we give two examples:
\paragraph{Cyber-attack}
In change detection applications, a density ratio $p(x)/q(x)$ is used to determine how the data generating model differs between $p$ and $q$. 
Consider a ``hacker''
who can spy on our data may just inject a few data points in $p$
which are extremely far away from the high-density region of $q$. This would result excessively large $p(x)/q(x)$ tricking us to believe there is a
significant change from $q(x)$ to $p(x)$, even if there is no change at all. If the generated outliers are also far away from the high density region of $p(x)$, we end up with a very different density ratio function and the original parametric pattern in the ratio is ruined. We give such an example in Section \ref{sec.label}.
%


\paragraph{Volatile Samples}
The change of external environment may be responded in unpredictable ways.
It is possible that a small portion of samples react more ``aggressively'' to the change than the others. These samples may be skewed and show very high density ratios, even if the change of distribution is relatively mild when these volatile samples are excluded. 
For example, when testing a new fertilizer, a small number of plants may fail to adapt, even if the vast majority of crops are healthy. 


Overly large density ratio values can cause further troubles when the ratio is used to weight samples. For example, in the domain adaptation setting, we may reweight samples from one task and reuse them in another task. Density ratio is a natural choice of such ``importance weighting'' scheme \cite{Sugiyama2008,Shimodaira2000}. 
However, if one or a few samples have extremely high ratio, after renormalizing, other samples will have almost zero weights and have little impact to the learning task. 


Several methods have been proposed to solve this problem. The relative
density ratio estimation \cite{Yamada2013} estimates a ``biased'' version of density ratio
controlled by a mixture parameter $\alpha.$ The relative density
ratio is always upper-bounded by $\frac{1}{\alpha}$, 
which can give a more robust estimator. 
However,
it is not clear how to de-bias such an estimator
to recover the true density ratio function.
\cite{Smola2009} took a more direct approach. It estimates a \emph{thresholded}
density ratio by setting up a tolerance $t$ to the density
ratio value. All likelihood ratio values bigger than $t$
will be clipped to $t$. The estimator was derived from Fenchel duality
for $f$-divergence \cite{Nguyen2010}. However, 
the optimization for the estimator is not convex
if one uses log-linear models.
The formulation also relies on the non-parametric approximation of the
density ratio function (or the log ratio function) 
making the learned model
hard to interpret. Moreover, there is no intuitive way to directly control the proportion of ratios that are thresholded.
Nonetheless, the concept studied in our paper
is inspired by this pioneering work.

In this paper, we propose a novel method based on a ``trimmed Maximum
Likelihood Estimator'' \cite{Neykov1990, Hadi1997}. This idea relies on a specific
type of density ratio estimator (called log-linear KLIEP) \cite{Tsuboi2009}
which can be written as a maximum likelihood formulation. We simply
``ignore'' samples that make the empirical likelihood take exceedingly
large values. The trimmed density ratio estimator can be formulated as a convex optimization
and translated into a weighted M-estimator. This helps us develop
a simple subgradient-based algorithm that is guaranteed to reach the
global optimum.

Moreover, we shall prove that in addition to recovering the correct density
ratio under the outlier setting, the estimator can also obtain a ``corrected'' 
density ratio function under a truncation setting. It ignores
``pathological'' samples and recovers density ratio only using ``healthy'' samples. 

Although trimming will usually result a more robust estimate of the density ratio function, we also point out that it should not be abused. For example, in the tasks of two-sample test, a diverging density ratio might indicate interesting structural differences between two distributions. 

In Section \ref{sec:Preliminary:-Trimmed-Maximum},
we explain some preliminaries on trimmed maximum likelihood estimator.
In Section \ref{sec:Trimmed-Density-Ratio}, we introduce a trimmed
DRE.
We solve it using a convex formulation whose optimization
procedure is explained in Section \ref{sec:Optimization}. In Section
\ref{sec:-Consistency-in}, we prove the estimation error
upper-bound with respect to a sparsity inducing regularizer. Finally, experimental results are shown in Section \ref{sec.label} and we conclude our work in Section \ref{sec.concl}.

\section{Preliminary: Trimmed Maximum Likelihood Estimation\label{sec:Preliminary:-Trimmed-Maximum}}
Although our main purpose is to estimate the density ratio, we first
introduce the basic concept of \emph{trimmed estimator} using density
functions as examples. Given $n$ samples drawn from a distribution
$P$, i.e., $X:=\left\{ \boldsymbol{x}^{(i)}\right\} _{i=1}^{n}\iid P,\boldsymbol{x}\in\mathbb{R}^{d}$,
we want to estimate the density function $p(\boldsymbol{x})$. Suppose
the true density function is a member of \emph{exponential family}
\cite{Pitman1936}, 
\begin{align}
p(\boldsymbol{x};\boldsymbol{\theta})=\exp\left[\langle\boldsymbol{\theta},\boldsymbol{f}(\boldsymbol{x})\rangle-\log Z(\boldsymbol{\theta})\right],~~Z(\boldsymbol{\theta})=\int q(\boldsymbol{x})\exp\langle\boldsymbol{\theta},\boldsymbol{f}(\boldsymbol{x})\rangle d\boldsymbol{x}\label{eq:expfamily}
\end{align}
where $\boldsymbol{f}(\boldsymbol{x})$ is the sufficient statistics,
$Z(\boldsymbol{\theta})$ is the normalization function and $q(\boldsymbol{x})$
is the base measure.

Maximum Likelihood Estimator (MLE) maximizes the
empirical likelihood over the entire dataset. In contrast, a \emph{trimmed
} MLE only maximizes the likelihood over a \emph{subset} of samples
according to their likelihood values (see e.g., \cite{Hadi1997,Vandev1998}).
This paradigm can be used to derive a popular outlier detection method, one-class Support Vector Machine (one-SVM) \cite{Schoelkopf2000}. 
The derivation is crucial to the development of our trimmed density ratio estimator in later sections. 


Without loss of generality, we can
set the log likelihood function as $\log p(\boldsymbol{x}^{(i)};\boldsymbol{\theta})-\tau_{0},$
where $\tau_{0}$ is a constant. 
As samples corresponding to high likelihood values are likely to be inliers, we can trim all samples whose likelihood is bigger than $\tau_{0}$
using a clipping function $[\cdot]_{-}$, i.e., 
$
\hat{\boldtheta}=\arg\max_{\boldtheta}\sum_{i=1}^{n}[\log p(\boldsymbol{x}^{(i)};\boldsymbol{\theta})-\tau_{0}]_{-},\label{eq:trimMLE}
$
where $[\ell]_{-}$ returns $\ell$ if $\ell\le0$ and $0$ otherwise.\emph{
}This optimization has a \emph{convex }formulation: 
\begin{align}
\min_{\boldsymbol{\theta},\boldsymbol{\epsilon}\ge0}\langle\boldsymbol{\epsilon},\boldsymbol{1}\rangle,~~\text{ s.t. }\forall i,\log p\left(\boldsymbol{x}^{(i)};\boldsymbol{\theta}\right)\ge\tau_{0}-\epsilon_{i},\label{eq:trimmedMLE}
\end{align}
where $\boldsymbol{\epsilon}$ is the slack variable measuring the
difference between $\log p\left(\boldsymbol{x}^{(i)};\boldsymbol{\theta}\right)$
and $\tau_{0}$. However, formulation \eqref{eq:trimmedMLE} is not
practical since computing the normalization term $Z(\boldsymbol{\theta})$
in \eqref{eq:expfamily} is intractable for a general \textbf{$\boldsymbol{f}$}
and it is unclear how to set the trimming level $\tau_{0}$. Therefore we ignore the normalization term and introduce other control
terms: 
\begin{align}
\min_{\boldsymbol{\theta},\boldsymbol{\epsilon}\ge0,\tau\ge0}\frac{1}{2}\|\boldsymbol{\theta}\|^{2}-\nu\tau+\frac{1}{n}\langle\boldsymbol{\epsilon},\boldsymbol{1}\rangle~~\text{ s.t. }\forall i,\langle\boldsymbol{\theta},\boldsymbol{f}(\boldsymbol{x}^{(i)})\rangle\ge\tau-\epsilon_{i}.\label{eq:onesvm}
\end{align}
The $\ell_{2}$ regularization term is introduced to avoid $\boldsymbol{\theta}$
reaching unbounded values.
 A new hyper parameter $\nu\in(0,1]$ replaces $\tau_{0}$ to control
the number of trimmed samples. It can be proven using KKT conditions that at most $1-\nu$
fraction of samples are discarded (see e.g., \cite{Schoelkopf2000}, Proposition 1 for details). 
Now we have reached the standard formulation
of one-SVM. 

This trimmed estimator ignores the large likelihood
values and creates a focus only on the low density region. Such a
trimming strategy allows us to discover ``novel'' points or outliers
which are usually far away from the high density area.

\section{Trimmed Density Ratio Estimation\label{sec:Trimmed-Density-Ratio}}

In this paper, our main focus is to derive a \emph{robust} density
ratio estimator following a similar trimming strategy. First, we briefly
review the a density ratio estimator \cite{Sugiyama2012} from the
perspective of Kullback-Leibler divergence minimization.

\subsection{Density Ratio Estimation (DRE)}

For two sets of data 
$
X_{p}:=\{\boldsymbol{x}_{p}^{(1)},\dots,\boldsymbol{x}_{p}^{(n_{p})}\}\iid P,X_{q}:=\{\boldsymbol{x}_{q}^{(1)},\dots,\boldsymbol{x}_{q}^{(n_{q})}\}\iid Q,
$
 assume both the densities $p(\boldsymbol{x})$ and $q(\boldsymbol{x})$
are in exponential family \eqref{eq:expfamily}. We know 
$
\frac{p(\boldsymbol{x};\boldsymbol{\theta}_{p})}{q(\boldsymbol{x};\boldsymbol{\theta}_{q})}\propto\exp\left[\langle\boldsymbol{\theta}_{p}-\boldsymbol{\theta}_{q},\boldsymbol{f}(\boldsymbol{x})\rangle\right].
$
Observing that the data $\boldsymbol{x}$ only interacts with the
parameter $\boldsymbol{\theta}_{p}-\boldsymbol{\theta}_{q}$ through
$\boldsymbol{f}$ , we can keep using $\boldsymbol{f}(\boldsymbol{x})$
as our sufficient statistic for the density ratio model, and merge
two parameters $\boldsymbol{\theta}_{p}$ and $\boldsymbol{\theta}_{q}$
into one single parameter $\boldsymbol{\delta}=\boldsymbol{\theta}_{p}-\boldsymbol{\theta}_{q}$.
Now we can model our density ratio as 
\begin{align}
r(\boldsymbol{x};\boldsymbol{\delta}):=\exp\left[\langle\boldsymbol{\delta},\boldsymbol{f}(\boldsymbol{x})\rangle-\log N(\boldsymbol{\delta})\right],~N(\boldsymbol{\delta}):=\int q(\boldsymbol{x})\exp\langle\boldsymbol{\delta},\boldsymbol{f}(\boldsymbol{x})\rangle d\boldsymbol{x},\label{eq:ratio_model}
\end{align}
where $N(\boldsymbol{\delta})$ is the normalization term that guarantees
$\int q(\boldsymbol{x})r(\boldsymbol{x};\boldsymbol{\delta})d\boldsymbol{x}=1$
so that $q(\boldsymbol{x})r(\boldsymbol{x};\boldsymbol{\delta})$
is a valid density function and is normalized over its domain.

Interestingly, despite the parameterization (changing from $\boldsymbol{\theta}$
to $\boldsymbol{\delta}$), \eqref{eq:ratio_model} is exactly the
same as \eqref{eq:expfamily} where $q(\boldsymbol{x})$ appeared
as a base measure. The difference is, here, $q(\boldsymbol{x})$ is
a \emph{density function} from which $X_{q}$ are drawn so that $N(\boldsymbol{\delta})$
can be approximated accurately from samples of $Q$. Let us define
\begin{align}
\label{eq:ratiomodel2}
\hat{r}(\boldsymbol{x};\boldsymbol{\delta}):=\exp\left[\langle\boldsymbol{\delta},\boldsymbol{f}(\boldsymbol{x})\rangle-\log\widehat{N}(\boldsymbol{\delta})\right],~\widehat{N}(\boldsymbol{\delta}):=\frac{1}{n_{q}}\sum_{j=1}^{n_{q}}\exp\left[\langle\boldsymbol{\delta},\boldsymbol{f}(\boldsymbol{x}_{q}^{(j)})\rangle\right].
\end{align}
Note this model can be computed for any $\boldsymbol{f}$ even if
the integral in $N(\boldsymbol{\delta})$ does not have a closed form
.

In order to estimate $\boldsymbol{\delta}$, we minimize the Kullback-Leibler
divergence between $p$ and $q\cdot r_{\boldsymbol{\delta}}$: 
\begin{align}
\min_{\boldsymbol{\delta}}\mathrm{KL}\left[p|q\cdot r_{\boldsymbol{\delta}}\right] & =\min_{\boldsymbol{\delta}}\int p(\boldsymbol{x})\log\frac{p(\boldsymbol{x})}{q(\boldsymbol{x})r(\boldsymbol{x};\boldsymbol{\delta})}d\boldsymbol{x}=c-\max_{\boldsymbol{\delta}}\int p(\boldsymbol{x})\log r(\boldsymbol{x};\boldsymbol{\delta})d\boldsymbol{x}\nonumber \\
 & \approx c-\max_{\boldsymbol{\delta}}\frac{1}{n_{p}}\sum_{i=1}^{n_{p}}\log\hat{r}(\boldsymbol{x}_{p}^{(i)};\boldsymbol{\delta})\label{eq:MLEOBJ}
\end{align}
where $c$ is a constant irrelevant to $\boldsymbol{\delta}$. It
can be seen that the minimization of KL divergence boils down to \emph{maximizing
log likelihood} \emph{ratio} over dataset $X_{p}$.

Now we have reached the log-linear Kullback-Leibler Importance Estimation
Procedure (log-linear KLIEP) estimator \cite{Tsuboi2009,Liu2016a}. 



%

\subsection{Trimmed Maximum Likelihood Ratio}
As stated in Section \ref{sec.intro}, to rule out the influences of large density ratio, we trim samples with large likelihood
ratio values from \eqref{eq:MLEOBJ}. Similarly to one-SVM in  \eqref{eq:trimmedMLE},
we can consider a trimmed MLE 
$
\hat{\boldsymbol{\delta}}=\arg\max_{\bolddelta}\sum_{i=1}^{n_{p}}[\log\hat{r}(\boldsymbol{x}_{p}^{(i)};\boldsymbol{\delta})-t_{0}]_{-}\label{eq:hingeMLE}
$
where $t_{0}$ is a threshold above which the likelihood ratios are
ignored. It has a convex formulation: 
\begin{align}
\min_{\boldsymbol{\delta},\boldsymbol{\epsilon}\ge\boldsymbol{0}}\langle\epsilon,\boldsymbol{1}\rangle,~~\text{s.t. }\forall\boldsymbol{x}_{p}^{(i)}\in X_{p},\log\hat{r}(\boldsymbol{x}_{p}^{(i)};\boldsymbol{\delta})\ge t_{0}-\epsilon_{i}.\label{eq:objsimple}
\end{align}
\eqref{eq:objsimple} is similar to \eqref{eq:trimmedMLE} since we
have only replaced $p(\boldsymbol{x};\boldsymbol{\theta})$ with $\hat{r}(\boldsymbol{x};\boldsymbol{\delta})$.
However, the ratio model $\hat{r}(\boldsymbol{x};\boldsymbol{\delta})$
in \eqref{eq:objsimple} comes with a tractable normalization term
$\hat{N}$ while the normalization term $Z$ in $p(\boldsymbol{x};\boldsymbol{\theta})$
is in general intractable.

Similar to \eqref{eq:onesvm}, we can directly control the trimming
quantile via a hyper-parameter $\nu$: 
\begin{align}
\min_{\boldsymbol{\delta},\boldsymbol{\epsilon}\ge\boldsymbol{0},t\ge0}\frac{1}{n_{p}}\langle\epsilon,\boldsymbol{1}\rangle-\nu\cdot t+\lambda R(\boldsymbol{\delta}),~~\text{s.t. }\forall\boldsymbol{x}_{p}^{(i)}\text{\ensuremath{\in}}X_{p},\log\hat{r}(\boldsymbol{x}_{p}^{(i)};\boldsymbol{\delta})\ge t-\epsilon_{i}\label{eq:objquantile}
\end{align}
where $R(\boldsymbol{\delta})$ is a convex regularizer. \eqref{eq:objquantile}
is\emph{ }also convex, but it has $n_{p}$ number of \emph{non-linear}
constraints and the search for the global optimal solution can be time-consuming.
To avoid such a problem, one could derive and solve the dual problem
of \eqref{eq:objquantile}. In some applications,
we rely on the primal parameter structure (such as sparsity) for model
interpretation, and feature engineering. In Section \ref{sec:Optimization}, we translate \eqref{eq:objquantile} into an
equivalent form so that its solution is obtained via a subgradient
ascent method which is guaranteed to converge to the global optimum.


One common way to construct a convex robust estimator is using a Huber loss \cite{Huber1964}.
Although the proposed trimming technique rises from a different setting, it shares the same guiding principle with Huber loss: avoid assigning dominating values to outlier likelihoods in the objective function. 

In Section \ref{sec:poofproposvm} in the supplementary material, we show the relationship between trimmed DRE and binary Support Vector Machines \cite{Scholkopf2001,Cristianini2000}.

\section{Optimization\label{sec:Optimization}}
The key to solving \eqref{eq:objquantile} efficiently is reformulating
it into an equivalent $\max\min$ problem. 
\begin{prop}
\label{prop:minma}Assuming $\nu$ is chosen such that $\hat{t}>0$
for all optimal solutions in \eqref{eq:objquantile}, then $\hat{\boldsymbol{\delta}}$
is an optimal solution of \eqref{eq:objquantile} if and only if it
is also the optimal solution of the following $\max \min$ problem: 
\begin{align}
\max_{\boldsymbol{\delta}}\min_{\boldsymbol{w}\in\left[0,\frac{1}{n_{p}}\right]^{n_{p}},\langle\boldsymbol{1},\boldsymbol{w}\rangle=\nu}\mathcal{L}(\boldsymbol{\delta},\boldsymbol{w})-\lambda R(\boldsymbol{\delta}),\; & \mathcal{L}(\boldsymbol{\delta},\boldsymbol{w}):=\sum_{i=1}^{n_{p}}w_{i}\cdot\log\hat{r}(\boldsymbol{x}_{p}^{(i)};\boldsymbol{\delta}).\label{eq:min_max}
\end{align}
\end{prop}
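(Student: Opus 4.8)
The plan is to pass from \eqref{eq:objquantile} to a $\min_{t}\max_{\boldsymbol{w}}$ problem via the standard hinge-as-maximum trick, swap the order, and then use the hypothesis $\hat t>0$ to convert the inequality $\langle\boldsymbol{1},\boldsymbol{w}\rangle\ge\nu$ into the equality $\langle\boldsymbol{1},\boldsymbol{w}\rangle=\nu$. First I would eliminate $\boldsymbol{\epsilon}$: each $\epsilon_i$ occurs with a positive coefficient and only in the constraint $\epsilon_i\ge t-\log\hat{r}(\boldsymbol{x}_p^{(i)};\boldsymbol{\delta})$, so at optimality $\epsilon_i=[\,t-\log\hat{r}(\boldsymbol{x}_p^{(i)};\boldsymbol{\delta})\,]_+$, and \eqref{eq:objquantile} is equivalent to $\min_{\boldsymbol{\delta},\,t\ge0}\frac{1}{n_p}\sum_i[\,t-\log\hat{r}(\boldsymbol{x}_p^{(i)};\boldsymbol{\delta})\,]_+-\nu t+\lambda R(\boldsymbol{\delta})$. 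Using $\frac1{n_p}\sum_i[z_i]_+=\max_{\boldsymbol{w}\in[0,1/n_p]^{n_p}}\langle\boldsymbol{w},\boldsymbol{z}\rangle$ with $z_i=t-\log\hat{r}(\boldsymbol{x}_p^{(i)};\boldsymbol{\delta})$, the inner problem over $t$ becomes $\min_{t\ge0}\max_{\boldsymbol{w}\in[0,1/n_p]^{n_p}}\big[t(\langle\boldsymbol{1},\boldsymbol{w}\rangle-\nu)-\mathcal{L}(\boldsymbol{\delta},\boldsymbol{w})\big]$.

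Next I would swap $\min_{t}$ and $\max_{\boldsymbol{w}}$; the bracket is bilinear and $[0,1/n_p]^{n_p}$ is compact convex, so this is legitimate (Sion's theorem, or LP duality). Carrying out $\min_{t\ge0}$ of the now-linear-in-$t$ bracket removes every $\boldsymbol{w}$ with $\langle\boldsymbol{1},\boldsymbol{w}\rangle<\nu$ and leaves $-\mathcal{L}(\boldsymbol{\delta},\boldsymbol{w})$ on $\{\langle\boldsymbol{1},\boldsymbol{w}\rangle\ge\nu\}$. Hence \eqref{eq:objquantile} is equivalent to $\max_{\boldsymbol{\delta}}a(\boldsymbol{\delta})$ with $a(\boldsymbol{\delta})=\min\{\mathcal{L}(\boldsymbol{\delta},\boldsymbol{w}):\boldsymbol{w}\in[0,1/n_p]^{n_p},\ \langle\boldsymbol{1},\boldsymbol{w}\rangle\ge\nu\}-\lambda R(\boldsymbol{\delta})$, whereas \eqref{eq:min_max} is $\max_{\boldsymbol{\delta}}b(\boldsymbol{\delta})$ with the same expression but the equality constraint $\langle\boldsymbol{1},\boldsymbol{w}\rangle=\nu$. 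Since the equality set lies inside the inequality set, $b(\boldsymbol{\delta})\ge a(\boldsymbol{\delta})$ for every $\boldsymbol{\delta}$; and a direct fractional-knapsack evaluation of the two inner minima (whose minimizers are the ``threshold'' weight vectors: $1/n_p$ below a level $\tau$, $0$ above, fractional at $\tau$, these being precisely the $\boldsymbol{w}$ that appear in \eqref{eq:min_max}) shows $a(\boldsymbol{\delta})=b(\boldsymbol{\delta})$ exactly when at most $\nu n_p$ of the values $\log\hat{r}(\boldsymbol{x}_p^{(i)};\boldsymbol{\delta})$ are negative.

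It remains to show $\argmax a=\argmax b$, and this is where the hypothesis is used. Note $\hat t$ minimizes the convex piecewise-linear $h(t)=\frac1{n_p}\sum_i[\,t-\log\hat{r}(\boldsymbol{x}_p^{(i)};\hat{\boldsymbol{\delta}})\,]_+-\nu t$ over $t\ge0$, whose right derivative at $0$ is $\frac1{n_p}\#\{i:\log\hat{r}(\boldsymbol{x}_p^{(i)};\hat{\boldsymbol{\delta}})\le0\}-\nu$; so ``$\hat t>0$ for all optimal solutions'' means every $\hat{\boldsymbol{\delta}}\in\argmax a$ has $\#\{i:\log\hat{r}(\boldsymbol{x}_p^{(i)};\hat{\boldsymbol{\delta}})\le0\}<\nu n_p$, hence $a(\hat{\boldsymbol{\delta}})=b(\hat{\boldsymbol{\delta}})$. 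From $b\ge a$ this gives $\max b\ge\max a$; for the reverse I would suppose $\max b>\max a$, take $\hat{\boldsymbol{\delta}}\in\argmax a$, $\tilde{\boldsymbol{\delta}}\in\argmax b$, and use concavity of $b$ (a pointwise minimum of functions concave in $\boldsymbol{\delta}$, since $\log\hat{r}(\cdot)$ is concave in $\boldsymbol{\delta}$) along $\boldsymbol{\delta}_\theta=(1-\theta)\hat{\boldsymbol{\delta}}+\theta\tilde{\boldsymbol{\delta}}$ to force $b(\boldsymbol{\delta}_\theta)>\max a\ge a(\boldsymbol{\delta}_\theta)$ for small $\theta>0$; then $a\ne b$ at $\boldsymbol{\delta}_\theta$ forces more than $\nu n_p$ negative log-ratios there, contradicting continuity of $\boldsymbol{\delta}\mapsto\log\hat{r}(\boldsymbol{x}_p^{(i)};\boldsymbol{\delta})$ near $\hat{\boldsymbol{\delta}}$, where the count is $<\nu n_p$. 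Thus $\max a=\max b$, whence $\argmax a\subseteq\argmax b$; for the opposite inclusion, running the same segment/continuity argument — travel along a segment starting from a point of $\argmax a$ until one would leave the closed convex set $\argmax a$, and show the boundary point still has $\#\{\log\hat{r}\le0\}<\nu n_p$ and so stays in $\argmax a$ — shows no $\tilde{\boldsymbol{\delta}}\in\argmax b$ can lie outside $\argmax a$. Equivalently, once this is known, any $\tilde{\boldsymbol{\delta}}\in\argmax b$ completes to a KKT point of \eqref{eq:objquantile} by taking $\hat t=\tau\ge0$ and $\hat\epsilon_i=[\,\tau-\log\hat{r}(\boldsymbol{x}_p^{(i)};\tilde{\boldsymbol{\delta}})\,]_+$.

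The main obstacle is exactly this last step. The hinge reformulation and the minimax swap are routine; what takes care is upgrading $\langle\boldsymbol{1},\boldsymbol{w}\rangle\ge\nu$ to $\langle\boldsymbol{1},\boldsymbol{w}\rangle=\nu$ at the level of the optimal $\boldsymbol{\delta}$ (not merely the optimal value), because an a priori maximizer of the $\max\min$ could place all of its trimming weight on points whose estimated log-ratio is already negative — precisely the degeneracy the hypothesis $\hat t>0$ forbids. Handling it cleanly needs the convexity-plus-continuity propagation above rather than a one-line KKT remark.
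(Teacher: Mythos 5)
Your proof is correct, but it takes a genuinely different route from the paper's. The paper works directly with the Lagrangian of \eqref{eq:objquantile}: it introduces multipliers $\boldsymbol{\alpha},\boldsymbol{\alpha}',\alpha''$ for the three constraint families, uses complementary slackness ($\hat{t}>0\Rightarrow\alpha''=0$) together with stationarity in $t$ and in $\boldsymbol{\epsilon}$ to force $\langle\boldsymbol{1},\boldsymbol{\alpha}\rangle=\nu$ and $\boldsymbol{\alpha}\in[0,\tfrac{1}{n_p}]^{n_p}$, exhibits a strictly feasible point to verify Slater's condition, and concludes by strong duality that KKT points are simultaneously optimal for the primal and for the dual, the dual being exactly \eqref{eq:min_max}. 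You instead partially minimize out $\boldsymbol{\epsilon}$ and $t$ (hinge representation plus a Sion/LP-duality swap), which naturally yields the relaxed constraint $\langle\boldsymbol{1},\boldsymbol{w}\rangle\ge\nu$, and then do the real work of showing that the inequality- and equality-constrained inner minima have the same outer maximizers; the hypothesis enters as ``fewer than $\nu n_p$ nonpositive log-ratios at any optimal $\hat{\boldsymbol{\delta}}$,'' which is the same complementary-slackness information the paper extracts from $\alpha''=0$, just read off the subdifferential of the value function in $t$ at $t=0$. The trade-off: the paper's route gets the equality constraint for free from stationarity in $t$ and is shorter once strong duality is invoked, while yours is more self-contained, makes explicit precisely which degeneracy the assumption $\hat{t}>0$ excludes, and correctly identifies (and resolves, via concavity of the two value functions and continuity of the count of negative log-ratios along segments) the one step that is not routine, namely upgrading $\langle\boldsymbol{1},\boldsymbol{w}\rangle\ge\nu$ to $\langle\boldsymbol{1},\boldsymbol{w}\rangle=\nu$ at the level of optimal $\boldsymbol{\delta}$ rather than merely of optimal values.
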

The proof is in Section \ref{subsec:minma} in the supplementary material.  We define $(\hat{\boldsymbol{\delta}},\hat{\boldsymbol{w}})$ as a
saddle point of \eqref{eq:min_max}:
\begin{align}
\label{eq:saddlepoint}
\nabla_{\boldsymbol{\delta}}\mathcal{L}(\hat{\boldsymbol{\delta}},\hat{\boldsymbol{w}})-\nabla_{\boldsymbol{\delta}}\lambda R(\hat{\boldsymbol{\delta}})=\boldzero,\hat{\boldsymbol{w}}\in\arg\min_{\boldsymbol{w}\in[0,\frac{1}{n_p}]^{n_{p}},\langle\boldsymbol{w},\boldsymbol{1}\rangle=\nu}\mathcal{L}(\hat{\boldsymbol{\delta}},\boldsymbol{w}),
\end{align}
where the second $\nabla_{\boldsymbol{\delta}}$ means the subgradient
if $R$ is sub-differentiable. 

Now the ``trimming'' process of our estimator can be clearly seen
from \eqref{eq:min_max}: The $\max$ procedure estimates a density
ratio given the currently assigned weights $\boldsymbol{w}$, and
the $\min$ procedure trims the large log likelihood ratio values
by assigning corresponding $w_{i}$ to $0$ (or values smaller
than $\frac{1}{n_{p}}$). 
For simplicity, we only consider the cases where $\nu$ is a multiple of $\frac{1}{n_p}$. 
Intuitively, $1-\nu$ is the proportion of likelihood ratios that are trimmed thus $\nu$ should not be greater than 1. 
Note if we set $\nu=1$, \eqref{eq:min_max}
is equivalent to the standard density ratio estimator \eqref{eq:MLEOBJ}. 
Downweighting outliers while estimating the model parameter $\boldsymbol{\delta}$
is commonly used by robust estimators (See e.g., \cite{Cleveland1979,Suykens2002}). 

The search for $(\hat{\boldsymbol{\delta}},\hat{\boldsymbol{w}})$
is straightforward. It is easy to solve with respect to $\boldsymbol{w}$
or $\boldsymbol{\delta}$ while the other is fixed: given a parameter
$\boldsymbol{\delta}$, the optimization with respect to $\boldsymbol{w}$
is a linear programming and \emph{one of} the extreme optimal solutions is
attained by assigning weight $\frac{1}{n_{p}}$
to the elements that correspond to the $\nu n_{p}$-smallest log-likelihood
ratio $\log\hat{r}(\boldsymbol{x}^{(i)},\boldsymbol{\delta})$. This
observation leads to a simple ``gradient ascent and trimming'' algorithm
(see Algorithm \ref{alg}). In Algorithm \ref{alg}, 
\[
\nabla_{\boldsymbol{\delta}}\mathcal{L}(\boldsymbol{\delta},\boldsymbol{w})=\frac{1}{n_{p}}\sum_{i=1}^{n_{p}}w_{i}\cdot\boldsymbol{f}(\boldsymbol{x}_{p}^{(i)})-\nu\cdot\sum_{j=1}^{n_{q}}\frac{e^{(j)}}{\sum_{k=1}^{n_{q}}e^{(k)}}\boldsymbol{f}(x_{q}^{(j)}),~~e^{(i)}:=\exp(\langle\boldsymbol{\delta},\boldsymbol{f}(x_{q}^{(i)})\rangle).\label{eq:subgra}
\]

\begin{algorithm}[t]
\begin{algorithmic}

\STATE Input: $X_{p},X_{q},\nu$ and step sizes $\{\eta_{\mathrm{it}}\}_{\mathrm{it=1}}^{\mathrm{it}_\mathrm{max}}$;
Initialize $\boldsymbol{\delta}_{0},\boldsymbol{w}_{0}$, Iteration
counter: $\mathrm{it}=0$, Maximum number of iterations: $\mathrm{it}_{\mathrm{max}}$,
Best objective, parameter pair $(O_{\mathrm{best}}=-\infty,\boldsymbol{\delta}_{\mathrm{\mathrm{best}}},\boldsymbol{w}_{\mathrm{best}})$
. 

\WHILE{not converged and $\mathrm{it}\le\mathrm{it}_{\mathrm{max}}$}

\STATE Obtain a sorted set $\left\{ \boldsymbol{x}_{p}^{(i)}\right\} _{i=1}^{n_{p}}$
so that $\log\hat{r}(\boldsymbol{x}_{p}^{(1)};\boldsymbol{\delta}_{\mathrm{it}})\le\log\hat{r}(\boldsymbol{x}_{p}^{(2)};\boldsymbol{\delta}_{\mathrm{it}})\cdots\le\log\hat{r}(\boldsymbol{x}_{p}^{(n_{p})};\boldsymbol{\delta}_{\mathrm{it}}).$

\STATE $w_{\mathrm{it}+1,i}=\frac{1}{n_{p}},\forall i\le\nu n_{p}.~~w_{\mathrm{it+1},i}=0,\text{otherwise.}$

\STATE Gradient ascent with respect to $\boldsymbol{\delta}$: $\boldsymbol{\delta}_{\mathrm{it}+1}  =\boldsymbol{\delta}_{\mathrm{it}}+\eta_{\mathrm{it}}\cdot\nabla_{\boldsymbol{\delta}}[\mathcal{L}(\boldsymbol{\delta}_{\mathrm{it}},\boldsymbol{w}_{\mathrm{it}+1})-\lambda R(\boldsymbol{\delta}_{\mathrm{it}})],\:$
\STATE 
$
O_{\mathrm{best}}  =\max(O_{\mathrm{best}},\mathcal{L}(\boldsymbol{\delta}_{\mathrm{it+1}},\boldsymbol{w}_{\mathrm{it}+1}))
$
and update $(\bolddelta_\mathrm{best}, \boldw_\mathrm{best})$ accordingly. 
$~~\mathrm{it}=\mathrm{it}+1.$

\ENDWHILE

Output: $(\boldsymbol{\delta}_{\mathrm{best}},\boldsymbol{w}_{\mathrm{best}})$
\end{algorithmic} 
\caption{Gradient Ascent and Trimming \label{alg}}
\end{algorithm}

In fact, Algorithm \ref{alg} is a subgradient method \citep{Boyd2014,Nedic2009},
since the optimal value function of the inner problem of \eqref{eq:min_max} is not
differentiable at some $\boldsymbol{\delta}$ where the inner problem
has multiple optimal solutions. The subdifferential of the optimal value of the inner problem with respect to $\bolddelta$ can be a \emph{set} but Algorithm \ref{alg} only computes a subgradient
obtained using the extreme point solution $\boldsymbol{w}_{\mathrm{it+1}}$
of the inner linear programming. Under mild conditions, this subgradient ascent approach will converge to
optimal results with diminishing step size rule and $\mathrm{it}\rightarrow\infty$.
See \citep{Boyd2014} for details. 

Algorithm \ref{alg} is a simple gradient ascent procedure and can be implemented by deep learning softwares such as Tensorflow\footnote{\url{https://www.tensorflow.org/}} which benefits from the GPU acceleration. In contrast, the original problem \eqref{eq:objquantile}, due to its heavily constrained nature, cannot be easily programmed using such a framework. 

%
%

\section{Estimation Consistency in High-dimensional Settings \label{sec:-Consistency-in}}
\begin{figure}
	\centering
	\subfloat[Outlier Setting. Blue and red points are i.i.d.]{
		\includegraphics[width=.49\textwidth]{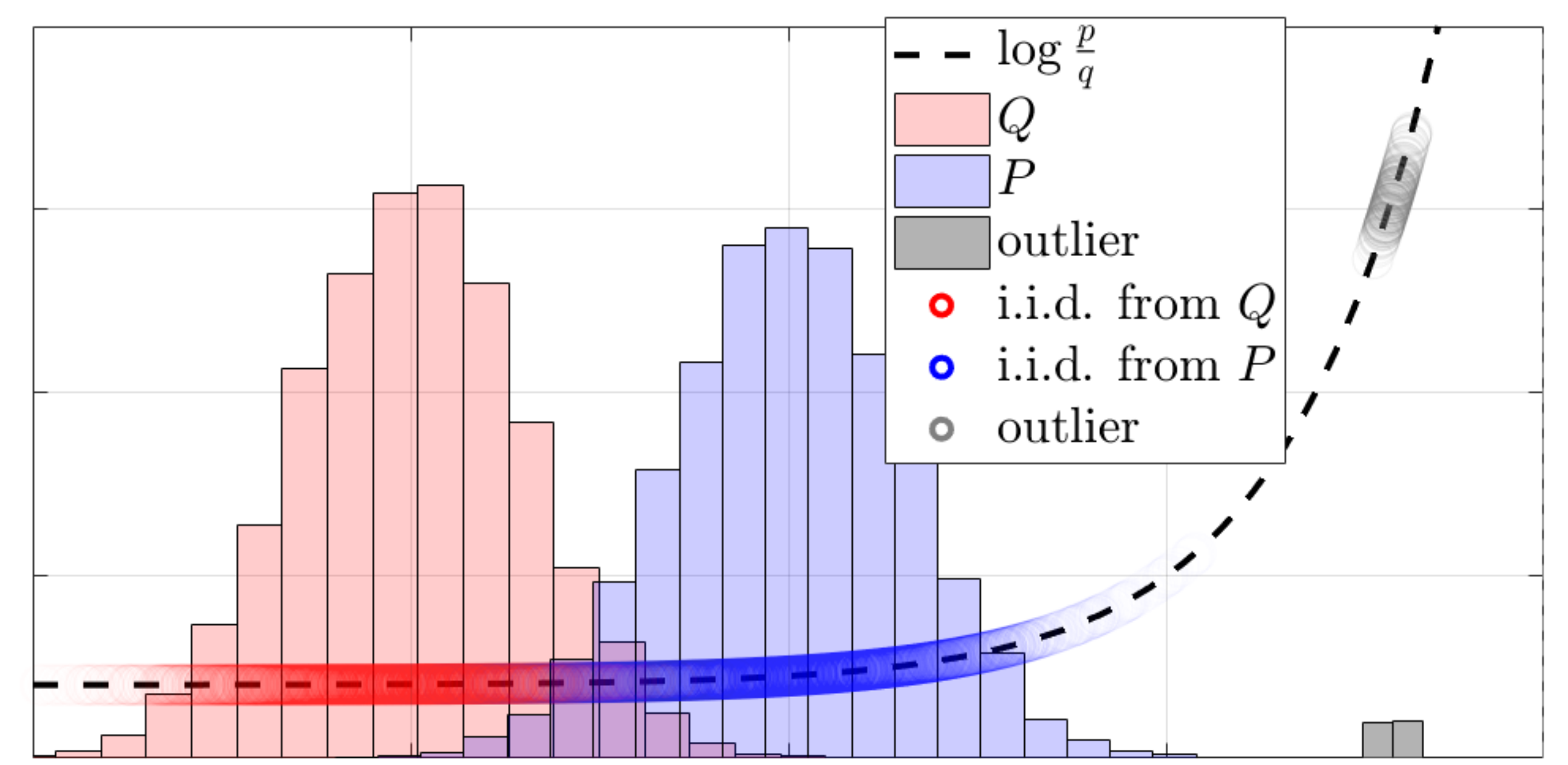}
		\label{fig:illus1}
	}
	\subfloat[Truncation Setting. There are no outliers.]{
		\label{fig:illus2}
	\includegraphics[width=.49\textwidth]{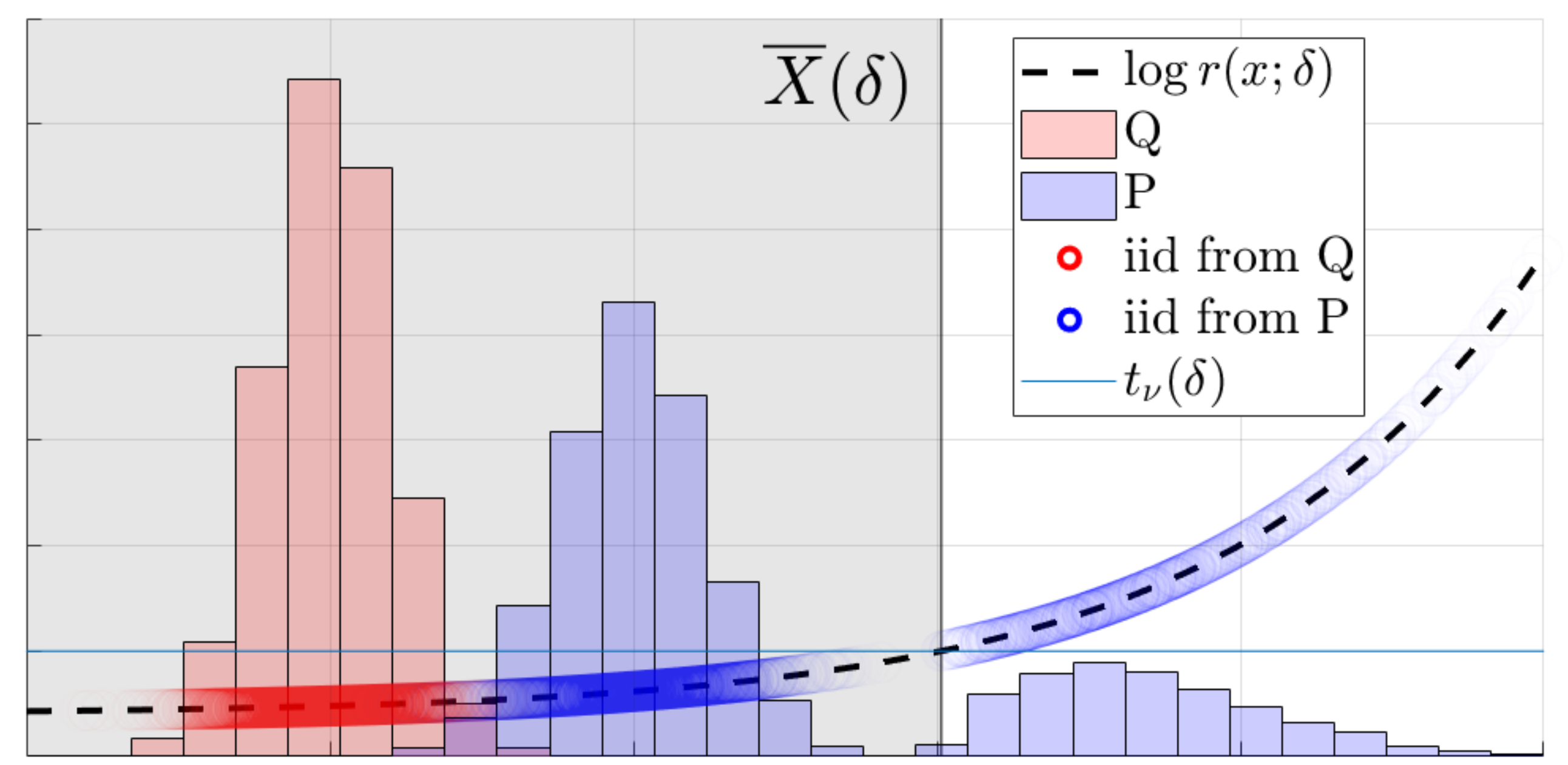}
	}
\caption{Two settings of theoretical analysis.}
\end{figure}
In this section, we show how the estimated parameter $\hat{\boldsymbol{\delta}}$ in \eqref{eq:saddlepoint} converges to the ``optimal parameters''
$\boldsymbol{\delta}^{*}$ 
as both sample size and dimensionality goes to infinity
under the ``outlier'' and ``truncation'' setting respectively.  

In the \textbf{outlier setting} (Figure \ref{fig:illus1}),
we assume $X_p$ is contaminated by outliers and all ``inlier'' samples in $X_p$ are i.i.d.. The  outliers are injected into our dataset $X_p$ after looking at our inliers. For example, hackers can spy on our data and inject fake samples so that our estimator exaggerates the degree of change.

In the \textbf{truncation setting}, there are no outliers. $X_p$ and $X_q$ are i.i.d. samples from $P$ and $Q$ respectively. However, we have a subset of ``volatile'' samples in $X_p$ (the rightmost mode on  histogram in Figure \ref{fig:illus2}) that are pathological and exhibit large density ratio values.

In the theoretical results in this section, we focus on analyzing the performance of our estimator for high-dimensional data assuming the number of non-zero elements in the optimal $\bolddelta^*$ is $k$ and use the $\ell_1$ regularizer, i.e., $R(\boldsymbol{\theta})=\|\boldsymbol{\theta}\|_{1}$ which induces sparsity on $\hat{\bolddelta}$.
The proofs
rely on a recent development \cite{Yang2015,yang2016high}
where a ``weighted'' high-dimensional estimator was studied. 
We also assume the optimization of $\bolddelta$ in \eqref{eq:min_max} was conducted
within an $\ell_{1}$ ball of width $\rho$, i.e., $\mathrm{Ball}(\rho)$,
and $\rho$ is wisely chosen so that the optimal parameter $\boldsymbol{\delta}^{*}\in\mathrm{Ball}(\rho)$.
The same technique was used in previous works \cite{Loh2015,Yang2015}.

\paragraph{Notations:}
We denote $\boldw^* \in \mathbb{R}^{n_p}$ as the ``optimal'' weights depending on $\bolddelta^*$ and our data. 
To lighten the notation, we shorten the \emph{log} density ratio model as $z_{\boldsymbol{\delta}}(\boldsymbol{x}):=\log r(\boldsymbol{x};\boldsymbol{\delta}), \hat{z}{}_{\boldsymbol{\delta}}(\boldsymbol{x}):=\log\hat{r}(\boldsymbol{x};\boldsymbol{\delta})$

The proof of Theorem \ref{thm:main}, \ref{thm:main-1} and \ref{thm:main-1-1} can be found in Section \ref{subsec:theorem1}, \ref{sec:proofcol1} and \ref{sec:proofoftruncation} in supplementary materials.

\subsection{A Base Theorem\label{subsec:Preparations}}
Now we provide a base theorem giving
an upperbound of $\|\hat{\bolddelta}-\bolddelta^{*}\|$.
We state this theorem only with
respect to an arbitrary pair $(\bolddelta^{*},\boldw^{*})$ and 
the pair is set properly later in Section \ref{sec:outliersetting} and \ref{sec:truncationsetting}.

We make a few regularity conditions on samples from $Q$. Samples of $X_{q}$ should be well behaved in terms of log-likelihood
ratio values. 
\begin{assumption}
	\label{assu:.boundedrq}$\exists0<c_{1}<1,1<c_{2}<\infty$ 
	$
	\forall\boldsymbol{x}_{q}\in X_{q},\boldsymbol{u}\in\mathrm{Ball}(\rho),c_{1}\le\exp\langle\boldsymbol{\delta}^{*}+\boldsymbol{u},\boldsymbol{x}_{q}\rangle\le c_{2}
	$
	and collectively $c_{2}/c_{1}=C_{r}$. 
\end{assumption}
We also assume the Restricted Strong Convexity (RSC) condition on the covariance of $\boldX_q$, i.e., $\mathrm{cov}(\boldX_q) = \frac{1}{n_q} (\boldX_q - \frac{1}{n_q}\boldX_q \bold1)  (\boldX_q - \frac{1}{n_q}\boldX_q \bold1)^\top$. Note this property has been verified for various different design
matrices $\boldX_{q}$, such as Gaussian or sub-Gaussian (See, e.g., \cite{Raskutti2010,Rudelson2013}).  
\begin{assumption}
	\label{assu:RSC} RSC condition of $\mathrm{cov}(\boldX_q)$
	holds for all $\boldsymbol{u}$, i.e.,
	there exists $\kappa'_{1} > 0$ and $c>0$ such that $\boldsymbol{u}^{\top}\mathrm{cov}(\boldX_q)\boldsymbol{u}\ge\kappa'_{1}\|\boldsymbol{u}\|^{2}-\frac{c}{\sqrt{n_{q}}}\|\boldsymbol{u}\|_{1}^{2}\text{ with high probability. }$
\end{assumption}
\begin{thm}
In addition to Assumption \ref{assu:.boundedrq} and \ref{assu:RSC}, 
there exists coherence between parameter $\boldsymbol{w}$ and
$\boldsymbol{\delta}$ at a saddle point $(\hat{\bolddelta},\hat{\boldw})$:
\begin{align}
\langle\nabla_{\bolddelta}\mathcal{L}(\hat{\bolddelta},\hat{\boldw})-\nabla_{\boldsymbol{\delta}}\mathcal{L}(\hat{\bolddelta},\boldsymbol{w}^{*}),\hat{\boldsymbol{u}}\rangle\ge-\kappa_{2}\|\hat{\boldsymbol{u}}\|^{2}-\tau_{2}(n,d)\|\hat{\boldsymbol{u}}\|_{1},\label{eq:ass2}
\end{align}
where $\hat{\boldu}:=\hat{\bolddelta}-\bolddelta^{*}$, $\kappa_2 > 0$ is a constant and $\tau_2(d,n) > 0$. It can be
shown that if \[\textstyle{
\lambda_{n}\ge2\max\left[\|\nabla_{\boldsymbol{\delta}}\mathcal{L}(\boldsymbol{\delta}^{*},\boldsymbol{w}^{*})\|_{\infty},\frac{\rho\nu c}{2C_{r}^{2}\sqrt{n_{q}}},\tau_{2}(n,d)\right]}
\]
and $\nu\kappa'_{1}>2C_{r}^{2}\kappa_{2}$, where $c>0$ is a constant determined by RSC condition, we are guaranteed that $\|\hat{\boldsymbol{\delta}}-\boldsymbol{\delta}^{*}\|\le\frac{C_{r}^{2}}{(\nu\kappa'_{1}-2C_{r}^{2}\kappa_{2})}\cdot\frac{3\sqrt{k}\lambda_{n}}{2}$ with probability converging to one.
\label{thm:main} 
\end{thm}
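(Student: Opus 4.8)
The plan is to follow the standard recipe for $\ell_1$-regularized $M$-estimation (as in \cite{Yang2015,yang2016high,Loh2015,Negahban2012}), adapted to the saddle-point setting. The starting point is the first-order optimality condition from \eqref{eq:saddlepoint}: $\nabla_{\bolddelta}\mathcal{L}(\hat{\bolddelta},\hat{\boldw}) = \nabla_{\bolddelta}\lambda_n R(\hat{\bolddelta})$, where $R(\bolddelta)=\|\bolddelta\|_1$. First I would form the quantity $\langle \nabla_{\bolddelta}\mathcal{L}(\hat{\bolddelta},\hat{\boldw}) - \nabla_{\bolddelta}\mathcal{L}(\bolddelta^*,\boldw^*),\, \hat{\boldu}\rangle$ with $\hat{\boldu}=\hat{\bolddelta}-\bolddelta^*$, and split it into two pieces: a ``same-weight'' term $\langle \nabla_{\bolddelta}\mathcal{L}(\hat{\bolddelta},\boldw^*) - \nabla_{\bolddelta}\mathcal{L}(\bolddelta^*,\boldw^*),\hat{\boldu}\rangle$, and a ``weight-perturbation'' term $\langle \nabla_{\bolddelta}\mathcal{L}(\hat{\bolddelta},\hat{\boldw}) - \nabla_{\bolddelta}\mathcal{L}(\hat{\bolddelta},\boldw^*),\hat{\boldu}\rangle$. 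The second term is exactly what the coherence hypothesis \eqref{eq:ass2} lower-bounds by $-\kappa_2\|\hat{\boldu}\|^2 - \tau_2(n,d)\|\hat{\boldu}\|_1$. The first term is a curvature term: $\nabla_{\bolddelta}\mathcal{L}(\bolddelta,\boldw^*)$ is (up to the fixed weights) the gradient of a log-partition-type function, and $\mathcal{L}(\bolddelta,\boldw^*) = \sum_i w_i^* \hat z_{\bolddelta}(\boldx_p^{(i)}) = \langle \sum_i w_i^* \boldf(\boldx_p^{(i)}),\bolddelta\rangle - \nu\log\widehat N(\bolddelta)$, so its Hessian in $\bolddelta$ is $-\nu\,\nabla^2\log\widehat N(\bolddelta)$, which is $-\nu$ times the (weighted) covariance of $\boldf(\boldx_q)$ under the tilted distribution. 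Using Assumption \ref{assu:.boundedrq} to bound the tilting weights between $c_1/c_2$-type constants and Assumption \ref{assu:RSC} for the restricted strong convexity of $\mathrm{cov}(\boldX_q)$, I would show $\langle \nabla_{\bolddelta}\mathcal{L}(\hat{\bolddelta},\boldw^*) - \nabla_{\bolddelta}\mathcal{L}(\bolddelta^*,\boldw^*),\hat{\boldu}\rangle \le -\frac{\nu\kappa_1'}{C_r^2}\|\hat{\boldu}\|^2 + \frac{\rho\nu c}{C_r^2\sqrt{n_q}}\|\hat{\boldu}\|_1$ (the $\|\hat{\boldu}\|_1$-term absorbing the RSC slack, with $\|\hat\boldu\|_1 \le 2\rho$ on the ball, and the $C_r^2$ appearing because the tilted covariance is sandwiched between $c_1$ and $c_2$ multiples of $\mathrm{cov}(\boldX_q)$).

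Next I would run the standard argument on the other side. From optimality, $\langle \nabla_{\bolddelta}\mathcal{L}(\hat{\bolddelta},\hat{\boldw}) - \nabla_{\bolddelta}\mathcal{L}(\bolddelta^*,\boldw^*),\hat{\boldu}\rangle = \langle \lambda_n \partial R(\hat{\bolddelta}),\hat{\boldu}\rangle - \langle\nabla_{\bolddelta}\mathcal{L}(\bolddelta^*,\boldw^*),\hat{\boldu}\rangle$. Using convexity of $R$, Hölder on the second term, and the choice $\lambda_n \ge 2\|\nabla_{\bolddelta}\mathcal{L}(\bolddelta^*,\boldw^*)\|_\infty$, one gets the usual cone bound: $\|\hat{\boldu}_{S^c}\|_1 \le 3\|\hat{\boldu}_S\|_1$, where $S=\mathrm{supp}(\bolddelta^*)$, $|S|=k$, hence $\|\hat{\boldu}\|_1 \le 4\|\hat{\boldu}_S\|_1 \le 4\sqrt k\,\|\hat{\boldu}\|$. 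Then, combining the upper bound $\langle\cdot,\hat\boldu\rangle \le \frac{3}{2}\lambda_n\|\hat\boldu_S\|_1 \le \frac{3}{2}\lambda_n\sqrt k\|\hat\boldu\|$ with the lower bound assembled from the two pieces above — namely $\langle\cdot,\hat\boldu\rangle \ge \big(\frac{\nu\kappa_1'}{C_r^2} - 2\kappa_2\big)\|\hat\boldu\|^2 - \big(\tau_2(n,d) + \frac{\rho\nu c}{C_r^2\sqrt{n_q}}\big)\|\hat\boldu\|_1$ (the factor $2\kappa_2$ and the extra $\tau_2$ handling having to re-express things cleanly, and also noting the coherence term's $\|\hat\boldu\|^2$ must be dominated by curvature, which is exactly the condition $\nu\kappa_1' > 2C_r^2\kappa_2$) — I would feed the $\|\hat\boldu\|_1 \le 4\sqrt k\|\hat\boldu\|$ bound into the linear terms. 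Absorbing them using $\lambda_n \ge 2\max[\,\cdot\,, \frac{\rho\nu c}{2C_r^2\sqrt{n_q}},\tau_2(n,d)]$ leaves $\big(\frac{\nu\kappa_1'}{C_r^2} - 2\kappa_2\big)\|\hat\boldu\|^2 \le \frac{3}{2}\lambda_n\sqrt k\|\hat\boldu\| + \lambda_n\sqrt k\|\hat\boldu\|$-type terms, and dividing through by $\|\hat\boldu\|$ and multiplying by $C_r^2/(\nu\kappa_1'-2C_r^2\kappa_2)$ yields $\|\hat\boldu\| \le \frac{C_r^2}{\nu\kappa_1' - 2C_r^2\kappa_2}\cdot\frac{3\sqrt k\lambda_n}{2}$, which is the claimed bound. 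The ``with probability converging to one'' rider is inherited from Assumption \ref{assu:RSC} and from whatever concentration is implicitly used to control $\|\nabla_{\bolddelta}\mathcal{L}(\bolddelta^*,\boldw^*)\|_\infty$ relative to the required $\lambda_n$.

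The main obstacle, I expect, is the curvature step for the first (same-weight) term: $\mathcal{L}(\bolddelta,\boldw^*)$ is concave in $\bolddelta$, so $-\langle\nabla\mathcal{L}(\hat\bolddelta,\boldw^*)-\nabla\mathcal{L}(\bolddelta^*,\boldw^*),\hat\boldu\rangle$ is nonnegative, but getting a \emph{restricted} strong concavity lower bound of the right form requires care: one must pass from the Hessian of $\log\widehat N$ (an exponentially-tilted sample covariance of $\boldf(\boldx_q)$, with tilts controlled by Assumption \ref{assu:.boundedrq}) to $\mathrm{cov}(\boldX_q)$, eat the factor $C_r^2 = c_2/c_1$ in doing so, and only then invoke Assumption \ref{assu:RSC}; moreover the mean-value/integral-form remainder has to be evaluated along the segment $\bolddelta^* + t\hat\boldu$, all of which must stay inside $\mathrm{Ball}(\rho)$ for Assumption \ref{assu:.boundedrq} to apply — this is why the problem is constrained to the ball. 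Secondarily, bookkeeping the constants so that the three thresholds in the $\lambda_n$ condition and the single condition $\nu\kappa_1' > 2C_r^2\kappa_2$ are exactly what is needed (no slack wasted) takes some care, but it is mechanical once the two one-sided bounds are in hand. The weight-perturbation term requires no work here since \eqref{eq:ass2} is a hypothesis of the theorem (its verification in the outlier and truncation settings is deferred to later sections).
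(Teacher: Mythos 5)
Your proposal follows essentially the same route as the paper: the same decomposition into a same-weight curvature term (handled by restricted strong concavity of $\mathcal{L}(\cdot,\boldsymbol{w}^*)$ via the exponentially tilted sample covariance, Assumption \ref{assu:.boundedrq} to eat the $C_r^2$ factor, and Assumption \ref{assu:RSC}, which is exactly the paper's Lemma \ref{lem:RSC}) plus a weight-perturbation term handled by the coherence hypothesis \eqref{eq:ass2}, followed by the standard H\"older, subgradient-of-$\|\cdot\|_1$ decomposability, and cone arguments. Apart from minor constant bookkeeping (factors of $2$ in the curvature bound, which you flag as mechanical), this matches the paper's proof.
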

The condition \eqref{eq:ass2} states that if we swap $\hat{\boldw}$ for $\boldw^*$, the change of the gradient $\nabla_\bolddelta\mathcal{L}$ is limited. 
Intuitively, it shows that our estimator \eqref{eq:min_max} is not ``picky'' on $\boldw$: even if we cannot have the optimal weight assignment $\boldw^*$, we can still use ``the next best thing'', $\hat{\boldw}$ to compute the gradient which is close enough.
We later show how \eqref{eq:ass2} is satisfied.
Note if $\|\nabla_{\boldsymbol{\delta}}\mathcal{L}(\boldsymbol{\delta}^{*},\boldsymbol{w}^{*})\|_{\infty}$,
$\tau_{2}(n,d)$ converge to zero as $n_{p},n_{q},d\to\infty$,
by taking $\lambda_{n}$ as such, Theorem 1 guarantees the consistency
of $\hat{\boldsymbol{\delta}}$.
In Section \ref{subsec:Consistency-under-Outlier} and \ref{subsec:trimmedsetting},
we explore two different settings of $(\boldsymbol{\delta}^{*},\boldw^{*})$
 that make $||\hat{\boldsymbol{\delta}}-\boldsymbol{\delta}^{*}\|$
converges to zero.

\subsection{Consistency under Outlier Setting\label{subsec:Consistency-under-Outlier}}
\label{sec:outliersetting}
\paragraph{Setting:}
Suppose dataset $X_{p}$ is the union of two disjoint sets $G$ (Good points) and
$B$ (Bad points) such that $G\iid p(\boldx)$ 
and $\min_{j\in B}z_{\boldsymbol{\delta}^{*}}(\boldsymbol{x}_{p}^{(j)})>\max_{i\in G}z_{\boldsymbol{\delta}^{*}}(\boldsymbol{x}_{p}^{(i)})$ (see Figure \ref{fig:illus1}). Dataset $X_{q} \iid q(\boldx)$ does \emph{not
}contain any outlier. We set $\nu=\frac{|G|}{n_{p}}.$ The optimal
parameter $\boldsymbol{\delta}^{*}$ is set such that $p(\boldx)=q(\boldx)r(\boldx;\boldsymbol{\delta}^{*})$. We set
$\boldsymbol{w}_{i}^{*}=
\frac{1}{n_{p}}, \forall \boldsymbol{x}_{p}^{(i)}\text{ \ensuremath{\in}G}$ and 0 otherwise.
\paragraph{Remark:} Knowing the inlier proportion $|G|/n_{p}$ is a strong assumption. However it is only imposed for  theoretical analysis. As we show in Section \ref{sec.label}, our method works well even if $\nu$ is only a rough guess (like $90\%$). Loosening this assumption will be an important future work. 
\begin{assumption}
	\label{ass.ratio.lip-1} 
$
	\forall\boldsymbol{u}\in\mathrm{Ball}(\rho),\sup_{\boldsymbol{x}}\left|\hat{z}{}_{\boldsymbol{\delta}^{*}+\boldsymbol{u}}(\boldsymbol{x})-\hat{z}_{\boldsymbol{\delta}^{*}}(\boldsymbol{x})\right|\le C_{\mathrm{lip}}\|\boldsymbol{u}\|_{1}.
$
\end{assumption}
This assumption says that the log density ratio model is Lipschitz continuous
around its optimal parameter $\bolddelta^{*}$ and hence there is a limit
how much a log ratio model can deviate from the optimal model under a small perturbation $\boldu$.
As our estimated weights $\hat{w}_i$ depends on the relative ranking of $\hat{z}_{\hat{\bolddelta}} (\boldx_{p}^{(i)})$, this assumption implies that the relative ranking between two points will remain unchanged under a small perturbation $\boldu$ if they are far apart. 
The following theorem shows that if we have enough clearance between ``good''and ``bad samples'', $\hat{\bolddelta}$ converges to the optimal parameter $\bolddelta^*$. 
\begin{thm}
\label{cor:outlier}
In addition to Assumption \ref{assu:.boundedrq}, \ref{assu:RSC} and
a few mild technical conditions (see Section \ref{sec:proofcol1} in the supplementary material), Assumptions \ref{ass.ratio.lip-1} holds. Suppose $\min_{j\in B}z_{\boldsymbol{\delta}^{*}}(\boldsymbol{x}_{p}^{(j)})-\max_{i\in G}z_{\boldsymbol{\delta}^{*}}(\boldsymbol{x}_{p}^{(i)})\ge3C_{\mathrm{lip}}\rho,$
$\nu=\frac{|G|}{n_{p}}, n_q = \Omega(|G|^2)$.
If 
$
\lambda_{n}\ge2\cdot\max\left(\sqrt{\frac{K_{1}\log d}{|G|}},\frac{\rho \nu c}{2C_{r}^{2}\sqrt{n_{q}}}\right),
$
where $K_{1}>0,c>0$ are constants, we are
guaranteed that $||\hat{\boldsymbol{\delta}}-\boldsymbol{\delta}^{*}\|\le\frac{C_{r}^{2}}{\nu\kappa'_{1}}\cdot3\sqrt{k}\lambda_{n}$
with probability converging to 1. \label{thm:main-1} 
\end{thm}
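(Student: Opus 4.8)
The plan is to derive Theorem~\ref{thm:main-1} as a specialization of the base theorem (Theorem~\ref{thm:main}) by verifying its two hypotheses --- the coherence condition \eqref{eq:ass2} and the bounds defining the admissible $\lambda_n$ --- under the outlier setting. First I would establish that with the chosen $\boldsymbol{\delta}^*$ (the noise-free parameter satisfying $p = q\cdot r_{\boldsymbol{\delta}^*}$) and $\boldsymbol{w}^*$ (uniform weight $1/n_p$ on $G$, zero on $B$), the gradient at the population-type point, $\nabla_{\boldsymbol{\delta}}\mathcal{L}(\boldsymbol{\delta}^*,\boldsymbol{w}^*)$, is a centered empirical average over the $|G|$ good points of $\boldsymbol{f}(\boldsymbol{x}_p^{(i)})$ minus the $Q$-side self-normalized average, whose expectation vanishes precisely because $\boldsymbol{\delta}^*$ is the true ratio parameter; then a standard concentration / union-bound argument (using Assumption~\ref{assu:.boundedrq} to bound the summands and hence get sub-Gaussian or bounded-difference tails) gives $\|\nabla_{\boldsymbol{\delta}}\mathcal{L}(\boldsymbol{\delta}^*,\boldsymbol{w}^*)\|_\infty \le \sqrt{K_1 \log d / |G|}$ with probability tending to one. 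This is what produces the first term in the stated lower bound on $\lambda_n$; the second term $\frac{\rho\nu c}{2C_r^2\sqrt{n_q}}$ is simply inherited verbatim from Theorem~\ref{thm:main}, and the condition $n_q = \Omega(|G|^2)$ is what forces this second term to be dominated by the first, so that $\lambda_n$ can be taken of order $\sqrt{\log d/|G|}$.

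The crux is verifying the coherence condition \eqref{eq:ass2}, i.e.\ controlling $\langle \nabla_{\boldsymbol{\delta}}\mathcal{L}(\hat{\boldsymbol{\delta}},\hat{\boldsymbol{w}}) - \nabla_{\boldsymbol{\delta}}\mathcal{L}(\hat{\boldsymbol{\delta}},\boldsymbol{w}^*), \hat{\boldsymbol{u}}\rangle$. The key observation is that $\nabla_{\boldsymbol{\delta}}\mathcal{L}(\boldsymbol{\delta},\boldsymbol{w})$ depends on $\boldsymbol{w}$ only through $\sum_i w_i \boldsymbol{f}(\boldsymbol{x}_p^{(i)})$ --- it is \emph{linear} in $\boldsymbol{w}$ --- so this inner product equals $\frac{1}{n_p}\langle \sum_i (\hat w_i - w_i^*)\boldsymbol{f}(\boldsymbol{x}_p^{(i)}), \hat{\boldsymbol{u}}\rangle$ up to normalization. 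Here I would argue, using the clearance assumption $\min_{j\in B} z_{\boldsymbol{\delta}^*}(\boldsymbol{x}_p^{(j)}) - \max_{i\in G} z_{\boldsymbol{\delta}^*}(\boldsymbol{x}_p^{(i)}) \ge 3 C_{\mathrm{lip}}\rho$ together with the Lipschitz Assumption~\ref{ass.ratio.lip-1}, that every bad point still ranks strictly above every good point under the perturbed log-ratio $\hat z_{\hat{\boldsymbol{\delta}}}$ (each log-ratio moves by at most $C_{\mathrm{lip}}\|\hat{\boldsymbol{u}}\|_1 \le C_{\mathrm{lip}}\rho$ in either direction, so a gap of $3 C_{\mathrm{lip}}\rho$ can absorb two such perturbations plus the gap between the empirical $\hat{\boldsymbol{\delta}}$-ranking and the $\boldsymbol{\delta}^*$-ranking). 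Consequently the $\nu n_p = |G|$ smallest log-ratios under $\hat{\boldsymbol{\delta}}$ are exactly the good points, so $\hat{\boldsymbol{w}} = \boldsymbol{w}^*$, the difference of gradients is \emph{zero}, and \eqref{eq:ass2} holds trivially with $\kappa_2 = 0$, $\tau_2 = 0$. With $\kappa_2 = 0$, the denominator $\nu\kappa_1' - 2C_r^2\kappa_2$ in Theorem~\ref{thm:main} collapses to $\nu\kappa_1'$, which is exactly why the bound in Theorem~\ref{thm:main-1} reads $\frac{C_r^2}{\nu\kappa_1'}\cdot 3\sqrt{k}\lambda_n$, and why $\tau_2$ drops out of the $\lambda_n$ condition.

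Having reduced the two hypotheses of Theorem~\ref{thm:main} to, respectively, a concentration bound on $\|\nabla_{\boldsymbol{\delta}}\mathcal{L}(\boldsymbol{\delta}^*,\boldsymbol{w}^*)\|_\infty$ and the ranking-stability argument, the conclusion follows by direct substitution into Theorem~\ref{thm:main}. I would then note that the ``few mild technical conditions'' alluded to in the statement are precisely what is needed to make the concentration step rigorous (e.g.\ moment or boundedness control on $\boldsymbol{f}(\boldsymbol{x}_p)$ under $P$, independence of the good points, and the validity of Assumption~\ref{assu:RSC} for the realized design), and that Assumption~\ref{ass.ratio.lip-1} needs to be checked for the log-linear model at hand --- under Assumption~\ref{assu:.boundedrq} the self-normalized log-ratio $\hat z_{\boldsymbol{\delta}}(\boldsymbol{x}) = \langle\boldsymbol{\delta},\boldsymbol{f}(\boldsymbol{x})\rangle - \log\widehat{N}(\boldsymbol{\delta})$ is Lipschitz in $\boldsymbol{\delta}$ with constant governed by $\sup_{\boldsymbol{x}}\|\boldsymbol{f}(\boldsymbol{x})\|_\infty$ plus a bounded derivative of $\log\widehat{N}$, giving the $C_{\mathrm{lip}}$. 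The main obstacle I anticipate is making the ranking-stability argument fully airtight: one must carefully separate the perturbation of $\hat z$ \emph{as a function of $\boldsymbol{\delta}$} (handled by Assumption~\ref{ass.ratio.lip-1}) from the fact that $\hat{\boldsymbol{\delta}}$ is itself a random optimizer constrained to $\mathrm{Ball}(\rho)$ around $\boldsymbol{\delta}^*$, and confirm that the factor $3$ in $3 C_{\mathrm{lip}}\rho$ exactly accounts for all the slack needed (two perturbation radii plus the passage from the $\boldsymbol{\delta}^*$-induced ordering on $G$ and $B$ to the $\hat{\boldsymbol{\delta}}$-induced one); getting this constant bookkeeping right, rather than any deep inequality, is where the care lies.
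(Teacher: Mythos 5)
Your proposal follows essentially the same route as the paper's proof: the clearance condition $3C_{\mathrm{lip}}\rho$ combined with Assumption \ref{ass.ratio.lip-1} preserves the good/bad ranking for every $\boldsymbol{u}\in\mathrm{Ball}(\rho)$, forcing $\hat{\boldsymbol{w}}=\boldsymbol{w}^*$ and hence $\kappa_2=\tau_2=0$ in \eqref{eq:ass2}, while $\|\nabla_{\boldsymbol{\delta}}\mathcal{L}(\boldsymbol{\delta}^*,\boldsymbol{w}^*)\|_\infty$ is controlled by a concentration bound over the i.i.d.\ good points (the paper delegates this to Lemma 2 of \cite{Liu2016a}, whose hypotheses are exactly the ``mild technical conditions'' and the $n_q=\Omega(|G|^2)$ requirement you anticipated), after which Theorem \ref{thm:main} is invoked with the collapsed denominator $\nu\kappa_1'$. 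The argument is correct and no genuinely different ideas are introduced.
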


It can be seen that $\|\hat{\boldsymbol{\delta}}-\boldsymbol{\delta}^{*}\| =O\left(\sqrt{{\log d}/{\min(|G|, n_q)}}\right)$ if $d$ is reasonably large. 


\subsection{Consistency under Truncation Setting\label{subsec:trimmedsetting}}
\label{sec:truncationsetting}
In this setting, we do not assume there are outliers in the observed data.
Instead, we examine
the ability of our estimator recovering the density ratio up to a
certain quantile of our data. This ability is especially useful when the
behavior of the tail quantile is more volatile and makes the standard estimator \eqref{eq:MLEOBJ} output
unpredictable results.

\paragraph{Notations:}
Given $\nu \in (0,1]$, we call $t_{\nu}(\boldsymbol{\delta})$ is the $\nu$-th quantile
of $z_{\boldsymbol{\delta}}$ if $P\left[z_{\boldsymbol{\delta}}<t_{\nu}(\boldsymbol{\delta}))\right]\le\nu$
and $P\left[z_{\boldsymbol{\delta}}\le t_{\nu}(\boldsymbol{\delta}))\right]\ge\nu.$
In this setting, we consider $\nu$ is fixed by a user thus we drop
the subscript $\nu$ from all subsequent discussions. Let's define
a truncated domain: $\overline{X}(\boldsymbol{\delta})=\left\{ \boldsymbol{x}\in\mathbb{R}^{d}|z_{\boldsymbol{\delta}}(\boldsymbol{x})<t(\boldsymbol{\delta})\right\} $,
$\overline{X}^{p}(\boldsymbol{\delta})=X_{p}\cap\overline{X}(\boldsymbol{\delta})$
and $\overline{X}^{q}(\boldsymbol{\delta})=X_{q}\cap\overline{X}(\boldsymbol{\delta})$. See Figure \ref{fig:illus2} for a visualization of $t(\bolddelta)$ and $\overline{X}(\bolddelta)$ (the dark shaded region). 

\paragraph{Setting:}
Suppose dataset $X_{p}\iid P$ and $X_{q}\iid Q$. Truncated densities
$\overline{p}_{\boldsymbol{\delta}}$ and $\overline{q}_{\boldsymbol{\delta}}$
are the unbounded densities $p$ and $q$ restricted only on the truncated domain
$\overline{X}(\boldsymbol{\delta})$. Note that the truncated densities
are dependent on the parameter $\boldsymbol{\delta}$ and $\nu$. 
We show that under some assumptions, the parameter $\hat{\boldsymbol{\delta}}$
obtained from \eqref{eq:min_max} using a fixed hyperparameter $\nu$
will converge to the $\boldsymbol{\delta}^{*}$such that $\overline{q}_{\boldsymbol{\delta}^{*}}(\boldsymbol{x})r(\boldsymbol{x};\boldsymbol{\delta}^{*})=\overline{p}_{\boldsymbol{\delta}^{*}}(\boldsymbol{x})$.
We also define the ``optimal'' weight assignment
$
w_{i}^{*} = 
\frac{1}{n_{p}}, \forall i,\boldsymbol{x}_{p}^{(i)}\in\overline{X}(\boldsymbol{\delta}^{*})
$
and 0 otherwise.
Interestingly, the constraint in \eqref{eq:min_max}, $\langle \boldsymbol{w}^{*}, \boldone \rangle = \nu $ may \emph{not} hold, but our analysis in this section suggests we can always find
a pair $(\hat{\bolddelta},\hat{\boldsymbol{w}})$ in the feasible
region so that $\|\hat{\boldsymbol{\delta}}-\boldsymbol{\delta}^{*}\|$
converges to 0 under mild conditions.

We first assume the log density ratio model 
and its CDF is Lipschitz continuous.
\begin{assumption}
	\label{ass.ratio.lip} 
	\begin{equation}
	\forall\boldsymbol{u}\in\mathrm{Ball}(\rho),\sup_{\boldsymbol{x}}\left|\hat{z}{}_{\boldsymbol{\delta}^{*}+\boldsymbol{u}}(\boldsymbol{x})-\hat{z}_{\boldsymbol{\delta}^{*}}(\boldsymbol{x})\right|\le C_{\mathrm{lip}}\|\boldsymbol{u}\|.\label{eq:lip}
	\end{equation}
	Define $T(\boldsymbol{u},\epsilon):=\left\{ \boldsymbol{x}\in\mathbb{\mathbb{R}^{\mathit{d}}}\mid\;|z_{\boldsymbol{\delta}^{*}}(\boldx)-t(\boldsymbol{\delta}^{*})|\le2C_{\mathrm{lip}}\|\boldsymbol{u}\|+\epsilon\right\} $
	where $0<\epsilon\le1$. We assume $\forall\boldsymbol{u}\in\mathrm{Ball}(\rho),0<\epsilon\le1$
	\[
	P\left[\boldsymbol{x}_{p}\in T(\boldsymbol{u},\epsilon)\right]\le C_{\mathrm{CDF}}\cdot\|\boldsymbol{u}\|+\epsilon.
	\]
\end{assumption}
In this assumption, we define a ``zone'' $T(\boldsymbol{u},\epsilon)$
near the $\nu$-th quantile $t(\boldsymbol{\delta}^{*})$ and assume
the CDF of our ratio model is upper-bounded over this region. Different from Assumption \ref{ass.ratio.lip-1}, the RHS of \eqref{eq:lip}
is with respect to $\ell_{2}$ norm of $\boldu$. In the following assumption, we assume regularity on $P$ and $Q$.
\begin{assumption}
	\label{assu:regP}
	$\forall \boldx_q \in \mathbb{R}^{d},  \|\boldsymbol{f}(\boldsymbol{x}_{q})\|_{\infty}\le C_{q}$   and $\forall\boldsymbol{u}\in\mathrm{Ball}(\rho),\forall\boldsymbol{x}_{p}\in T(\boldsymbol{u},1),\|\boldsymbol{f}(\boldsymbol{x}_{p})\|_{\infty}\le C_{p}$.
\end{assumption}
\begin{thm}
	\label{cor:truncationsetting}
	In addition Assumption \ref{assu:.boundedrq} and \ref{assu:RSC} and other mild assumptions (see Section \ref{sec:proofoftruncation} in the supplementary material), Assumption \ref{ass.ratio.lip} and
 \ref{assu:regP} hold. If $1\ge\nu\ge\frac{8C_{\mathrm{CDF}}\sqrt{k}C_{p}C_{r}^{2}}{\kappa'_{1}},n_q = \Omega(|\overline{X}{}^{p}(\boldsymbol{\delta}^{*})|^2)$,
\[\textstyle{
\lambda_{n}\ge2\max\left[\sqrt{\frac{K'_{1}\log d}{|\overline{X}{}^{p}(\boldsymbol{\delta}^{*})|}}+\frac{2C_{r}^{2}C_{q}|X_{q}\backslash\overline{X}^{q}(\boldsymbol{\delta}^{*})|}{n_{q}},\frac{2L\cdot C_{p}}{\sqrt{n_{p}}},\frac{\rho \nu c}{2C_{r}^{2}\sqrt{n_{q}}}\right],
}
\]
where $K'_{1}>0, c>0$ are constants,
we are guaranteed that $||\hat{\boldsymbol{\delta}}-\boldsymbol{\delta}^{*}\|\le\frac{4C_{r}^{2}}{\nu\kappa'_{1}}\cdot3\sqrt{k}\lambda_{n}$
with high probability.\label{thm:main-1-1} 
\end{thm}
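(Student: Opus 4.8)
### Proof Proposal for Theorem \ref{thm:main-1-1}

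The plan is to derive Theorem \ref{thm:main-1-1} as a specialization of the base Theorem \ref{thm:main}, exactly as was done for the outlier setting in Theorem \ref{thm:main-1}. This means I need to accomplish three things: (i) verify the coherence condition \eqref{eq:ass2} for the particular choice of $(\bolddelta^*, \boldw^*)$ given in the truncation setting, identifying explicit values for $\kappa_2$ and $\tau_2(n,d)$; (ii) bound $\|\nabla_{\bolddelta}\mathcal{L}(\bolddelta^*,\boldw^*)\|_\infty$ to justify the stated form of $\lambda_n$; and (iii) check that $\nu\kappa'_1 > 2C_r^2\kappa_2$ translates into the condition $\nu \ge 8C_{\mathrm{CDF}}\sqrt{k}C_pC_r^2/\kappa'_1$, so that the conclusion $\|\hat\bolddelta - \bolddelta^*\| \le \frac{4C_r^2}{\nu\kappa'_1}\cdot 3\sqrt{k}\lambda_n$ emerges with the factor $4$ coming from $\nu\kappa'_1 - 2C_r^2\kappa_2 \ge \nu\kappa'_1/2$ (roughly). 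A preliminary step is to note that although $\langle \boldw^*, \boldone\rangle = \nu$ need not hold exactly (as remarked before the theorem), the true quantile definition guarantees $|\overline{X}^p(\bolddelta^*)|/n_p$ is within $1/n_p$ of $\nu$, so $\boldw^*$ is ``almost feasible'' and the saddle-point machinery still applies after a negligible correction; I would absorb this into the "mild technical assumptions."

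The heart of the argument is step (i). Write $\hat\boldu = \hat\bolddelta - \bolddelta^*$. The difference $\nabla_{\bolddelta}\mathcal{L}(\hat\bolddelta, \hat\boldw) - \nabla_{\bolddelta}\mathcal{L}(\hat\bolddelta, \boldw^*)$ equals $\frac{1}{n_p}\sum_i (\hat w_i - w^*_i)\boldf(\boldx_p^{(i)})$ (the $X_q$ term cancels since it does not depend on $\boldw$). The weights $\hat w_i$ and $w^*_i$ differ only on samples whose log-ratio $z_{\hat\bolddelta}$ versus $z_{\bolddelta^*}$ ranking straddles the quantile threshold. By the Lipschitz Assumption \ref{ass.ratio.lip}, any such sample must lie in the zone $T(\hat\boldu, \epsilon)$ for an appropriate small $\epsilon$ (I would take $\epsilon$ comparable to the empirical-to-population quantile gap, which is $O(1/\sqrt{n_p})$ or controlled via a DKW-type inequality). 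Then the number of disagreeing weights is at most $n_p \cdot P[\boldx_p \in T(\hat\boldu,\epsilon)] \lesssim n_p(C_{\mathrm{CDF}}\|\hat\boldu\| + \epsilon)$ up to concentration, and each contributes at most $\frac{1}{n_p}\|\boldf(\boldx_p^{(i)})\|_\infty \le \frac{C_p}{n_p}$ per coordinate by Assumption \ref{assu:regP}. Taking the inner product with $\hat\boldu$ and using $\|\cdot\|_1$--$\|\cdot\|_\infty$ duality gives a bound of the form $-C_{\mathrm{CDF}}C_p\|\hat\boldu\|\,\|\hat\boldu\|_1 - \epsilon C_p \|\hat\boldu\|_1$. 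Since on the cone where $\hat\boldu$ lives we have $\|\hat\boldu\|_1 \le 4\sqrt{k}\|\hat\boldu\|$ (standard for $\ell_1$-regularized estimators with $k$-sparse truth), the first term becomes $\le -4\sqrt{k}C_{\mathrm{CDF}}C_p\|\hat\boldu\|^2$, identifying $\kappa_2 = 4\sqrt{k}C_{\mathrm{CDF}}C_p \cdot (\text{const})$ — which is exactly what makes the threshold $\nu \ge 8C_{\mathrm{CDF}}\sqrt{k}C_pC_r^2/\kappa'_1$ equivalent to $\nu\kappa'_1 > 2C_r^2\kappa_2$ — and the $\epsilon$-term becomes part of $\tau_2(n,d)$.

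For step (ii), $\nabla_{\bolddelta}\mathcal{L}(\bolddelta^*,\boldw^*) = \frac{1}{n_p}\sum_{i \in \overline{X}^p(\bolddelta^*)} \boldf(\boldx_p^{(i)}) - \nu\,\widehat{\mathbb{E}}_q[\boldf]$ (with the self-normalized $X_q$ weights). The population version of this quantity is zero precisely because of the defining relation $\overline{q}_{\bolddelta^*} r(\cdot;\bolddelta^*) = \overline{p}_{\bolddelta^*}$, modulo the fact that $\widehat N(\bolddelta^*)$ uses all of $X_q$ rather than just $\overline{X}^q(\bolddelta^*)$ — this mismatch is the source of the $\frac{2C_r^2 C_q |X_q \setminus \overline{X}^q(\bolddelta^*)|}{n_q}$ term, bounded using Assumption \ref{assu:regP} on $\|\boldf(\boldx_q)\|_\infty \le C_q$ and the ratio bounds from Assumption \ref{assu:.boundedrq}. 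The remaining stochastic fluctuation of the $X_p$ average is a sum of $|\overline{X}^p(\bolddelta^*)|$ bounded independent terms, so a Hoeffding/Bernstein bound plus a union bound over $d$ coordinates yields $\sqrt{K'_1 \log d / |\overline{X}^p(\bolddelta^*)|}$; the $\frac{2LC_p}{\sqrt{n_p}}$ term handles the error from replacing the population truncation set by its empirical counterpart (the empirical quantile estimation error, where $L$ is presumably a constant from a uniform CLT / bracketing argument over $\mathrm{Ball}(\rho)$). The $\frac{\rho\nu c}{2C_r^2\sqrt{n_q}}$ term is inherited verbatim from Theorem \ref{thm:main} via the RSC slack.

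The main obstacle I anticipate is step (i), specifically the rigorous control of how many samples change weight when $\bolddelta$ moves from $\bolddelta^*$ to $\hat\bolddelta$. Two subtleties must be handled carefully: first, the empirical $\nu$-quantile of $z_{\hat\bolddelta}$ over the finite sample $X_p$ is itself random and differs from the population quantile $t(\bolddelta^*)$, so the "zone" $T(\hat\boldu,\epsilon)$ must be dilated by this quantile estimation error — this is why Assumption \ref{ass.ratio.lip} carries the free parameter $\epsilon$ and why a $1/\sqrt{n_p}$-type term appears in $\lambda_n$. Second, turning the population CDF bound $P[\boldx_p \in T(\hat\boldu,\epsilon)] \le C_{\mathrm{CDF}}\|\hat\boldu\| + \epsilon$ into a bound on the empirical count $|\{i : \boldx_p^{(i)} \in T(\hat\boldu,\epsilon)\}|$ requires a uniform (over $\hat\boldu \in \mathrm{Ball}(\rho)$) concentration inequality, since $\hat\boldu$ is data-dependent — a VC/bracketing argument over the class of threshold sets indexed by $\bolddelta$. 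Everything else is a careful but routine adaptation of the outlier-setting proof.
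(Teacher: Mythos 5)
Your proposal follows essentially the same route as the paper: it specializes the base theorem by (i) verifying the coherence condition via the zone $T(\boldsymbol{u},\epsilon)$, the Lipschitz/CDF assumptions, and a Dvoretzky--Kiefer--Wolfowitz control of the empirical quantile to get $\kappa_{2}\propto\sqrt{k}\,C_{\mathrm{CDF}}C_{p}$ and $\tau_{2}\propto C_{p}/\sqrt{n_{p}}$ (the paper's Lemma 9), and (ii) bounding $\|\nabla_{\boldsymbol{\delta}}\mathcal{L}(\boldsymbol{\delta}^{*},\boldsymbol{w}^{*})\|_{\infty}$ by splitting off the $X_{q}\backslash\overline{X}^{q}(\boldsymbol{\delta}^{*})$ contamination of $\widehat{N}(\boldsymbol{\delta}^{*})$ and applying a concentration bound to the i.i.d.\ truncated part (the paper's Lemma 8). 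The only cosmetic differences are constants in $\kappa_{2}$ and your suggestion of a VC/bracketing argument for uniformity where the paper gets by with DKW applied to the fixed scalar $z_{\boldsymbol{\delta}^{*}}$, since $T(\boldsymbol{u},\epsilon)$ is just an interval in that variable.
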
 
It can be seen that $\|\hat{\boldsymbol{\delta}}-\boldsymbol{\delta}^{*}\| = O\left(\sqrt{{\log d}/{\min(|\overline{X}^{p}(\boldsymbol{\delta}^{*})|, n_q)}}\right)$ if $d$ is reasonably large and $|X_{q}\backslash\overline{X}^{q}(\boldsymbol{\delta}^{*})|/{n_{q}}$ decays fast. 

\section{Experiments}
\label{sec.label}
\subsection{Detecting Sparse Structural Changes between Two Markov Networks (MNs) \cite{Liu2016a}}
In the first experiment\footnote{Code can be found at \url{http://allmodelsarewrong.org/software.html}}, we learn changes between two Gaussian MNs under the outlier setting. 
The ratio between two Gaussian MNs can be parametrized as $p(\boldx)/q(\boldx) \propto \exp(-\sum_{i,j \le d} \Delta_{i,j} x_i x_j)$, where $\Delta_{i,j} := \Theta^{p}_{i,j} - \Theta^{q}_{i,j}$ is the difference between precision matrices. 
We generate 500 samples as $X_p$ and $X_q$ using two randomly structured Gaussian MNs. 
One point $[10, \dots, 10]$ is added as an outlier to $X_p$. To induce sparsity, we set $R(\boldDelta) = \sum_{i,j=1, i\le j}^{d}{|\Delta_{i, j}|}$ and fix $\lambda = 0.0938$.  Then run DRE and TRimmed-DRE to learn the sparse \emph{differential} precision matrix $\boldDelta$ and results are plotted on Figure \ref{fig:mnchange2} and \ref{fig:mnchange1}\footnote{Figures are best viewed in color.} where the ground truth (the position $i,j, \Delta_{i,j}^* \neq 0$) is marked by red boxes. It can be seen that the outlier completely misleads DRE while TR-DRE performs reasonably well. We also run experiments with two different settings ($d=25, d=36$) and plot True Negative Rate (TNR) - True Positive Rate (TPR) curves. We fix $\nu$ in TR-DRE to 90\% and compare the performance of DRE and TR-DRE using DRE without any outliers as gold standard (see Figure \ref{fig:TNRTPR}). It can be seen that the added outlier makes the DRE fail completely while TR-DRE can almost reach the gold standard. It also shows the price we pay: TR-DRE does lose some power for discarding samples. However, the loss of performance is still acceptable. 
\begin{figure}
	\centering
	\subfloat[$\hat{\boldDelta}$ obtained by DRE, $d = 20$, with one outlier.]{
		\label{fig:mnchange2}\includegraphics[width = .33\textwidth]{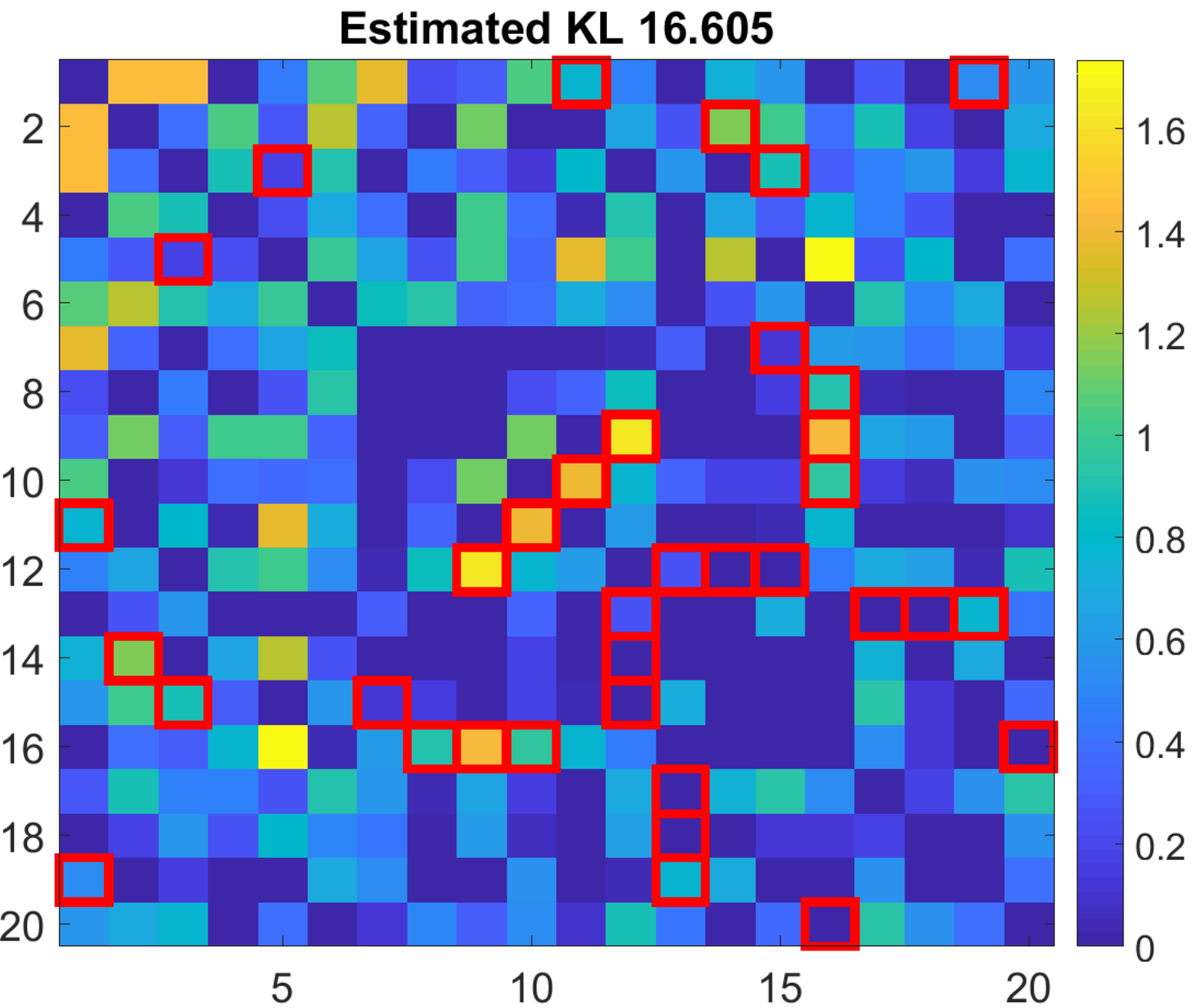}}
	\subfloat[$\hat{\boldDelta}$ obtained by TR-DRE, $\nu = 90\%$, with one outlier.]{\label{fig:mnchange1}\includegraphics[width = .33\textwidth]{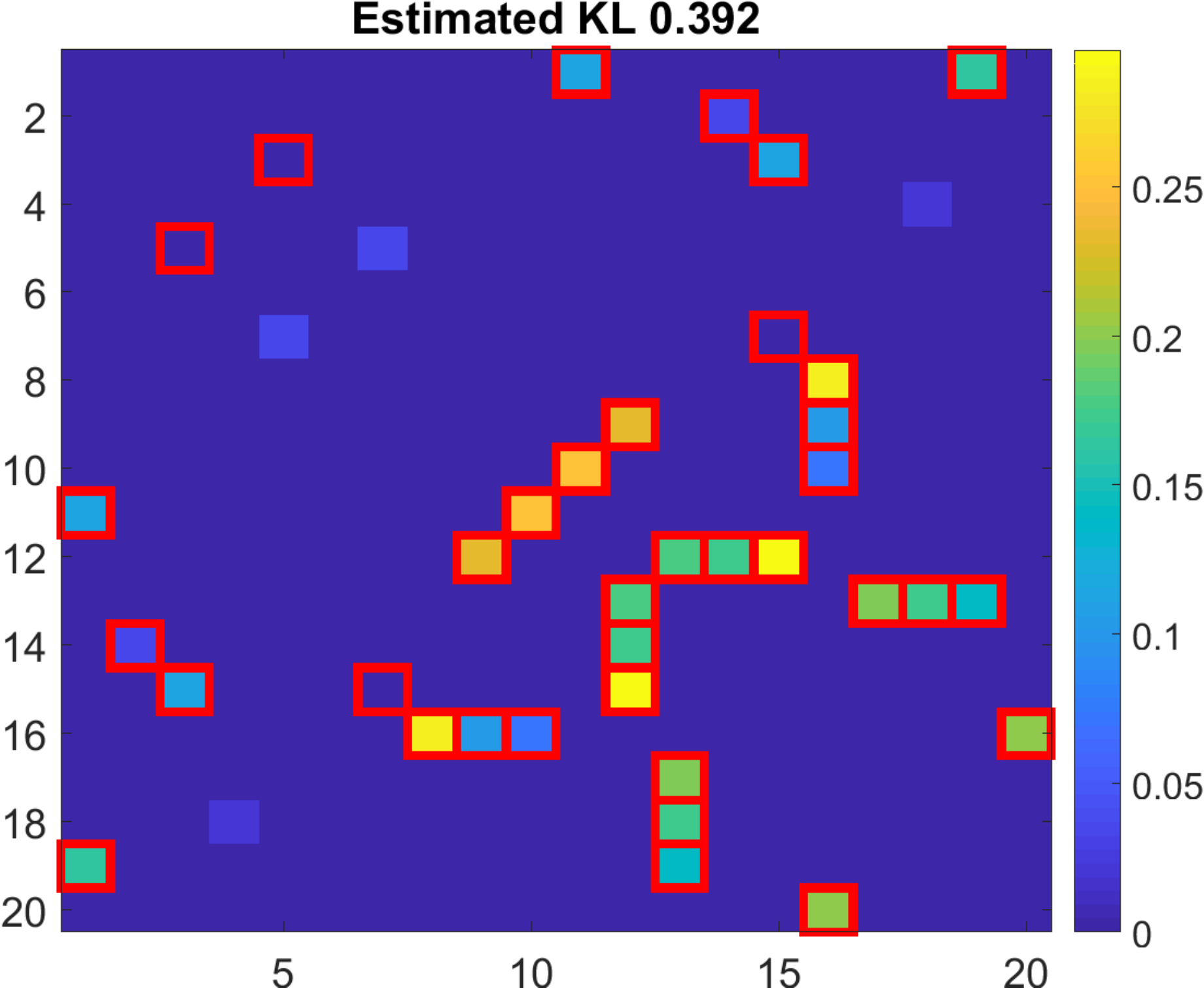}}
	\subfloat[TNR-TPR plot, $\nu=90\%$]{\label{fig:TNRTPR}\includegraphics[width = .33\textwidth]{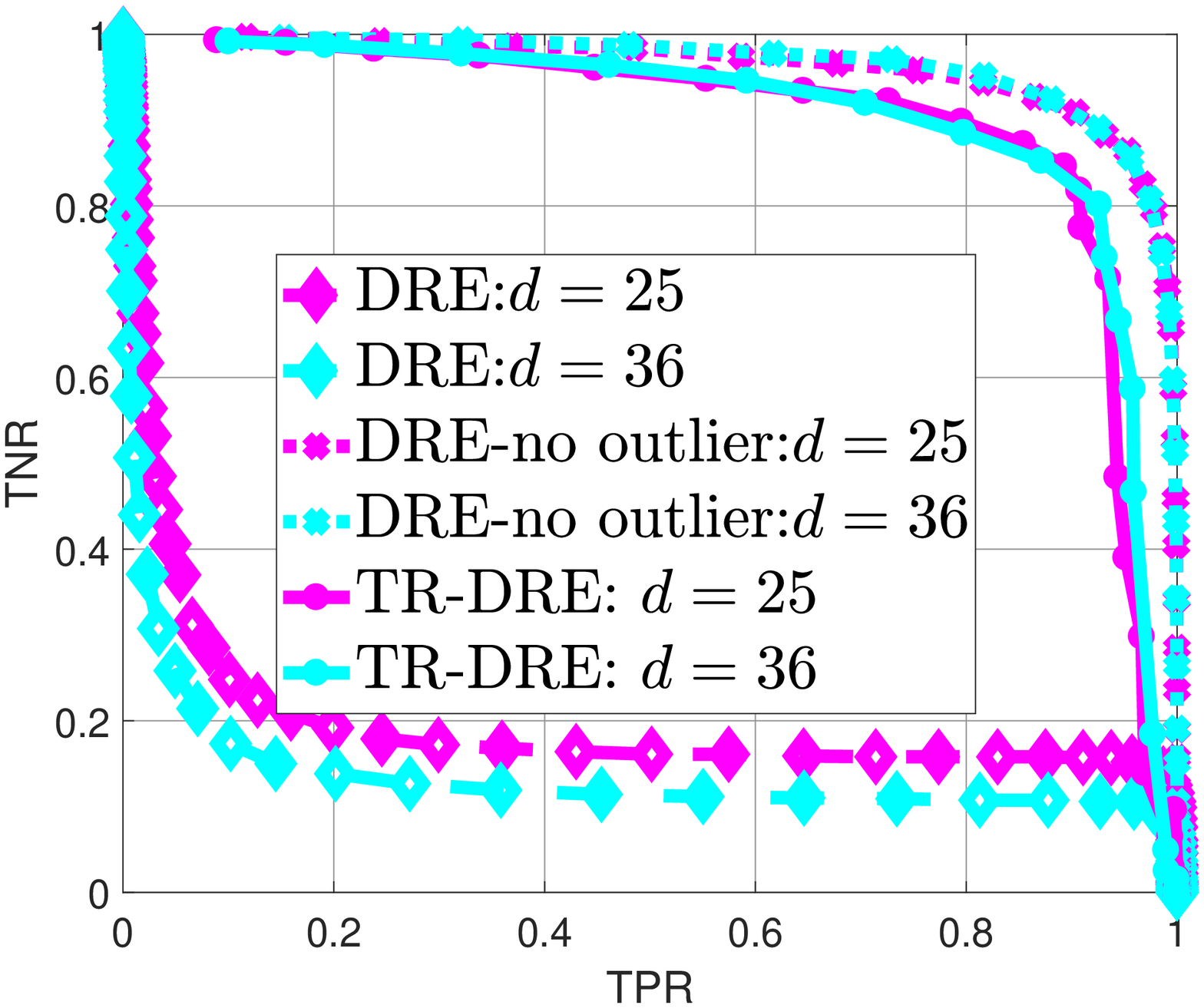}}
	\caption{Using DRE to learn changes between two MNs. We set $R(\cdot) = \|\cdot\|_1 $ and $ f(x_i, x_j) = x_i x_j$.	\label{fig.mnchange}}
\end{figure}
\subsection{Relative Novelty Detection from Images}
\begin{figure}
	\centering
	\subfloat[Dataset]{\label{fig:dataset}\includegraphics[width = .16\textwidth]{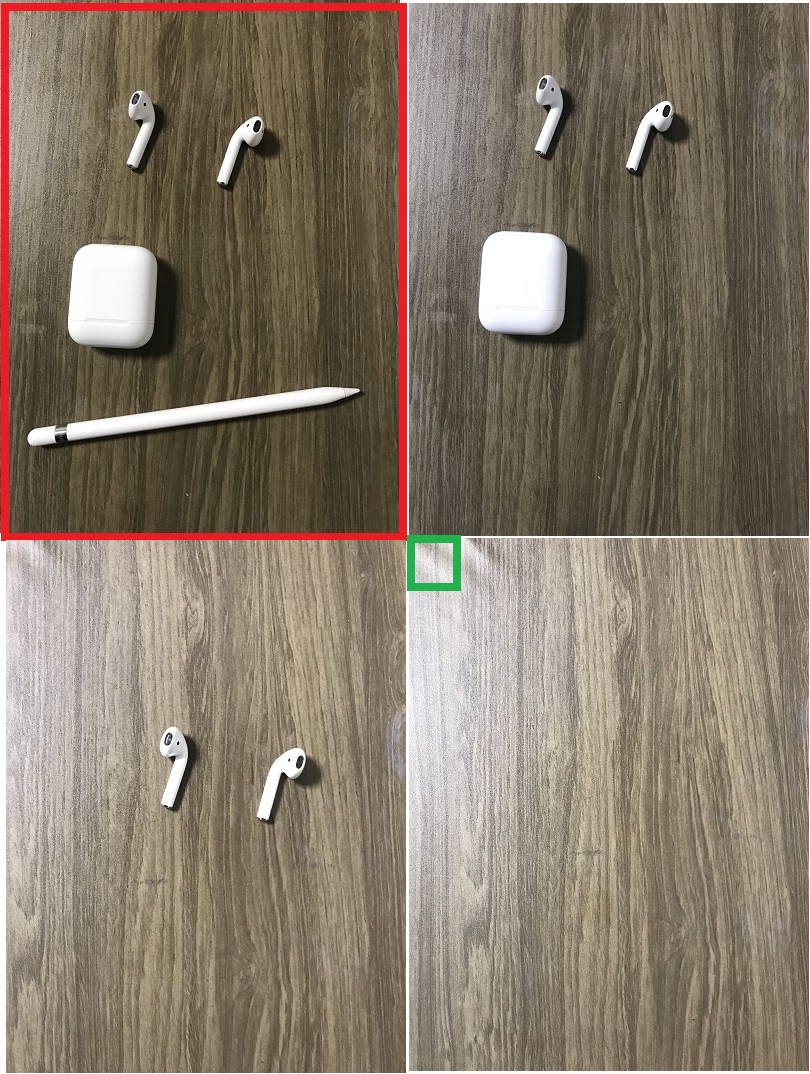}}
	\subfloat[$\nu = 97\%$]{\label{fig:detected97}\includegraphics[width = .16\textwidth]{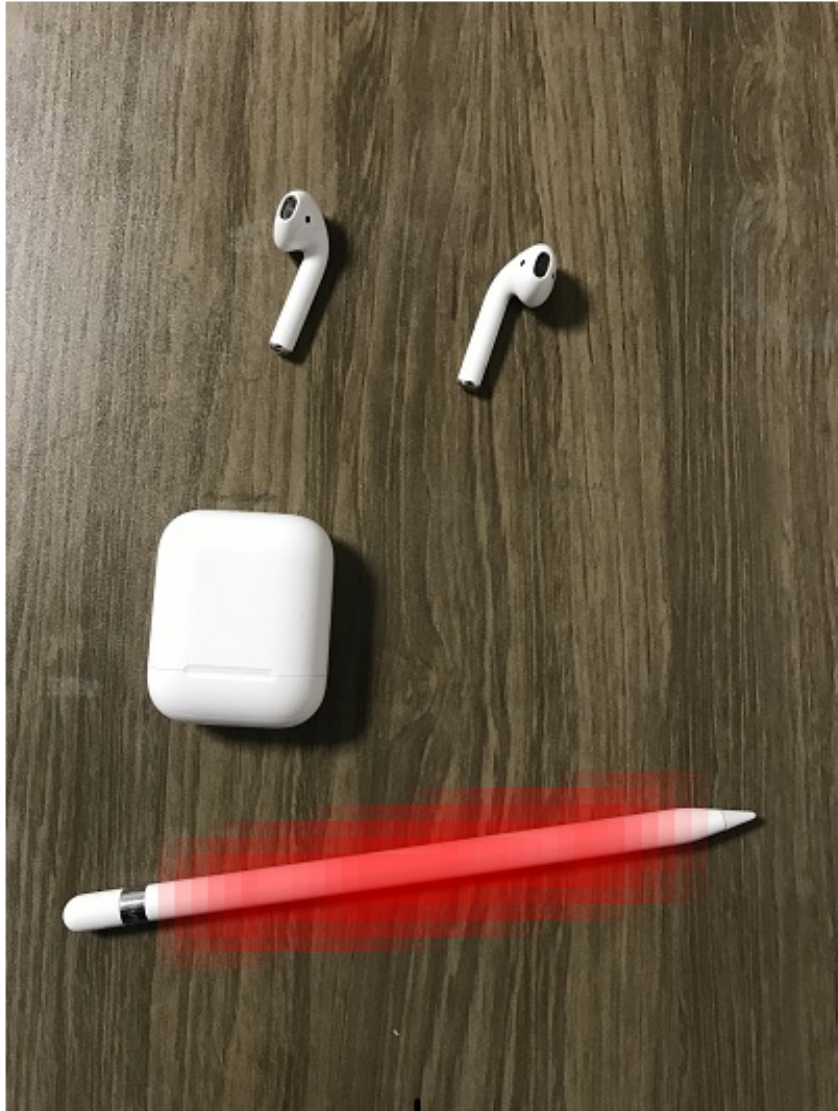}}
	\subfloat[$\nu = 90\%$]{\label{fig:detected90}\includegraphics[width = .16\textwidth]{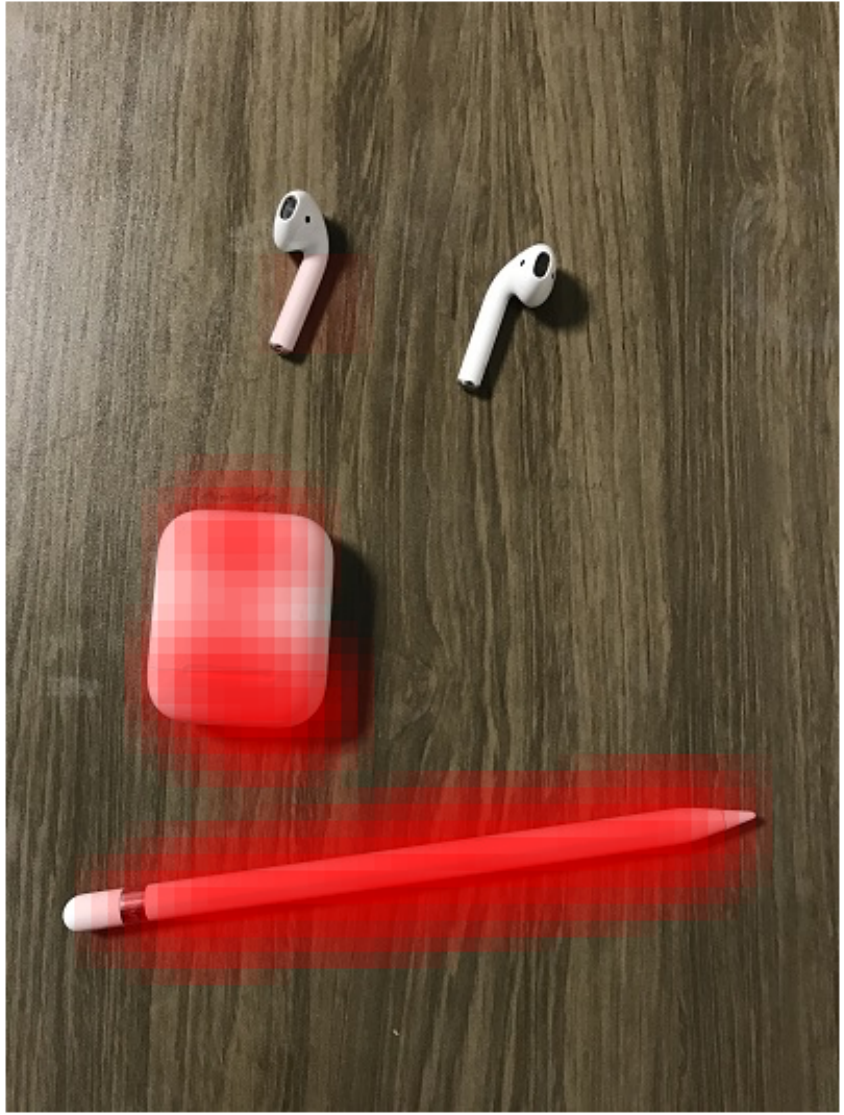}}
	\subfloat[$\nu = 85\%$]{\label{fig:detected85}\includegraphics[width = .16\textwidth]{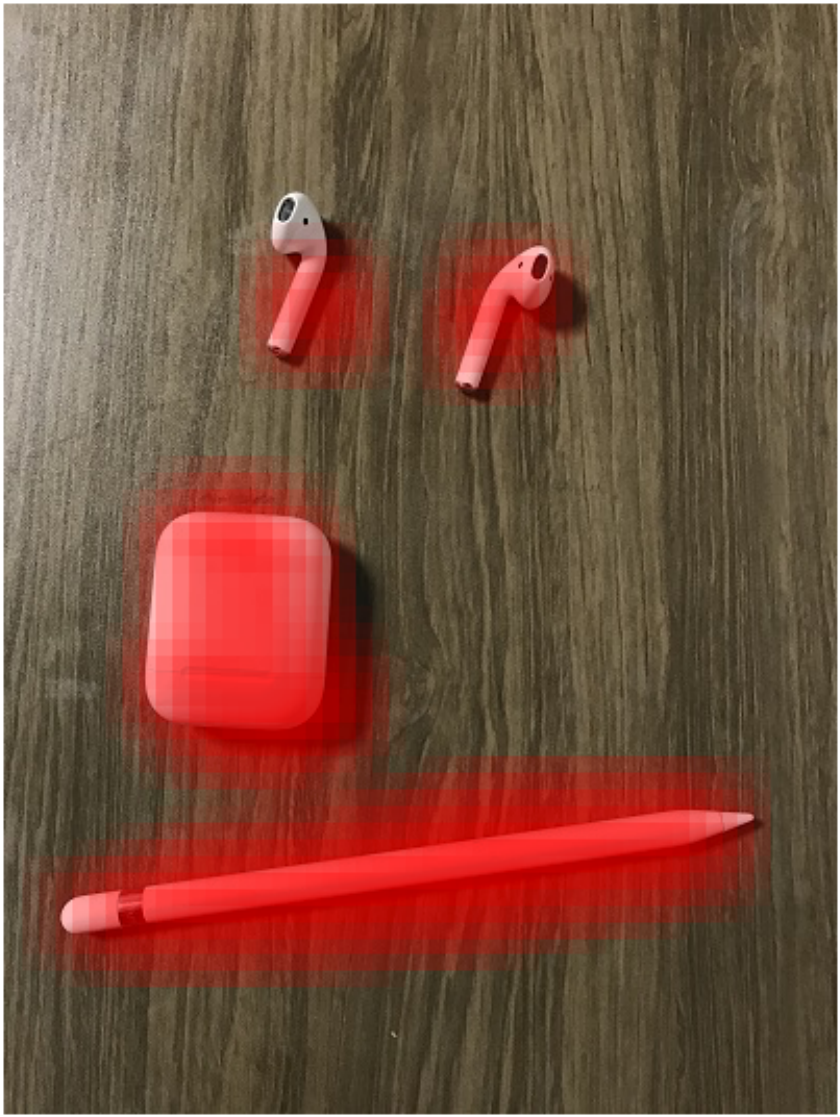}}
	\subfloat[TH-DRE]{\label{detectedsmola}\includegraphics[width = .16\textwidth]{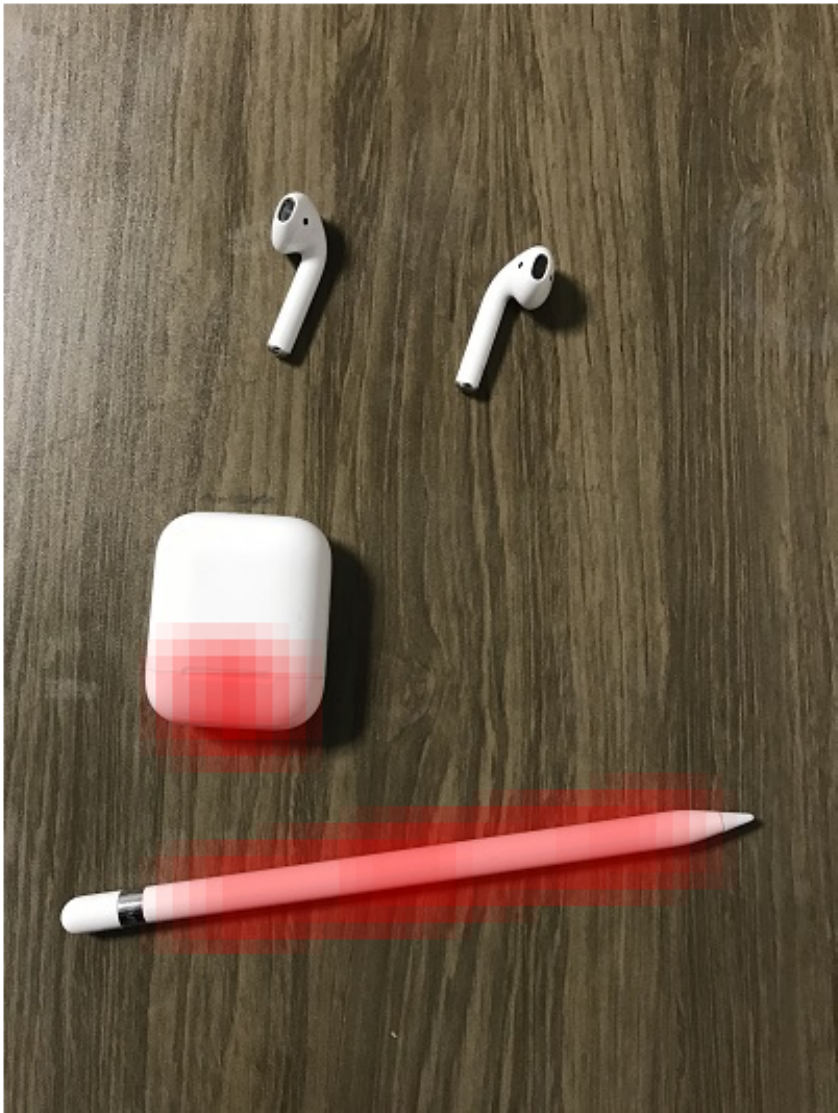}}
	\subfloat[one-SVM]{\label{detectedosvm}\includegraphics[width = .16\textwidth]{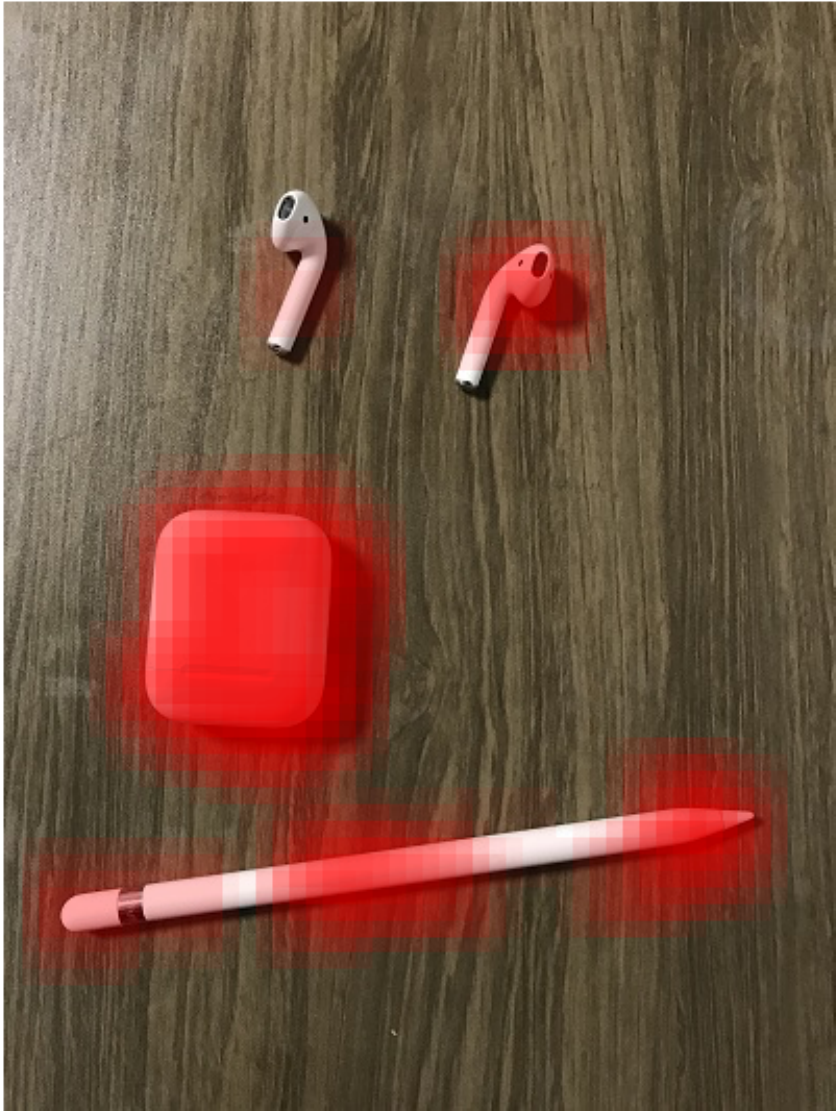}}
	\caption{Relative object detection using super pixels. We set $R(\cdot) = \|\cdot\|^2$, $ \boldf(\boldx)$ is an RBF kernel.}
\end{figure}
In the second experiment, 
we collect four images (see Figure \ref{fig:dataset}) containing three objects with a textured background: a pencil, an earphone and an earphone case. We create data points from these four images using sliding windows of $48 \times 48$ pixels (the green box on the lower right picture on Figure \ref{fig:dataset}). We extract 899 features using MATLAB HOG method on each window and construct an 899-dimensional sample.  Although our theorems in Section \ref{sec:-Consistency-in} are proved for linear models, here $\boldf(\boldx)$ is an RBF kernel using all samples in $X_p$ as kernel basis. We pick the top left image as $X_p$ and using all three other images as $X_q$,  then run TR-DRE, THresholded-DRE \cite{Smola2009}, and one-SVM.

In this task, we select high density ratio super pixels on image $X_p$. 
It can be expected that the super pixels containing the pencil will exhibit high density ratio values as they did not appear in the reference dataset $X_q$ while super pixels containing the earphone case, the earphones and the background, repeats similar patches in $X_q$ will have lower density ratio values. 
This is different from a conventional novelty detection, as a density ratio function help us capture only the relative novelty.
For TR-DRE, we use the trimming threshold $\hat{t}$ as the threshold for selecting high density ratio points. 

It can be seen on Figure \ref{fig:detected97}, \ref{fig:detected90} and \ref{fig:detected85}, as we tune $\nu$ to allow more and more high density ratio windows to be selected, more relative novelties are detected: First the pen, then the case, and finally the earphones, as the lack of appearance in the reference dataset $X_q$ elevates the density ratio value by different degrees.
In comparison, we run TH-DRE with top 3\% highest density ratio values thresholded, which corresponds to $\nu = 97\%$ in our method. The pattern of the thresholded windows (shaded in red) in Figure \ref{detectedsmola} is similar to Figure \ref{fig:detected97} though some parts of the case are mistakenly shaded. 
Finally, one-SVM with $3\%$ support vectors (see Figure \ref{detectedosvm}) does not utilize the knowledge of a reference dataset $X_q$ and labels all salient objects in $X_p$ as they corresponds to the ``outliers'' in $X_p$.  

\section{Conclusion}
\label{sec.concl}
We presents a robust density ratio estimator based on the idea of
trimmed MLE. 
It has a convex formulation
and the optimization can be easily conducted using a subgradient ascent method.
We also investigate its theoretical property through an equivalent
weighted M-estimator whose $\ell_{2}$ estimation error bound was
provable under two high-dimensional, robust settings. 
Experiments confirm the
effectiveness and robustness of the our trimmed estimator.
\section*{Acknowledgments}
We thank three anonymous reviewers for their detailed and helpful comments.  Akiko Takeda thanks Grant-in-Aid for Scientific Research (C), 15K00031. Taiji Suzuki was partially supported by MEXT KAKENHI (25730013, 25120012, 26280009 and 15H05707), JST-PRESTO and JST-CREST. Song Liu and Kenji Fukumizu have been supported in part by MEXT Grant-in-Aid for Scientific Research on Innovative Areas (25120012).

\bibliographystyle{plain}
\bibliography{main}

\newpage{}

\onecolumn

\section{Appendix}
To lighten the notation system, we drop the feature transform $\boldsymbol{f}$
from our equations. The analysis procedure does not change with or
without $\boldsymbol{f}$.

\subsection{Relationship between Trimmed DRE and Binary SVM \cite{Scholkopf2001,Cristianini2000}}
\label{sec:poofproposvm}
Consider a ``symmetrized'' extension to the criterion
\eqref{eq:MLEOBJ}: 
\begin{align}
\label{eq:symm1}
	\min_{\bolddelta}&\mathrm{KL}\left[p|q\cdot r_{\boldsymbol{\delta}}\right]+\mathrm{KL}\left[q|p\cdot1/r_{\boldsymbol{\delta}}\right] \notag \\
	\approx &c - \max_{\boldsymbol{\delta}}\frac{1}{n_{p}}\sum_{i=1}^{n_{p}}\log \hat{r}(\boldsymbol{x}_{p}^{(i)};\boldsymbol{\delta})+\frac{1}{n_{q}}\sum_{i=1}^{n_{q}}\log \hat{r}_{2}(\boldsymbol{x}_{q}^{(i)};\boldsymbol{\delta})
\end{align}
that jointly minimizes the KL divergence from $P$ to $Q$ and from
$Q$ to $P$. Similar to \eqref{eq:ratiomodel2}, we use $\hat{r}_{2}$ to model the ratio $q/p$: \[\hat{r}_{2}(\boldsymbol{x};\bolddelta)=\frac{\exp\langle-\bolddelta,\boldsymbol{x}\rangle}{\hat{N}_{2}(\bolddelta)},\hat{N}_{2}(\boldsymbol{\delta}):=\frac{1}{n_{p}}\sum_{j=1}^{n_{p}}\exp\langle\boldsymbol{-\delta},\boldsymbol{x}_{p}^{(j)}\rangle.\]
The minus in front of the $\bolddelta$ is due to the inversion of
the ratio. We can trim the objective function \eqref{eq:symm1} and add a regularization term $\lambda R(\bolddelta)$ as we did for the asymmetric one:
\begin{equation}
	\max_{\boldsymbol{\delta}}\frac{1}{n_{p}}\sum_{i=1}^{n_{p}}[\log \hat{r}(\boldsymbol{x}_{p}^{(i)};\boldsymbol{\delta}) - t_0]_- + \frac{1}{n_{q}}\sum_{i=1}^{n_{q}}[\log \hat{r}_{2}(\boldsymbol{x}_{q}^{(i)};\boldsymbol{\delta} )- t_0]_- - \lambda R(\bolddelta) 
\label{eq:symm}
\end{equation}
\begin{prop}
	\label{prop.svm}
	If $n_p=n_q, t_0 = 1, R(\cdot) = \|\cdot\|^2_2$, the maximizer $\hat{\bolddelta}$ of \eqref{eq:symm} is the same as the primal solution of a modified SVM using $X_p$ and $X_q$ as positive and negative class respectively.
\end{prop}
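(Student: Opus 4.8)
The plan is to peel the clipping functions off \eqref{eq:symm}, expose the linear predictor, and read off an SVM primal. First I would use the elementary identity $[\ell]_{-}=-[-\ell]_{+}$ to rewrite the maximization \eqref{eq:symm} as the minimization
\[
\min_{\bolddelta}\ \lambda R(\bolddelta)+\frac{1}{n_{p}}\sum_{i=1}^{n_{p}}\bigl[t_{0}-\hat{z}_{\bolddelta}(\boldx_{p}^{(i)})\bigr]_{+}+\frac{1}{n_{q}}\sum_{i=1}^{n_{q}}\bigl[t_{0}-\log\hat{r}_{2}(\boldx_{q}^{(i)};\bolddelta)\bigr]_{+},
\]
and then substitute the explicit models $\hat{z}_{\bolddelta}(\boldx)=\langle\bolddelta,\boldx\rangle-\log\hat{N}(\bolddelta)$ from \eqref{eq:ratiomodel2} and $\log\hat{r}_{2}(\boldx;\bolddelta)=-\langle\bolddelta,\boldx\rangle-\log\hat{N}_{2}(\bolddelta)$. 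Plugging in $n_{p}=n_{q}=n$, $t_{0}=1$ and $R(\cdot)=\|\cdot\|_{2}^{2}$, the first sum becomes $\sum_{i}[1-(\langle\bolddelta,\boldx_{p}^{(i)}\rangle-\log\hat{N}(\bolddelta))]_{+}$ and the second $\sum_{i}[1+(\langle\bolddelta,\boldx_{q}^{(i)}\rangle+\log\hat{N}_{2}(\bolddelta))]_{+}$ — exactly the positive- and negative-class hinge losses of a support vector machine with weight vector $\boldw=\bolddelta$, $X_{p}$ as the $+1$ class and $X_{q}$ as the $-1$ class, provided the usual scalar offset is taken to be the data-dependent quantity $-\log\hat{N}(\bolddelta)$ on the $X_{p}$ side and $-\log\hat{N}_{2}(\bolddelta)$ on the $X_{q}$ side. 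This pair of $\bolddelta$-dependent offsets, replacing the single free bias $b$ of the textbook SVM, is precisely the sense in which the machine is "modified"; together with the slack-variable form of the hinge this gives the SVM primal.

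Next I would make the identification of minimizers rigorous. Both the displayed objective and the modified-SVM primal are convex in $\bolddelta$: each hinge argument is convex because $\log\hat{N}$ and $\log\hat{N}_{2}$ are log-sum-exp functions, and $\lambda\|\bolddelta\|^{2}$ is strictly convex, so each problem has a unique minimizer; under $\boldw=\bolddelta$ the two objectives are literally the same function, hence the maximizer $\hat{\bolddelta}$ of \eqref{eq:symm} coincides with the primal solution of the modified SVM. If the intended target is instead a genuine single-bias SVM, the extra step is to check that at the optimum the two offsets agree, i.e. $-\log\hat{N}(\hat{\bolddelta})=\log\hat{N}_{2}(\hat{\bolddelta})$; I would try to obtain this by matching the stationarity conditions of \eqref{eq:symm} in $\bolddelta$ against the KKT system of the SVM primal in $(\boldw,b)$, using $n_{p}=n_{q}$ to balance the two empirical normalizers.

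The hard part will be exactly this last point: pinning down what counts as the "modified SVM" and confirming the correspondence is a bijection on optimizers — in particular whether the empirical normalizers $\hat{N}(\bolddelta)$ and $\hat{N}_{2}(\bolddelta)$ can be absorbed into one honest bias or must be carried along as separate class-dependent offsets. Everything else is routine: the $[\ell]_{-}=-[-\ell]_{+}$ rewriting, substituting the log-ratio models, matching the regularization constant $\lambda$ against the SVM cost $C$, and the uniqueness-from-strict-convexity argument. I would therefore spend most of the proof writing the modified SVM primal explicitly and checking term-by-term equality of the two objectives under $\boldw=\bolddelta$, and only then invoke convexity to transfer the optimizer.
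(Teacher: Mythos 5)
Your proposal is correct and follows essentially the same route as the paper: the paper introduces slack variables (the constrained form of your hinge-loss rewriting), substitutes the log-linear models, and declares the result an SVM without a bias term ``except the presences of the log normalization terms $\log\hat{N}(\boldsymbol{\delta})$ and $\log\hat{N}_{2}(\boldsymbol{\delta})$'' --- i.e.\ the ``modified SVM'' is exactly your class-dependent-offset reading, so the extra step of absorbing the two normalizers into a single honest bias is not required. The only quibble is a sign convention on the $X_{q}$-side offset (the hinge $[1+\langle\boldsymbol{\delta},\boldsymbol{x}_{q}\rangle+\log\hat{N}_{2}(\boldsymbol{\delta})]_{+}$ corresponds to offset $+\log\hat{N}_{2}$ in the negative-class decision function), which does not affect the argument.
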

It suggests
SVM learns an \emph{unnormalized }and \emph{trimmed} density
ratio function as the decision function. 

\begin{proof}
By introducing
the slack variables as we did in \eqref{eq:objsimple}.
\eqref{eq:symm} can be rewritten as:
\begin{align}
\label{eq:symmsvm}
\min_{\boldsymbol{\delta},\boldsymbol{\epsilon}\ge\boldsymbol{0}}\notag & \frac{1}{n_{p}}\langle\epsilon_{p},\boldsymbol{1}\rangle+\frac{1}{n_{q}}\langle\epsilon_{q},\boldsymbol{1}\rangle+\lambda R(\boldsymbol{\delta})\\
\text{s.t.}\; \notag & \forall\boldsymbol{x}_{p}^{(i)}\text{\ensuremath{\in}}X_{p},\forall\boldsymbol{x}_{q}^{(i)}\text{\ensuremath{\in}}X_{q},\\
&\log\hat{r}(\boldsymbol{x}_{p}^{(i)};\boldsymbol{\delta})\ge t_0-\epsilon_{p,i},\notag \\
&\log\hat{r}_{2}(\boldsymbol{x}_{q}^{(i)};\boldsymbol{\delta})\ge t_0-\epsilon_{q,i},
\end{align}
After substituting $\hat{r}$ and $\hat{r}_{2}$, \eqref{eq:symmsvm} can
be re–written as
\begin{align}
\label{eq.augsvm}
\min_{\boldsymbol{\delta},\boldsymbol{\epsilon}\ge\boldsymbol{0}}\notag & \frac{1}{n_{p}}\langle\epsilon_{p},\boldsymbol{1}\rangle+\frac{1}{n_{q}}\langle\epsilon_{q},\boldsymbol{1}\rangle+\lambda R(\boldsymbol{\delta})\notag\\
\text{s.t.}\; & \forall\boldsymbol{x}_{p}^{(i)}\text{\ensuremath{\in}}X_{p},\forall\boldsymbol{x}_{q}^{(i)}\text{\ensuremath{\in}}X_{q},\notag\\
& \langle\boldsymbol{\delta},\boldx_p^{(i)}\rangle-\log\hat{N}(\boldsymbol{\delta})\ge t_0-\epsilon_{p,i},\notag\\
&\langle-\boldsymbol{\delta},\boldx_q^{(i)}\rangle-\log\hat{N}_{2}(\boldsymbol{\delta})\ge t_0-\epsilon_{q,i}.
\end{align}
Let $n_{p}=n_{q}, t_0 = 1$, $R(\bolddelta) = \|\bolddelta\|^2$, 
\eqref{eq.augsvm} is an SVM (without a bias term) using $X_{p}$ and $X_{q}$
as positive and negative samples respectively, except the presences of log normalization terms $\log\hat{N}(\boldsymbol{\delta})$ and $\log\hat{N}_{2}(\boldsymbol{\delta})$.
\end{proof}

\subsection{Proof of Proposition\label{subsec:minma} \ref{prop:minma}}
\begin{proof}
To prove the statement, we construct the dual of \eqref{eq:objquantile}
which has the exactly same form as \eqref{eq:min_max}. Denote $\boldsymbol{X}_{p}=\left[\boldsymbol{x}_{p}^{(1)},\dots,\boldsymbol{x}_{p}^{(n_{p})}\right]\in\mathbb{R}^{d\times n_{p}}$
and $\boldsymbol{X}_{q}=\left[\boldsymbol{x}_{q}^{(1)},\dots,\boldsymbol{x}_{q}^{(n_{q})}\right]\in\mathbb{R}^{d\times n_{q}}$.

The Lagrangian of \eqref{eq:objquantile} can be written as 
\begin{align}
l(\boldsymbol{\alpha},\boldsymbol{\alpha}',\alpha'',\boldsymbol{\delta},t\boldsymbol{,\epsilon})=-\langle\boldsymbol{\alpha},\boldsymbol{\delta^{\top}X}_{p}-\log\widehat{N}(\boldsymbol{\delta}) \cdot \boldone-t\cdot\boldsymbol{1}+\boldsymbol{\epsilon}\rangle\nonumber \\
-\langle\boldsymbol{\alpha}',\boldsymbol{\epsilon}\rangle-\alpha''\cdot t+\frac{1}{n_{p}}\langle\boldsymbol{\epsilon},\boldsymbol{1}\rangle-\nu\cdot t+\lambda R(\boldsymbol{\delta})\label{eq:lagrangian}
\end{align}
where $\boldsymbol{\alpha}\in\mathbb{R}_{+}^{n_{p}},\boldsymbol{\alpha}'\in\mathbb{R}_{+},\alpha''\in\mathbb{R}_{+}.$
Now we analyze the KKT condition of the above Lagrangian.

Suppose the optimal $\hat{t}>0$\footnote{if $t=0$ is the optimal and assume $R(\boldzero) = 0$, we only have a trivial solution $\boldsymbol{\delta}=\boldsymbol{0},\boldsymbol{\epsilon}=\boldsymbol{0}$,
which is easy to verify and rules out.}, then $\alpha''=0$ by the slackness condition that $t'\alpha'' = 0$. The optimality condition of $t$ in \eqref{eq:lagrangian}
yields: 
\begin{equation}
\nabla_{t}l(\boldsymbol{\alpha},\boldsymbol{\alpha}',\alpha'',\boldsymbol{\delta,}t\boldsymbol{,\epsilon})=\left\langle \boldsymbol{\alpha},\boldsymbol{1}\right\rangle -\nu=0\rightarrow\sum_{i=1}^{n_{p}}\alpha_{i}=\nu,\label{eq:optimality_t}
\end{equation}
and the optimality condition of $\boldsymbol{\epsilon}$ yields 
\begin{align}
\nabla_{\boldsymbol{\epsilon}}l(\boldsymbol{\alpha},\boldsymbol{\alpha}',\alpha'',\boldsymbol{\delta},t\boldsymbol{,}\boldsymbol{\epsilon})=\boldsymbol{0} & \rightarrow-\boldsymbol{\alpha}-\boldsymbol{\alpha}'+\frac{1}{n_{p}}\cdot\mathbf{1}=\boldsymbol{0}\label{eq:optimal.alpha}
\end{align}
From \eqref{eq:optimality_t} and \eqref{eq:optimal.alpha}, and the
slackness condition of optimization \eqref{eq:objsimple}, we can
see $\boldsymbol{x}_{p}^{(i)}\in\mathbf{X}_{p}$, if $\log\hat{r}(\boldsymbol{x}_{p}^{(i)};\boldsymbol{\delta})<t$,
then $\epsilon_{i}>0$ which leads to $\alpha'_{i}=0$ (the constraint
of $\epsilon_{i}\ge0$ is ineffective) and thus $\alpha_{i}=\frac{1}{n_{p}}$.

In contrast, if $\log\hat{r}(\boldsymbol{x}_{p}^{(i)};\boldsymbol{\delta})>t,$
then we have $\alpha_{i}=0,\epsilon_{i}=0$ (the constraint of $\epsilon_{i}\ge0$
is \emph{effective}). If $\log\hat{r}(\boldsymbol{x}_{p}^{(i)};\boldsymbol{\delta})$
falls right on the boundary $t$, i.e., $\log r(\boldsymbol{x}_{p}^{(i)};\boldsymbol{\delta})=t$,
$\alpha_{i}\in[0,\frac{1}{n_{p}}]$, since the KKT condition $\epsilon_{i}\alpha'_{i}=0$
indicating $\alpha'_{i}$ can take non-negative values as long as
$\frac{1}{n_{p}}\cdot\mathbf{1}=\boldsymbol{\boldsymbol{\alpha}+\boldsymbol{\alpha}'}$.
We summarize: 
\begin{equation}
\begin{cases}
\alpha_{i}=\frac{1}{n_{p}} & \log\hat{r}(\boldsymbol{x}_{p}^{(i)};\boldsymbol{\delta})<t\\
0\le\alpha_{i}\le\frac{1}{n_{p}} & \log\hat{r}(\boldsymbol{x}_{p}^{(i)};\boldsymbol{\delta})=t\\
\alpha_{i}=0 & \log\hat{r}(\boldsymbol{x}_{p}^{(i)};\boldsymbol{\delta})>t.
\end{cases}\label{eq:alpha_conditions}
\end{equation}

It can be observed that for \eqref{eq:objquantile}, $(\boldsymbol{\delta}=\boldsymbol{0},\boldsymbol{\epsilon}=0.2\cdot\boldsymbol{1},t=0.1)$
is a feasible interior point, and it makes all inequality constraints
strict, so the \emph{Slater's condition} holds for our original primal
problem which is also convex. Therefore, the lagrangian dual of the
original problem \eqref{eq:objquantile} is 
\begin{align}
 & \min_{\boldsymbol{\delta}}\max_{\boldsymbol{\alpha}\ge0,\boldsymbol{\alpha}'\ge0,\alpha''\ge0}\min_{\boldsymbol{\epsilon},t}\;l(\boldsymbol{\alpha},\boldsymbol{\alpha}',\alpha'',\boldsymbol{\delta,}t\boldsymbol{,\epsilon})\nonumber \\
= & \min_{\boldsymbol{\delta}}\max_{\boldsymbol{\alpha}}-\langle\boldsymbol{\alpha},\boldsymbol{\delta^{\top}X_{p}}-\log\widehat{N}(\boldsymbol{\delta})\rangle+\lambda R(\boldsymbol{\delta})\\
s.t. & \boldsymbol{\alpha}\in\left[0,\frac{1}{n_{p}}\right]^{n_{p}},\langle\boldsymbol{1},\boldsymbol{\alpha}\rangle=\nu.\label{eq:lagrangiandual}
\end{align}
which is the same as \eqref{eq:min_max} and any points satisfy the
KKT condition are both dual \eqref{eq:lagrangiandual} and primal
\eqref{eq:objquantile} optimal. 
\end{proof}

\subsection{Lemma \ref{lem:RSC}}
\begin{lem}
	\label{lem:RSC}If Assumptions \ref{assu:.boundedrq} and \ref{assu:RSC}
	hold, then 
	\begin{align}
	\label{eq:RSClem}
	-\boldsymbol{u}^{\top}\nabla_{\boldsymbol{\delta}}^{2}\mathcal{L}(\boldsymbol{\delta}^{*}+\boldsymbol{u},\boldsymbol{w}^{*})\boldsymbol{u}\ge\frac{\nu\kappa'_{1}}{2C_{r}^{2}}\|\boldsymbol{u}\|^{2}-\frac{\nu c}{2C_{r}^{2}}\cdot\frac{\|\boldsymbol{u}\|_{1}^{2}}{\sqrt{n_{q}}},
	\end{align}
\end{lem}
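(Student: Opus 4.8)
The plan is to recognize $-\nabla^{2}_{\boldsymbol{\delta}}\mathcal{L}(\cdot,\boldw^{*})$ as a nonnegative weighted empirical covariance of the $Q$-sample, to lower‑bound that weighted covariance by a multiple of the ordinary empirical covariance $\mathrm{cov}(\boldX_q)$ using boundedness of the tilting weights (Assumption~\ref{assu:.boundedrq}), and then to invoke the RSC condition (Assumption~\ref{assu:RSC}). Here $\boldw^{*}$ is the fixed optimal weight vector of the relevant setting, so that $\langle\boldw^{*},\boldone\rangle=\nu$.

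First I would differentiate. Since $\mathcal{L}(\boldsymbol{\delta},\boldw)=\sum_i w_i\bigl(\langle\boldsymbol{\delta},\boldsymbol{x}_p^{(i)}\rangle-\log\widehat N(\boldsymbol{\delta})\bigr)$ is affine in $\boldsymbol{\delta}$ except through the $-\log\widehat N$ term, and the factor $1/n_q$ inside $\widehat N$ only shifts $\log\widehat N$ by an additive constant, one gets $\nabla^{2}_{\boldsymbol{\delta}}\mathcal{L}(\boldsymbol{\delta},\boldw^{*})=-\langle\boldw^{*},\boldone\rangle\,\nabla^{2}_{\boldsymbol{\delta}}\log\widehat N(\boldsymbol{\delta})=-\nu\,\nabla^{2}_{\boldsymbol{\delta}}\log\widehat N(\boldsymbol{\delta})$. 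The log‑partition identity then gives $\nabla^{2}_{\boldsymbol{\delta}}\log\widehat N(\boldsymbol{\delta})=\sum_j\pi_j(\boldsymbol{\delta})(\boldsymbol{x}_q^{(j)}-\bar{\boldsymbol{x}}_\pi)(\boldsymbol{x}_q^{(j)}-\bar{\boldsymbol{x}}_\pi)^{\top}$, the empirical covariance of $\{\boldsymbol{x}_q^{(j)}\}$ under the tilted weights $\pi_j(\boldsymbol{\delta})=\frac{\exp\langle\boldsymbol{\delta},\boldsymbol{x}_q^{(j)}\rangle}{\sum_k\exp\langle\boldsymbol{\delta},\boldsymbol{x}_q^{(k)}\rangle}$, with $\bar{\boldsymbol{x}}_\pi=\sum_j\pi_j(\boldsymbol{\delta})\boldsymbol{x}_q^{(j)}$. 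Abbreviating $\pi_j:=\pi_j(\boldsymbol{\delta}^{*}+\boldu)$, $y_j:=\langle\boldu,\boldsymbol{x}_q^{(j)}\rangle$, $\bar y_\pi:=\sum_j\pi_j y_j$, and using the pairwise (Gini) form of the variance,
\[
-\boldu^{\top}\nabla^{2}_{\boldsymbol{\delta}}\mathcal{L}(\boldsymbol{\delta}^{*}+\boldu,\boldw^{*})\,\boldu=\nu\sum_j\pi_j(y_j-\bar y_\pi)^{2}=\frac{\nu}{2}\sum_{j,k}\pi_j\pi_k(y_j-y_k)^{2}.
\]

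The key step is comparing this with $\mathrm{cov}(\boldX_q)$. For $\boldu\in\mathrm{Ball}(\rho)$, Assumption~\ref{assu:.boundedrq} gives $c_1\le\exp\langle\boldsymbol{\delta}^{*}+\boldu,\boldsymbol{x}_q^{(j)}\rangle\le c_2$ for every $j$, so the normalizing sum is at most $n_q c_2$ and hence $\pi_j\ge c_1/(n_q c_2)=1/(n_q C_r)$ for all $j$. Substituting $\pi_j\pi_k\ge 1/(n_q^{2}C_r^{2})$ into the display and using $\frac{1}{2n_q^{2}}\sum_{j,k}(y_j-y_k)^{2}=\frac{1}{n_q}\sum_j(y_j-\bar y)^{2}=\boldu^{\top}\mathrm{cov}(\boldX_q)\boldu$ (with $\bar y$ the unweighted mean), we get $-\boldu^{\top}\nabla^{2}_{\boldsymbol{\delta}}\mathcal{L}(\boldsymbol{\delta}^{*}+\boldu,\boldw^{*})\boldu\ge\frac{\nu}{C_r^{2}}\boldu^{\top}\mathrm{cov}(\boldX_q)\boldu$. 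Since $\boldu^{\top}\mathrm{cov}(\boldX_q)\boldu\ge0$ and $C_r>1$, this is also $\ge\frac{\nu}{2C_r^{2}}\boldu^{\top}\mathrm{cov}(\boldX_q)\boldu$; applying Assumption~\ref{assu:RSC} to the right‑hand side yields exactly $-\boldu^{\top}\nabla^{2}_{\boldsymbol{\delta}}\mathcal{L}(\boldsymbol{\delta}^{*}+\boldu,\boldw^{*})\boldu\ge\frac{\nu\kappa'_1}{2C_r^{2}}\|\boldu\|^{2}-\frac{\nu c}{2C_r^{2}}\cdot\frac{\|\boldu\|_1^{2}}{\sqrt{n_q}}$.

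The derivative computation and the Gini identity are routine. The main thing to be careful about is the comparison step: obtaining a lower bound on the tilted weights that holds uniformly over all of $\mathrm{Ball}(\rho)$ — which is precisely what Assumption~\ref{assu:.boundedrq} supplies — and making the comparison with $\mathrm{cov}(\boldX_q)$ as a quadratic‑form inequality, so that the nonnegativity of $\mathrm{cov}(\boldX_q)$ makes the (deliberately slack) passage from $1/C_r^{2}$ to $1/(2C_r^{2})$ cost nothing on the $\ell_1$ term. Should a setting arise in which $\langle\boldw^{*},\boldone\rangle\neq\nu$, that scalar simply propagates unchanged through the first step.
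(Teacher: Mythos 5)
Your proof is correct and follows essentially the same route as the paper's: differentiate to reduce $-\nabla^{2}_{\boldsymbol{\delta}}\mathcal{L}$ to $-\nu\nabla^{2}\log\widehat{N}$, write the resulting tilted covariance in pairwise (Gini) form, lower-bound the tilted weights by $1/(n_q C_r)$ via Assumption~\ref{assu:.boundedrq}, compare with $\mathrm{cov}(\boldX_q)$, and invoke Assumption~\ref{assu:RSC}. If anything, you are slightly more careful than the paper at the comparison step, where the pairwise identity actually yields the constant $\nu/C_r^{2}$ and the stated $\nu/(2C_r^{2})$ follows by the (harmless) slackening you make explicit.
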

where $c$ is the constant determined by Assumption
\ref{assu:RSC}. 

\begin{proof}
	First, we write down -$\nabla_{\boldsymbol{\delta}}^{2}\mathcal{L}(\boldsymbol{\delta}^{*}+\boldsymbol{u},\boldsymbol{w}^{*})$:
	\begin{align*}
	-\nabla_{\boldsymbol{\delta}}^{2}\mathcal{L}(\boldsymbol{\delta}^{*}+\boldsymbol{u},\boldsymbol{w}^{*}) & =-\nabla^{2}\sum_{i=1}^{n_{p}}w_{i}^{*}\cdot\log\hat{r}(\boldsymbol{x}_{p}^{(i)};\boldsymbol{\delta^{*}+u})\\
	&=-\sum_{i=1}^{n_{p}}w_{i}^{*}\cdot\nabla ^2  \log \widehat{N}(\bolddelta^*+\boldu)\\
	& = -\nu\cdot\nabla^{2}\log \widehat{N}(\bolddelta^{*}+\boldu),\\
	& = \nu\cdot\sum_{i=1}^{n_{q}}\frac{e^{(i)}}{s}\cdot\boldsymbol{x}_{q}^{(i)}\left(\boldsymbol{x}_{q}^{(i)}\right)^{\top}-\nu\cdot\left\{ \sum_{i=1}^{n_{q}}\frac{e^{(i)}}{s}\cdot\left(\boldsymbol{x}_{q}^{(i)}\right)\right\} \left\{ \sum_{i=1}^{n_{q}}\frac{e^{(i)}}{s}\cdot\left(\boldsymbol{x}_{q}^{(i)}\right)\right\} ^{\top}
	\end{align*}
	where $e^{(j)}:=\exp\left[\langle\boldsymbol{\delta}^{*}+\boldsymbol{u},\boldsymbol{x}^{(j)}\rangle\right],s:=\sum_{j=1}^{n_{q}}e^{(j)}$.
	\begin{align*}
	& \nu\boldsymbol{u}^{\top}\left\{ \sum_{i=1}^{n_{q}}\frac{e^{(i)}}{s}\cdot\boldsymbol{x}^{(i)}\left(\boldsymbol{x}^{(i)}\right)^{\top}-\left\{ \sum_{i=1}^{n_{q}}\frac{e^{(i)}}{s}\cdot\left(\boldsymbol{x}^{(i)}\right)\right\} \left\{ \sum_{i=1}^{n_{q}}\frac{e^{(i)}}{s}\cdot\left(\boldsymbol{x}^{(i)}\right)\right\} ^{\top}\right\} \boldsymbol{u}\\
	= & \frac{\nu}{2}\boldsymbol{u}^{\top}\left\{ \sum_{i=1}^{n_{q}}\sum_{j\ne i}\frac{e^{(i)}e^{(j)}}{s^{2}}\left(\boldsymbol{x}^{(i)}-\boldsymbol{x}^{(j)}\right)\left(\boldsymbol{x}^{(i)}-\boldsymbol{x}^{(j)}\right)^{\top}\right\} \boldsymbol{u}
	\end{align*}
	Due to Assumption \ref{assu:.boundedrq}, $\frac{e^{(i)}}{s}\ge\frac{1}{C_{r}n_{q}}$. Let  $\xi_{i,j}=\left(\boldsymbol{x}^{(i)}-\boldsymbol{x}^{(j)}\right)\left(\boldsymbol{x}^{(i)}-\boldsymbol{x}^{(j)}\right)^{\top},$
	then we have the following inequalities 
	\begin{align*}
	\frac{\nu}{2}\boldsymbol{u}^{\top}\left\{ \sum_{i=1}^{n_{q}}\sum_{j\ne i}\frac{e^{(i)}e^{(j)}}{s^{2}}\xi_{i,j}\right\} \boldsymbol{u}
	\ge \frac{\nu}{2C_{r}^{2}}\boldsymbol{u}^{\top}\left\{ \frac{1}{n_{q}^{2}}\sum_{i=1}^{n_{q}}\sum_{j\ne i}\xi_{i,j}\right\} \boldsymbol{u}
	= \frac{\nu}{2C_{r}^{2}}\boldsymbol{u}^{\top} \mathrm{cov}(\boldX_q) \boldsymbol{u}
	\end{align*}
%
	We then invoke Assumption \ref{assu:RSC} to obtain
	$
	\frac{\nu}{2C_{r}^{2}}\boldsymbol{u}^{\top} \mathrm{cov}(\boldX_q) \boldsymbol{u} \ge \frac{\nu\kappa'_{1}}{2C_{r}^{2}}\|\boldsymbol{u}\|^{2}-\frac{\nu c}{2C_{r}^{2}\sqrt{n_{q}}}\|\boldsymbol{u}\|_{1}^{2}.
	$
\end{proof}

\subsection{Proof of Theorem\label{subsec:theorem1} \ref{thm:main}}
\begin{proof}
First, we define the $S$ and $S^{c}$ are the set of indices of non-zero
and zero elements of $\boldsymbol{\bolddelta}^{*}$. The cardinlity
of $S$ is $k$.

Define $\hat{\boldsymbol{u}}:=\hat{\boldsymbol{\delta}}-\boldsymbol{\delta}^{*}$.
From the Lemma \ref{lem:RSC} we can see that, 
\begin{align*}
	\langle\nabla_{\boldsymbol{\delta}}\mathcal{L}(\hat{\boldsymbol{\delta}},\boldsymbol{w}^{*})-\nabla_{\boldsymbol{\delta}}\mathcal{L}(\boldsymbol{\delta}^{*},\boldsymbol{w}^{*}),\hat{\boldsymbol{u}}\rangle \ge\kappa_{1}\|\hat{\boldsymbol{u}}\|^{2}-\tau_{1}(n,d)\|\hat{\boldsymbol{u}}\|_{1}^{2},
\end{align*}
where we set $\kappa_{1}:=\frac{\nu\kappa'_{1}}{2C_{r}^{2}},\tau_{1}(n,d):=\frac{\nu c}{2C_{r}^{2}\sqrt{n_{q}}}$. Using Holder's inequality,
\begin{align*}
\langle\nabla_{\boldsymbol{\delta}}\mathcal{L}(\hat{\boldsymbol{\delta}},\boldsymbol{w}^{*}),\hat{\boldsymbol{u}}\rangle+\|\nabla_{\boldsymbol{\delta}}\mathcal{L}(\boldsymbol{\delta}^{*},\boldsymbol{w}^{*})\|_{\infty}\|\hat{\boldsymbol{u}}\|_{1} & +\tau_{1}(n,d)\rho\|\hat{\boldsymbol{u}}\|_{1}\ge\kappa_{1}\|\hat{\boldsymbol{u}}\|^{2}.
\end{align*}
The introduction of $\rho$ is due to the bounded optimization region.
Due to \eqref{eq:ass2}, we can convert the above inequality
into 
\begin{align*}
\langle\nabla_{\boldsymbol{\delta}}\mathcal{L}(\hat{\boldsymbol{\boldsymbol{\delta}}},\hat{\boldsymbol{w}}),\hat{\boldsymbol{u}}\rangle+\kappa_{2}\|\hat{\boldsymbol{u}}\|^{2}+\tau_{2}(n,d)\|\hat{\boldsymbol{u}}\|_{1}+\|\nabla_{\boldsymbol{\delta}}\mathcal{L}(\boldsymbol{\delta}^{*},\boldsymbol{w}^{*})\|_{\infty}\|\hat{\boldsymbol{u}}\|_{1}+\rho\tau_{1}(n,d)\|\hat{\boldsymbol{u}}\|_{1}\ge\kappa_{1}\|\hat{\boldsymbol{u}}\|^{2},
\end{align*}
and because of the setting of $\lambda_{n}$, 
\begin{align}
\langle\nabla_{\boldsymbol{\delta}}\mathcal{L}(\hat{\boldsymbol{\delta}},\hat{\boldsymbol{w}}),\hat{\boldsymbol{u}}\rangle+\frac{\lambda_{n}}{2}\|\hat{\boldsymbol{u}}\|_{1}\ge(\kappa_{1}-\kappa_{2})\|\hat{\boldsymbol{u}}\|^{2},\label{eq:ub1}
\end{align}

Note that in the first term, $\hat{\boldsymbol{\delta}}$ is obtained
at the stationary condition, which implies that there is a subgradient, denoted by $\nabla\|\hat{\bolddelta}\|_1$, such that
\begin{align*}
\nabla_{\boldsymbol{\delta}}\mathcal{L}(\hat{\boldsymbol{\delta}},\hat{\boldsymbol{w}})=-\lambda_{n}\nabla_{\boldsymbol{\delta}}\|\hat{\boldsymbol{\delta}}\|_{1}=-\lambda_{n}\nabla_{\boldsymbol{\delta}}\|\hat{\boldsymbol{u}}+\boldsymbol{\delta}^{*}\|_{1,}
\end{align*}
(the second $\nabla$ is the subgradient notation) thus we can obtain
the upper-bound of $\langle\nabla_{\boldsymbol{\delta}}\mathcal{L}(\hat{\boldsymbol{\delta}},\hat{\boldsymbol{w}}),\boldsymbol{\hat{u}}\rangle$
using the following standard procedure: 
\begin{align}
\langle\nabla_{\boldsymbol{\delta}}\mathcal{L}(\hat{\boldsymbol{\delta}},\hat{\boldsymbol{w}}),\boldsymbol{\hat{u}}\rangle & =-\lambda_{n}\langle\nabla_{\boldsymbol{\delta}}\|\hat{\boldsymbol{u}}+\boldsymbol{\delta}^{*}\|_{1},\hat{\boldsymbol{u}}\rangle\nonumber \\
 & \le-\lambda_{n}(\|\hat{\boldsymbol{\delta}}\|_{1}-\|\boldsymbol{\delta}^{*}\|_{1})\text{ due to convexity of }\|\bolddelta\|_1 \text{ and the definition of subgradient. }\nonumber \\
 & =\lambda_{n}(\|\boldsymbol{\delta}^{*}\|_{1}+\|\hat{\boldsymbol{u}}_{S^{c}}\|_{1}-\|\hat{\boldsymbol{u}}_{S^{c}}\|_{1}-\|\hat{\boldsymbol{\delta}}\|_{1})\nonumber \\
 & =\lambda_{n}(\|\boldsymbol{\delta}^{*}+\hat{\boldsymbol{u}}_{S^{c}}\|_{1}-\|\hat{\boldsymbol{u}}_{S^{c}}\|_{1}-\|\hat{\boldsymbol{\delta}}\|_{1})\nonumber \\
 & =\lambda_{n}(\|\boldsymbol{\delta}^{*}+\hat{\boldsymbol{u}}_{S^{c}}\|_{1}+\|\hat{\boldsymbol{u}}_{S}\|_{1}-\|\hat{\boldsymbol{u}}_{S}\|_{1}-\|\hat{\boldsymbol{u}}_{S^{c}}\|_{1}-\|\hat{\boldsymbol{\delta}}\|_{1})\nonumber \\
 & \le\lambda_{n}(\|\boldsymbol{\delta}^{*}+\hat{\boldsymbol{u}}_{S}+\hat{\boldsymbol{u}}_{S^{c}}\|_{1}+\|\hat{\boldsymbol{u}}_{S}\|_{1}-\|\hat{\boldsymbol{u}}_{S^{c}}\|_{1}-\|\hat{\boldsymbol{\delta}}\|_{1})\nonumber \\
 & \le\lambda_{n}(\|\hat{\boldsymbol{u}}_{S}\|_{1}-\|\hat{\boldsymbol{u}}_{S^{c}}\|_{1})\label{eq:ub2}
\end{align}
Combining \eqref{eq:ub1} and \eqref{eq:ub2} we have 
\begin{align}
\lambda_{n}(\|\hat{\boldsymbol{u}}_{S}\|_{1}-\|\hat{\boldsymbol{u}}_{S^{c}}\|_{1})+\frac{\lambda_{n}}{2}\|\boldsymbol{\hat{u}}\|_{1} & \ge(\kappa_{1}-\kappa_{2})\|\boldsymbol{\hat{u}}\|^{2}\nonumber \\
\frac{3\lambda_{n}}{2}\|\hat{\boldsymbol{u}}_{S}\|_{1}-\frac{\lambda_{n}}{2}\|\hat{\boldsymbol{u}}_{S^{c}}\|_{1} & \ge(\kappa_{1}-\kappa_{2})\|\boldsymbol{\hat{u}}\|^{2}\label{eq:cs3}\\
\frac{3\lambda_{n}\sqrt{k}}{2}\|\hat{\boldsymbol{u}}\|_{2} & \ge(\kappa_{1}-\kappa_{2})\|\boldsymbol{\hat{u}}\|^{2}\nonumber \\
\frac{1}{(\kappa_{1}-\kappa_{2})}\cdot\frac{3\sqrt{k}\lambda_{n}}{2} & \ge\|\boldsymbol{\hat{u}}\|.\nonumber 
\end{align}
Substituting $\kappa_1$ and $\tau_{1}(n,d)$ according to Lemma \ref{lem:RSC}, we have the conclusion in Theorem \ref{thm:main}.
\end{proof}

\subsection{Proof of Theorem \ref{cor:outlier}}
\label{sec:proofcol1}
Now let's specify $\kappa_{2}$ and $\tau_{2}$ in Theorem
\ref{thm:main} under the outlier setting and derive the consistency.


Let's consider \eqref{eq:ass2}. It is easy to see that 
\[
\nabla_{\boldsymbol{\delta}}\mathcal{L}(\hat{\bolddelta},\hat{\boldw})-\nabla_{\boldsymbol{\delta}}\mathcal{L}(\hat{\bolddelta},\boldsymbol{w}^{*})=\sum_{i\in\hat{G}}w_{i}\boldsymbol{f}(\boldx_{p}^{(i)})-\frac{1}{n_{p}}\sum_{i\in G}\boldsymbol{f}(\boldx_{p}^{(i)}),\text{where }\hat{G}:=\{\boldx_{p}^{(i)}|\hat{w}_{i}\neq0\}.
\]
It is obvious that if $\hat{G}\equiv G$ and $\forall {i\in\hat{G}}, \hat{w}_{i} = \frac{1}{np}$, and $\forall i\in B$, $\hat{w}_i = 0$,
$\nabla_{\boldsymbol{\delta}}\mathcal{L}(\hat{\bolddelta},\hat{\boldw})-\nabla_{\boldsymbol{\delta}}\mathcal{L}(\hat{\bolddelta},\boldsymbol{w}^{*})=0$.
 
\begin{lem}
	If there exists a ``clearance'' between the good samples
	and the bad samples, such that $\min_{j\in B}z_{\boldsymbol{\delta}^{*}}(\boldsymbol{x}_{p}^{(j)})-\max_{i\in G}z_{\boldsymbol{\delta}^{*}}(\boldsymbol{x}_{p}^{(i)})\ge3C_{\mathrm{lip}}\rho,$
	then $\nabla_{\boldsymbol{\delta}}\mathcal{L}(\hat{\bolddelta},\hat{\boldw})-\nabla_{\boldsymbol{\delta}}\mathcal{L}(\hat{\bolddelta},\boldsymbol{w}^{*})=0$. 
\end{lem}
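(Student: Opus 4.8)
The plan is to prove something slightly stronger than the stated conclusion: under the clearance condition the selected index set $\hat{G}$ coincides exactly with $G$ and carries uniform weights $1/n_p$, so in fact $\hat{\boldw}=\boldw^{*}$, which makes the difference of gradients vanish via the identity displayed just before the lemma. The backbone of the argument is a one-line triangle inequality combined with the Lipschitz Assumption~\ref{ass.ratio.lip-1}.

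First I would record the elementary observation that, for a \emph{fixed} parameter $\bolddelta$, the population and empirical log-ratios differ only by an additive constant, $z_{\bolddelta}(\boldx)-\hat{z}_{\bolddelta}(\boldx)=\log\widehat{N}(\bolddelta)-\log N(\bolddelta)$, which does not depend on $\boldx$. Hence the ordering of the points $\boldx_p^{(i)}$ induced by $z_{\bolddelta}$ is identical to the one induced by $\hat{z}_{\bolddelta}$, and the clearance hypothesis may be read with $\hat{z}_{\bolddelta^{*}}$ in place of $z_{\bolddelta^{*}}$, i.e. $\min_{j\in B}\hat{z}_{\bolddelta^{*}}(\boldx_p^{(j)})-\max_{i\in G}\hat{z}_{\bolddelta^{*}}(\boldx_p^{(i)})\ge 3C_{\mathrm{lip}}\rho$.

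The key step is to show that for every $i\in G$ and $j\in B$ we have $\hat{z}_{\hat{\bolddelta}}(\boldx_p^{(i)})<\hat{z}_{\hat{\bolddelta}}(\boldx_p^{(j)})$. Writing $\hat{\boldu}:=\hat{\bolddelta}-\bolddelta^{*}$, which satisfies $\|\hat{\boldu}\|_{1}\le\rho$ because the optimization of $\bolddelta$ is confined to $\mathrm{Ball}(\rho)$ with $\bolddelta^{*}\in\mathrm{Ball}(\rho)$, I would decompose
\begin{align*}
\hat{z}_{\hat{\bolddelta}}(\boldx_p^{(j)})-\hat{z}_{\hat{\bolddelta}}(\boldx_p^{(i)})
&= \bigl[\hat{z}_{\bolddelta^{*}+\hat{\boldu}}(\boldx_p^{(j)})-\hat{z}_{\bolddelta^{*}}(\boldx_p^{(j)})\bigr] + \bigl[\hat{z}_{\bolddelta^{*}}(\boldx_p^{(j)})-\hat{z}_{\bolddelta^{*}}(\boldx_p^{(i)})\bigr] \\
&\quad + \bigl[\hat{z}_{\bolddelta^{*}}(\boldx_p^{(i)})-\hat{z}_{\bolddelta^{*}+\hat{\boldu}}(\boldx_p^{(i)})\bigr].
\end{align*}
Assumption~\ref{ass.ratio.lip-1} lower-bounds the first and third brackets by $-C_{\mathrm{lip}}\|\hat{\boldu}\|_{1}\ge-C_{\mathrm{lip}}\rho$, while the reinterpreted clearance bounds the middle bracket by $3C_{\mathrm{lip}}\rho$; adding, the sum is at least $C_{\mathrm{lip}}\rho>0$, giving the strict separation.

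Finally I would conclude. Since $\hat{\boldw}$ solves the inner minimization $\min_{\boldw\in[0,1/n_p]^{n_p},\,\langle\boldone,\boldw\rangle=\nu}\sum_{i}w_i\hat{z}_{\hat{\bolddelta}}(\boldx_p^{(i)})$ with $\nu n_p=|G|$, any optimal $\boldw$ concentrates its mass, up to the box constraint, on the points with the \emph{smallest} values of $\hat{z}_{\hat{\bolddelta}}$. By the previous step there is a strict gap separating the $|G|$ values attained on $G$ from every value attained on $B$, so the $|G|$ smallest are exactly the points of $G$; with $w_i\le 1/n_p$, $w_i=0$ off $G$, and $\langle\boldone,\boldw\rangle=|G|/n_p$, this forces $\hat{w}_i=1/n_p$ for all $i\in G$ — an exchange argument rules out splitting mass across the gap. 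Thus $\hat{G}=G$ and $\hat{\boldw}=\boldw^{*}$, so $\nabla_{\bolddelta}\mathcal{L}(\hat{\bolddelta},\hat{\boldw})-\nabla_{\bolddelta}\mathcal{L}(\hat{\bolddelta},\boldw^{*})=\boldzero$. The main obstacle is the bookkeeping around the inner linear program at the saddle point: I must exclude ``mixed'' optimal weight vectors (handled by the strict gap plus the exchange argument) and must be careful that $\hat{\boldu}$ genuinely lies in $\mathrm{Ball}(\rho)$ so that Assumption~\ref{ass.ratio.lip-1} applies — this is precisely where the ``mild technical conditions'' and the choice of $\rho$ enter. Everything else is the triangle-inequality computation above.
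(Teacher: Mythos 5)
Your proof is correct and follows essentially the same route as the paper's: rewrite the clearance in terms of $\hat{z}_{\bolddelta^{*}}$ (they differ from $z_{\bolddelta^{*}}$ only by an additive constant), use Assumption~\ref{ass.ratio.lip-1} to show the strict $G$/$B$ separation persists for every perturbation in $\mathrm{Ball}(\rho)$, and then invoke the optimality of the inner linear program to force $\hat{G}=G$ with all weights equal to $1/n_{p}$, i.e.\ $\hat{\boldw}=\boldw^{*}$. Your version merely makes explicit the triangle-inequality decomposition and the exchange argument that the paper leaves implicit.
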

\begin{proof}
	\begin{align}
	\min_{j\in B}z_{\boldsymbol{\delta}^{*}}(\boldsymbol{x}_{p}^{(j)})-\max_{i\in G}z_{\boldsymbol{\delta}^{*}}(\boldsymbol{x}_{p}^{(i)}) & =\min_{j\in B}\hat{z}_{\boldsymbol{\delta}^{*}}(\boldsymbol{x}^{(j)})-\max_{i\in G}\hat{z}_{\boldsymbol{\delta}^{*}}(\boldsymbol{x}^{(i)})\ge3C_{\mathrm{lip}}\rho\label{eq:wellsep}
	\end{align}
	Due to Assumption \ref{ass.ratio.lip-1} and \eqref{eq:wellsep},
	\begin{align}
	\forall i & \in G,j\in B,\text{and }\boldu\in\mathrm{Ball}(\rho),\;\hat{z}{}_{\boldsymbol{\delta}^{*}+\boldsymbol{u}}(\boldx^{(j)})>\hat{z}{}_{\boldsymbol{\delta}^{*}+\boldsymbol{u}}(\boldx^{(i)}).\label{eq:ranking}
	\end{align}
	According to the optimality condition of \eqref{eq:min_max}, we should
	simply assign non-zero weights $w_{i}$ to the $\nu n_{p}$ samples
	corresponding to the smallest $\hat{z}{}_{\boldsymbol{\delta}^{*}+\boldsymbol{u}}$
	values. Therefore, from \eqref{eq:ranking} we can see that $\hat{G}=G$.
	Moreover, since the inequality of \eqref{eq:ranking} holds strictly
	and $\nu=\frac{|G|}{n_{p}}=\frac{|\hat{G}|}{n_{p}}$, all weights
	must be set to $\frac{1}{n_{p}}$ in order to minimize the inner problem
	of \eqref{eq:min_max}, i.e., $\forall i\in G$, $\hat{w}_i = \frac{1}{n_p}$ and $\forall i\in B$, $\hat{w}_i = 0$.
\end{proof}
Now we can set $\kappa_{2}=0,\tau_{2}(n,d)=0$ to make \eqref{eq:ass2} hold.

As explained in Section \eqref{subsec:Preparations}, we need to confirm$\|\nabla_{\boldsymbol{\delta}}\mathcal{L}(\boldsymbol{\delta}^{*},\boldsymbol{w}^{*})\|_{\infty}$
converges to $\text{0}$ as the sample size goes to inifinity where
$\nabla_{\boldsymbol{\delta}}\mathcal{L}(\boldsymbol{\delta}^{*},\boldsymbol{w}^{*})=\frac{1}{n_{p}}\sum_{i\in G}\nabla_{\boldsymbol{\delta}}\hat{z}_{\boldsymbol{\delta}^{*}}(\boldsymbol{x}_{p}^{(i)}).$
Since 
\begin{align*}
\|\frac{1}{n_{p}}\sum_{i\in G}\nabla_{\boldsymbol{\delta}}\hat{z}_{\boldsymbol{\delta}^{*}}(\boldsymbol{x}_{p}^{(i)})\|_{\infty}\le\frac{1}{\nu}\cdot\|\frac{1}{n_{p}}\sum_{i\in G}\nabla_{\boldsymbol{\delta}}\hat{z}_{\boldsymbol{\delta}^{*}}(\boldsymbol{x}_{p}^{(i)})\|_{\infty}=\|\frac{1}{|G|}\sum_{i\in G}\nabla_{\boldsymbol{\delta}}\hat{z}_{\boldsymbol{\delta}^{*}}(\boldsymbol{x}_{p}^{(i)})\|_{\infty},
\end{align*}
we only need to bound $\left\Vert \frac{1}{|G|}\sum_{i\in G}\nabla_{\boldsymbol{\delta}}\hat{z}_{\boldsymbol{\delta}^{*}}(\boldsymbol{x}_{p}^{(i)})\right\Vert _{\infty}$.
As samples in $G$ are i.i.d. samples drawn from $P$, here can we invoke
the Lemma 2 from \cite{Liu2016a}. 
First we need the following conditions:
\begin{assumption}
	\label{ass:lemma21}
	For any vector $\boldu \in \mathbb{R}^{\text{dim}(\bolddelta^*)}$ such that $\bolddelta^* + \boldu\in \mathrm{Ball}(\rho)$, the Hessian of the likelihood function, $\nabla^2 \mathcal{L}(\bolddelta^*+\boldu)$, has a bounded spectral norm, i.e., 
	$
		\|\nabla^2 \mathcal{L}(\bolddelta^* + \boldu)\| \le \lambda_\mathrm{max}.
	$
\end{assumption}
\begin{assumption}[Smooth Density Ratio Model Assumption]
	\label{ass:lemma22}
	For any vector $\boldu \in \mathbb{R}^{\text{dim}(\bolddelta^*)}$ such that $\bolddelta^* + \boldu\in \mathrm{Ball}(\rho)$ and every $a\in \mathbb{R}$, the following inequality holds:
	\begin{align*}
	\mathbb{E}_q \left[\exp\left( a\left( r(\boldx, \bolddelta^* + \boldu) - 1 \right)\right) \right]  \le \exp\left(Ka^2\right).
	\end{align*}
\end{assumption}
If $n_{q}=\Omega(|G|^{2})$, and $\lambda_n \ge  \sqrt{\frac{K_1\log d}{|G|}}$, according to Lemma 2 from \cite{Liu2016a} we have 
\begin{align}
\label{eq:nablarate}
P\left(\left\Vert \frac{1}{|G|}\sum_{i\in G}\nabla_{\boldsymbol{\delta}}\hat{z}_{\boldsymbol{\delta}^{*}}(\boldsymbol{x}_{p}^{(i)})\right\Vert {}_{\infty} \ge \lambda_n \right) \le \exp\left( - c_1 |G| \right),
\end{align}
where $K_1$ and $c_1$ are constants. 
Finally, we can re-state the Theorem \ref{thm:main} using $\kappa_{2} = 0$, $\tau_{2} = 0$ and \eqref{eq:nablarate} to obtain Theorem \ref{cor:outlier}. 

\subsection{Proof of Theorem \ref{cor:truncationsetting}}
\label{sec:proofoftruncation}
First we verify \eqref{eq:ass2}. 
\begin{lem}
	\label{lem:boundeddiff2}Under Assumptions \ref{ass.ratio.lip} and
	\ref{assu:regP}, 
	\begin{align*}
	\|\nabla_{\boldsymbol{\delta}}\mathcal{L}(\hat{\boldsymbol{\delta}},\hat{\boldw})-\nabla_{\boldsymbol{\delta}}\mathcal{L}(\hat{\bolddelta},\boldsymbol{w}^{*})\|_{\infty}\le2C_{\mathrm{CDF}}\cdot\|\boldsymbol{u}\|C_{p}+\frac{2L\cdot C_{p}}{\sqrt{n_{p}}},
	\end{align*}
	where $L$ is a positive constant. The second term reflects the cost
	of using the empirical sample to control the $\nu$-th quantile in
	\eqref{eq:ranking}. 
\end{lem}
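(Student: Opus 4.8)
The plan is to write the gradient gap as a sum over the indices on which the two weight vectors disagree, and then show this disagreement set is confined to a thin band around the population $\nu$-quantile $t(\bolddelta^{*})$. Since $\mathcal{L}(\bolddelta,\boldw)=\sum_{i}w_{i}\hat z_{\bolddelta}(\boldx_{p}^{(i)})$ with $\hat z_{\bolddelta}(\boldx)=\langle\bolddelta,\boldf(\boldx)\rangle-\log\widehat N(\bolddelta)$,
\[
\nabla_{\bolddelta}\mathcal{L}(\hat\bolddelta,\hat\boldw)-\nabla_{\bolddelta}\mathcal{L}(\hat\bolddelta,\boldw^{*})=\sum_{i\in D}(\hat w_{i}-w_{i}^{*})\,\nabla_{\bolddelta}\hat z_{\hat\bolddelta}(\boldx_{p}^{(i)}),\qquad D:=\{i:\hat w_{i}\ne w_{i}^{*}\},
\]
because both $\hat\boldw$ and $\boldw^{*}$ take values in $\{0,\tfrac{1}{n_{p}}\}$. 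First I would recall from \eqref{eq:saddlepoint} that $\hat\boldw$ places mass $\tfrac1{n_p}$ on the $\nu n_{p}$ points with smallest $\hat z_{\hat\bolddelta}(\boldx_{p}^{(i)})$, whereas $\boldw^{*}$ places mass $\tfrac1{n_p}$ exactly on $\overline{X}^{p}(\bolddelta^{*})=\{i:z_{\bolddelta^{*}}(\boldx_{p}^{(i)})<t(\bolddelta^{*})\}$; since $z_{\bolddelta}$ and $\hat z_{\bolddelta}$ differ by an $\boldx$-independent shift, the ordering of the $\boldx_{p}^{(i)}$ is the same under either, and this shift cancels when the two cutoff rules are compared on a common scale.

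The core step is to prove $D\subseteq T(\hat\boldu,\epsilon)\cap X_{p}$ with $\hat\boldu:=\hat\bolddelta-\bolddelta^{*}$ (the ``$\boldu$'' of the statement) and $\epsilon=O(1/\sqrt{n_{p}})$. Here I would combine two ingredients. First, the $\ell_{2}$-Lipschitz bound of Assumption~\ref{ass.ratio.lip} controls the per-point change of $\hat z_{\bolddelta}(\boldx)$ between $\bolddelta^{*}$ and $\hat\bolddelta$ by $C_{\mathrm{lip}}\|\hat\boldu\|$; writing $\hat z_{\hat\bolddelta}(\boldx)=z_{\bolddelta^{*}}(\boldx)+\text{const}+e(\boldx)$ with $|e(\boldx)|\le C_{\mathrm{lip}}\|\hat\boldu\|$, monotonicity of order statistics under a pointwise $\pm C_{\mathrm{lip}}\|\hat\boldu\|$ perturbation shows that the $\nu n_{p}$-th order statistic of $\{\hat z_{\hat\bolddelta}(\boldx_{p}^{(j)})\}$ equals the empirical $\nu$-quantile of $\{z_{\bolddelta^{*}}(\boldx_{p}^{(j)})\}$ up to the constant and up to $\pm C_{\mathrm{lip}}\|\hat\boldu\|$. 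Second, the empirical $\nu$-quantile of $z_{\bolddelta^{*}}$ is within $\pm\epsilon_{0}$ of $t(\bolddelta^{*})$ with $\epsilon_{0}=O(1/\sqrt{n_{p}})$, by standard quantile concentration together with an anti-concentration property of $z_{\bolddelta^{*}}$ near its $\nu$-quantile (one of the ``mild technical conditions'' of Theorem~\ref{cor:truncationsetting}). Chaining these, any $\boldx_{p}^{(i)}$ with $|z_{\bolddelta^{*}}(\boldx_{p}^{(i)})-t(\bolddelta^{*})|>2C_{\mathrm{lip}}\|\hat\boldu\|+\epsilon$ is ranked on the same side of the cutoff by both $\hat\boldw$ and $\boldw^{*}$, hence lies outside $D$, giving the claimed inclusion with $\epsilon=\epsilon_{0}$.

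It then remains to bound
\[
\Big\|\sum_{i\in D}(\hat w_{i}-w_{i}^{*})\nabla_{\bolddelta}\hat z_{\hat\bolddelta}(\boldx_{p}^{(i)})\Big\|_{\infty}\le\frac{|D|}{n_{p}}\,\max_{i\in D}\big\|\nabla_{\bolddelta}\hat z_{\hat\bolddelta}(\boldx_{p}^{(i)})\big\|_{\infty}.
\]
Since $\epsilon\le1$ forces $D\subseteq T(\hat\boldu,1)\cap X_{p}$, Assumption~\ref{assu:regP} gives $\|\boldf(\boldx_{p}^{(i)})\|_{\infty}\le C_{p}$ for $i\in D$; and as $\nabla_{\bolddelta}\hat z_{\hat\bolddelta}(\boldx)=\boldf(\boldx)-\nabla_{\bolddelta}\log\widehat N(\hat\bolddelta)$ with $\nabla_{\bolddelta}\log\widehat N(\hat\bolddelta)$ a convex combination of the $\boldf(\boldx_{q}^{(j)})$ (each of $\ell_{\infty}$-norm $\le C_{q}$), each such gradient has $\ell_{\infty}$-norm at most $C_{p}+C_{q}\le 2C_{p}$ after relabelling the larger constant as $C_{p}$. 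Finally $|D|/n_{p}$ is at most the empirical mass of $T(\hat\boldu,\epsilon)$; since the sets $T(\hat\boldu,\epsilon)$ form a nested one-parameter family of intervals in the value $z_{\bolddelta^{*}}$, a standard uniform deviation bound combined with the CDF bound of Assumption~\ref{ass.ratio.lip} gives empirical mass $\le C_{\mathrm{CDF}}\|\hat\boldu\|+\epsilon+O(1/\sqrt{n_{p}})\le C_{\mathrm{CDF}}\|\hat\boldu\|+L/\sqrt{n_{p}}$ for a suitable constant $L$, which yields the stated bound $2C_{\mathrm{CDF}}C_{p}\|\hat\boldu\|+2LC_{p}/\sqrt{n_{p}}$.

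The hard part will be the second ingredient above: pinning down that the empirical $\nu$-quantile of $z_{\hat\bolddelta}$ over $X_{p}$ — perturbed both by the gap $\hat\bolddelta-\bolddelta^{*}$ and by sampling noise in $X_{p}$ — stays within $O(1/\sqrt{n_{p}})$ of $t(\bolddelta^{*})$, since this single estimate determines both $\epsilon$ and the constant $L$, and it is precisely where the regularity assumptions of Theorem~\ref{cor:truncationsetting} (density/anti-concentration of $z_{\bolddelta^{*}}$ near its $\nu$-quantile, plus a CDF concentration bound uniform over $\bolddelta^{*}+\mathrm{Ball}(\rho)$) must be invoked with care. The remaining pieces — the weight-difference decomposition, the order-statistic/ranking argument, and the $C_{p}+C_{q}$ gradient bound — are routine once the band width is fixed.
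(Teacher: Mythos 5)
Your proposal follows essentially the same route as the paper's proof: decompose the gradient gap over the set of indices where $\hat{\boldsymbol{w}}$ and $\boldsymbol{w}^{*}$ disagree, show via the Lipschitz bound of Assumption \ref{ass.ratio.lip} plus Dvoretzky--Kiefer--Wolfowitz quantile concentration that this set (including the fractional-weight borderline points) lies in the band $T(\hat{\boldsymbol{u}},O(1/\sqrt{n_{p}}))$, and then bound its empirical mass by $C_{\mathrm{CDF}}\|\hat{\boldsymbol{u}}\|+L/\sqrt{n_{p}}$ and each feature norm by $C_{p}$ via Assumption \ref{assu:regP}. The only cosmetic difference is bookkeeping of the factor $2$ (the paper gets it from treating the misplaced set and the borderline set as two separate terms, while you get it from absorbing the $\nabla\log\widehat{N}$ contribution into $2C_{p}$), which does not change the argument.
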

Therefore 
\begin{align*}
\langle\nabla_{\boldsymbol{\delta}}\mathcal{L}(\hat{\bolddelta},\hat{\boldw})-\nabla_{\boldsymbol{\delta}}\mathcal{L}(\hat{\bolddelta},\boldsymbol{w}^{*}),\boldsymbol{u}\rangle\ge & -\left(2C_{\mathrm{CDF}}\cdot\|\boldsymbol{u}\|C_{p}+\frac{2L\cdot C_{p}}{\sqrt{n_{p}}}\right)\|\boldsymbol{u}\|^2_{1}\\
\ge & -2\sqrt{k}C_{\mathrm{CDF}}C_{p}\|\boldsymbol{u}\|^{2}-\frac{2L\cdot C_{p}\|\boldsymbol{u}\|_{1}}{\sqrt{n_{p}}}.
\end{align*}
It can be seen that $\kappa_{2}=2\sqrt{k}C_{\mathrm{CDF}}C_{p},\tau_{2}(n,d)=\frac{2L\cdot C_{p}}{\sqrt{n_{p}}}.$
The proof of Lemma \ref{lem:boundeddiff2} uses a fact that only $\boldsymbol{x}_{p}$
in the ``zone'' $T(\boldsymbol{u},\frac{L_{1}}{\sqrt{n_{p}}})$
are ``dangerous'' as they may be mistakenly included or missed out
under small perturbation of $\boldsymbol{u}$. See Section \ref{subsec:lemma3}
in Appendix for the proof.

To show $\|\nabla_{\boldsymbol{\delta}}\mathcal{L}(\boldsymbol{\delta}^{*},\boldsymbol{w}^{*})\|_\infty\rightarrow0,$
we need some extra procedures since $z_{\boldsymbol{\delta}^{*}}(\boldsymbol{x}_{q})$
are not necessarily upper-bounded by $t(\boldsymbol{\delta}^{*})$. 
The following lemma bounds $\|\nabla_{\boldsymbol{\delta}}\mathcal{L}(\boldsymbol{\delta}^{*},\boldsymbol{w}^{*})\|_{\infty}$. 
\begin{lem}
	\label{lem:supnorm}Under Assumptions \ref{assu:.boundedrq}, \ref{assu:regP}, \ref{ass:lemma21} and \ref{ass:lemma22} holds, and if 
	\begin{align}
	\label{eq:nablarate2}
	\lambda_n \ge \sqrt{\frac{K_1'\log d}{|\overline{X}{}^{p}(\boldsymbol{\delta}^{*})|}}+\frac{2C_{r}^{2}C_{q}|X_{q}\backslash\overline{X}^{q}(\boldsymbol{\delta}^{*})|}{n_{q}}
	\end{align}	
	$\|\nabla_{\boldsymbol{\delta}}\mathcal{L}(\boldsymbol{\delta}^{*},\boldsymbol{w}^{*})\|_{\infty} \le \lambda_n$ with probability at least $1 - \exp(c'_1 |\overline{X}{}^{p}(\boldsymbol{\delta}^{*})|))$, where $c'_1$ and $K'_1$ are constants, 
\end{lem}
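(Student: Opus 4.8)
The plan is to split $\nabla_{\boldsymbol{\delta}}\mathcal{L}(\boldsymbol{\delta}^{*},\boldsymbol{w}^{*})$ into a part that behaves like an ordinary KLIEP gradient evaluated on the \emph{truncated} samples, plus a purely deterministic bias coming from the fact that the normalizer $\widehat{N}(\boldsymbol{\delta}^{*})$ in $\hat{r}$ averages over all of $X_{q}$ rather than only over $\overline{X}^{q}(\boldsymbol{\delta}^{*})$. Dropping the feature map as in the appendix, the optimal weight $\boldsymbol{w}^{*}$ gives $\nabla_{\boldsymbol{\delta}}\mathcal{L}(\boldsymbol{\delta}^{*},\boldsymbol{w}^{*})=\frac{1}{n_{p}}\sum_{i\in\overline{X}^{p}(\boldsymbol{\delta}^{*})}\bigl(\boldsymbol{x}_{p}^{(i)}-\tfrac{\sum_{j\in X_{q}}e^{(j)}\boldsymbol{x}_{q}^{(j)}}{\sum_{j\in X_{q}}e^{(j)}}\bigr)$, where $e^{(j)}=\exp\langle\boldsymbol{\delta}^{*},\boldsymbol{x}_{q}^{(j)}\rangle$. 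First I would replace the self-normalized average $\tfrac{\sum_{j\in X_{q}}e^{(j)}\boldsymbol{x}_{q}^{(j)}}{\sum_{j\in X_{q}}e^{(j)}}$ by $\tfrac{\sum_{j\in\overline{X}^{q}(\boldsymbol{\delta}^{*})}e^{(j)}\boldsymbol{x}_{q}^{(j)}}{\sum_{j\in\overline{X}^{q}(\boldsymbol{\delta}^{*})}e^{(j)}}$ and estimate the error. Expanding the difference of two self-normalized averages, bounding $e^{(j)}\in[c_{1},c_{2}]$ by Assumption \ref{assu:.boundedrq} (so each denominator is at least $c_{1}$ times its sample count) and $\|\boldsymbol{f}(\boldsymbol{x}_{q})\|_{\infty}\le C_{q}$ by Assumption \ref{assu:regP}, the $\ell_{\infty}$ error is $O\!\bigl(|X_{q}\backslash\overline{X}^{q}(\boldsymbol{\delta}^{*})|/n_{q}\bigr)$ with a constant involving $C_{r}$ and $C_{q}$ (matching the $2C_{r}^{2}C_{q}$ coefficient inside the $\lambda_{n}$ bound \eqref{eq:nablarate2}), and it vanishes as soon as the fraction of $X_{q}$ outside the truncated domain decays.

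After this substitution the remaining quantity equals $\tfrac{|\overline{X}^{p}(\boldsymbol{\delta}^{*})|}{n_{p}}\cdot\tfrac{1}{|\overline{X}^{p}(\boldsymbol{\delta}^{*})|}\sum_{i\in\overline{X}^{p}(\boldsymbol{\delta}^{*})}\nabla_{\boldsymbol{\delta}}\overline{z}_{\boldsymbol{\delta}^{*}}(\boldsymbol{x}_{p}^{(i)})$, where $\overline{z}_{\boldsymbol{\delta}}$ denotes the log-ratio model whose normalizer is the empirical average of $e^{\langle\boldsymbol{\delta},\cdot\rangle}$ over $\overline{X}^{q}(\boldsymbol{\delta}^{*})$. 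Conditioned on the realized index sets, $\{\boldsymbol{x}_{p}^{(i)}\}_{i\in\overline{X}^{p}(\boldsymbol{\delta}^{*})}$ and $\{\boldsymbol{x}_{q}^{(j)}\}_{j\in\overline{X}^{q}(\boldsymbol{\delta}^{*})}$ are i.i.d. from the truncated laws $\overline{p}_{\boldsymbol{\delta}^{*}}$ and $\overline{q}_{\boldsymbol{\delta}^{*}}$; moreover the defining relation $\overline{q}_{\boldsymbol{\delta}^{*}}(\boldsymbol{x})r(\boldsymbol{x};\boldsymbol{\delta}^{*})=\overline{p}_{\boldsymbol{\delta}^{*}}(\boldsymbol{x})$, integrated over $\overline{X}(\boldsymbol{\delta}^{*})$, forces $N(\boldsymbol{\delta}^{*})$ of \eqref{eq:ratio_model} to coincide with $\mathbb{E}_{\overline{q}_{\boldsymbol{\delta}^{*}}}[e^{\langle\boldsymbol{\delta}^{*},\boldsymbol{x}\rangle}]$, so $\boldsymbol{\delta}^{*}$ is exactly the population optimum of the restricted KLIEP problem and $\mathbb{E}_{\overline{p}_{\boldsymbol{\delta}^{*}}}[\nabla_{\boldsymbol{\delta}}\overline{z}_{\boldsymbol{\delta}^{*}}(\boldsymbol{x})]=\boldsymbol{0}$. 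I would then invoke Lemma 2 of \cite{Liu2016a} with sample sizes $|\overline{X}^{p}(\boldsymbol{\delta}^{*})|$ and $|\overline{X}^{q}(\boldsymbol{\delta}^{*})|$ — this is where the bounded-Hessian Assumption \ref{ass:lemma21} and the exponential-moment Assumption \ref{ass:lemma22} on the ratio model enter — obtaining $\|\tfrac{1}{|\overline{X}^{p}(\boldsymbol{\delta}^{*})|}\sum_{i}\nabla_{\boldsymbol{\delta}}\overline{z}_{\boldsymbol{\delta}^{*}}(\boldsymbol{x}_{p}^{(i)})\|_{\infty}\le\sqrt{K_{1}'\log d/|\overline{X}^{p}(\boldsymbol{\delta}^{*})|}$ with probability at least $1-\exp(-c_{1}'|\overline{X}^{p}(\boldsymbol{\delta}^{*})|)$; the hypothesis $n_{q}=\Omega(|\overline{X}^{p}(\boldsymbol{\delta}^{*})|^{2})$ of Theorem \ref{cor:truncationsetting}, together with $\nu$ bounded away from $0$ so that $|\overline{X}^{q}(\boldsymbol{\delta}^{*})|$ is a constant fraction of $n_{q}$, supplies the ``enough $Q$-samples'' requirement of the lemma. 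Since $|\overline{X}^{p}(\boldsymbol{\delta}^{*})|/n_{p}\le1$ the leading prefactor is harmless, and adding the deterministic bias from the first paragraph and comparing with \eqref{eq:nablarate2} finishes the proof by the triangle inequality.

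The main obstacle I anticipate is handling the data-dependent index sets rigorously: both the count $|\overline{X}^{p}(\boldsymbol{\delta}^{*})|$ and which samples fall inside the deterministic region $\overline{X}(\boldsymbol{\delta}^{*})$ are random, so the statement ``conditioned on the index sets the samples are i.i.d. from the truncated laws'' and a probability bound with a \emph{random} exponent $|\overline{X}^{p}(\boldsymbol{\delta}^{*})|$ both need care; I would condition on $|\overline{X}^{p}(\boldsymbol{\delta}^{*})|=m$, apply the concentration bound for each fixed $m$ (using that, given the event, the in-zone samples are i.i.d. from $\overline{p}_{\boldsymbol{\delta}^{*}},\overline{q}_{\boldsymbol{\delta}^{*}}$ and remain independent across the two samples), and then uncondition. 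A secondary, purely bookkeeping, difficulty is tracking the constants $c_{1},c_{2},C_{r},C_{q}$ through the difference-of-self-normalized-averages estimate so that the deterministic term emerges with exactly the coefficient $2C_{r}^{2}C_{q}$ of \eqref{eq:nablarate2}, since the random normalizer $\widehat{N}(\boldsymbol{\delta}^{*})$ appears in several denominators simultaneously.
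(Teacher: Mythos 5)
Your proposal follows essentially the same route as the paper's proof: the paper likewise splits the softmax-normalized average over $X_{q}$ into its restriction to $\overline{X}^{q}(\boldsymbol{\delta}^{*})$ plus a remainder over $X_{q}\backslash\overline{X}^{q}(\boldsymbol{\delta}^{*})$, bounds that remainder by $\frac{2C_{r}^{2}C_{q}|X_{q}\backslash\overline{X}^{q}(\boldsymbol{\delta}^{*})|}{n_{q}}$ using Assumptions \ref{assu:.boundedrq} and \ref{assu:regP}, and then applies Lemma 2 of \cite{Liu2016a} to the resulting KLIEP-type gradient over the i.i.d.\ truncated samples (valid by the definition of $\boldsymbol{\delta}^{*}$ as the optimum for the truncated densities), finishing with the triangle inequality. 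Your anticipated difficulties (the random index sets and the constant bookkeeping) are real but are handled in the paper exactly as you propose, so the argument is correct.
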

See Section \ref{subsec:lemmasupnorm} in Appendix for the proof.

Finally, we can restate Theorem \ref{thm:main} as Theorem \ref{cor:truncationsetting} using $\kappa_{1}=\frac{\nu\kappa'_{1}}{2C_{r}^{2}}, \tau_{1}(n,d)=\frac{\nu c}{2C_{r}^{2}\sqrt{n_{q}}}$, $\kappa_{2}=2\sqrt{k}C_{\mathrm{CDF}}C_{p},\tau_{2}(n,d)=\frac{2L\cdot C_{p}}{\sqrt{n_{p}}}$ and \eqref{eq:nablarate2}, making sure that $\kappa_1>\kappa_2$.

\subsection{Proof of Lemma\label{subsec:lemmasupnorm} \ref{lem:supnorm}}
\begin{figure}
\centering{}\includegraphics[width=0.6\textwidth]{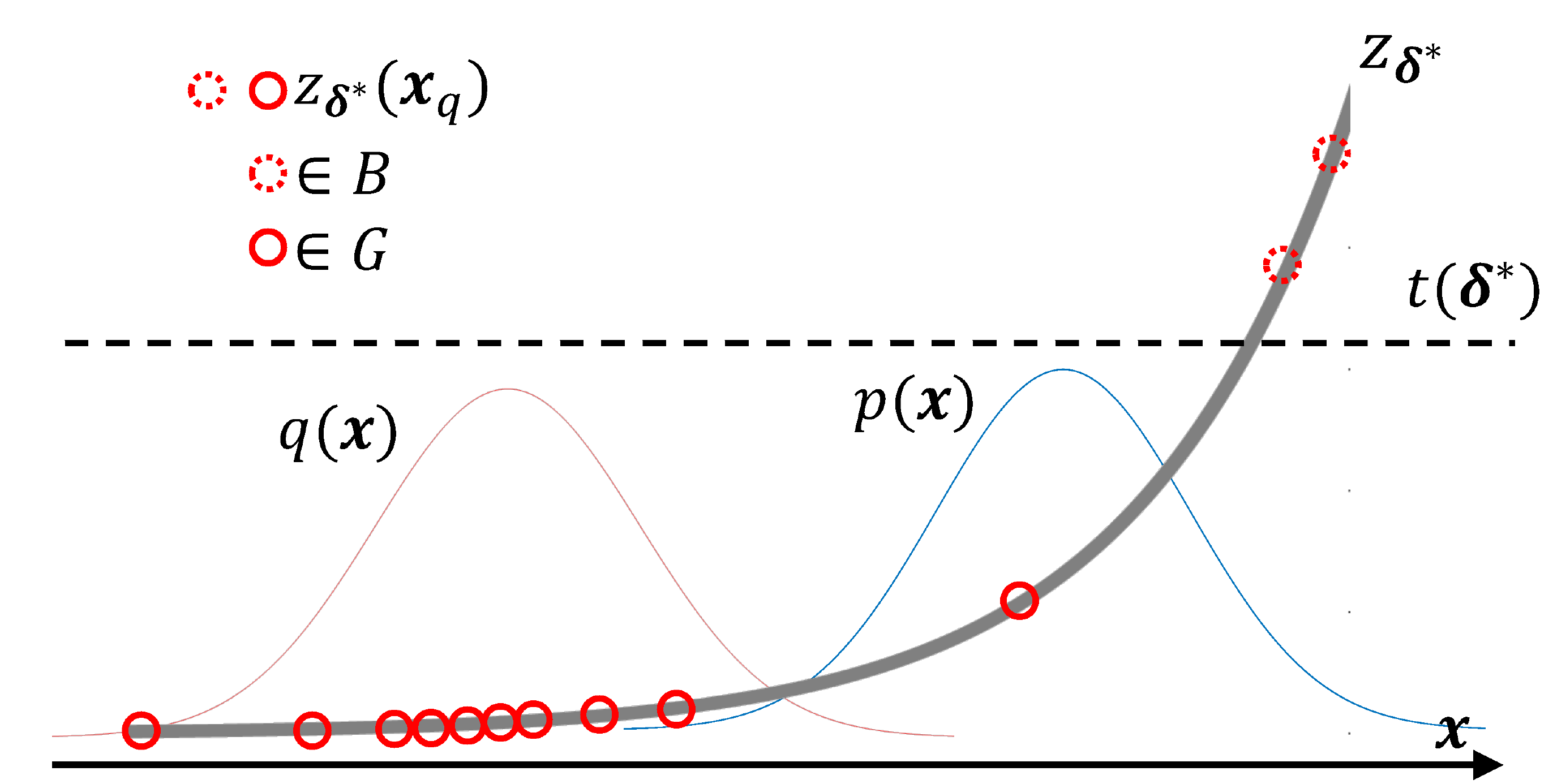}\caption{An illustration of $B$ and $G$ in the case of truncation setting.
In this setting, we treat $X_{q}\backslash\overline{X}{}^{q}(\boldsymbol{\delta}^{*})$
as a kind of outlier of $Q$ and only appear in very small quantity.
\label{fig:BandG} }
\end{figure}
First, we recycle some notations from the previous section: $G:=\overline{X}^{q}(\boldsymbol{\delta}^{*}),B:=X_{q}\backslash\overline{X}^{q}(\boldsymbol{\delta}^{*}).$
The reason for this arrangement can be seen from Figure \ref{fig:BandG}.

Denote $e^{(j)}:=\exp\left[\langle\boldsymbol{\delta}^{*},\boldsymbol{x}_q^{(j)}\rangle\right],s:=\sum_{j=1}^{n_{q}}e^{(j)}$
and $\bar{s}=\sum_{i\in G}e^{(i)}$. Note that 
\[
\nabla_{\boldsymbol{\delta}}\mathcal{L}(\boldsymbol{\delta}^{*},\boldsymbol{w}^{*})=\frac{1}{n_{p}}\sum_{i\in\overline{X}{}^{p}(\boldsymbol{\delta}^{*})}\left[\boldsymbol{x}_{p}^{(i)}-\nabla_{\boldsymbol{\text{\ensuremath{\delta}}}}\log\widehat{N}(\boldsymbol{\delta}^*)\right].
\]

\begin{align*}
 \|\nabla_{\boldsymbol{\delta}}\mathcal{L}(\boldsymbol{\delta}^{*},\boldsymbol{w}^{*})\|_{\infty}
= & \|\frac{1}{n_{p}}\sum_{i\in\overline{X}{}^{p}(\boldsymbol{\delta}^{*})}\left[\boldsymbol{x}_{p}^{(i)}-\nabla_{\boldsymbol{\delta}}\log\widehat{N}(\boldsymbol{\delta}^*)\right]\|_{\infty}\\
= & \frac{1}{n_{p}}\|\sum_{i\in\overline{X}{}^{p}(\boldsymbol{\delta}^{*})}\left[\boldsymbol{x}_{p}^{(i)}-\sum_{j=1}^{n_{q}}\frac{e^{(j)}}{s}\boldsymbol{x}_{q}^{(j)}\right]\|_{\infty}\\
= & \frac{1}{n_{p}}\|\sum_{i\in\overline{X}{}^{p}(\boldsymbol{\delta}^{*})}\left[\boldsymbol{x}_{p}^{(i)}-\sum_{j\in G}\frac{e^{(j)}}{s}\boldsymbol{x}_{q}^{(j)}-\sum_{j\in B}\frac{e^{(j)}}{s}\boldsymbol{x}_{q}^{(j)}\right]\|_{\infty}\\
= & \frac{1}{n_{p}}\|\sum_{i\in\overline{X}{}^{p}(\boldsymbol{\delta}^{*})}\left[\boldsymbol{x}_{p}^{(i)}-\frac{\bar{s}}{s}\sum_{j\in G}\frac{e^{(j)}}{\bar{s}}\boldsymbol{x}_{q}^{(j)}-\sum_{j\in B}\frac{e^{(j)}}{s}\boldsymbol{x}_{q}^{(j)}\right]\|_{\infty}\\
= & \frac{1}{n_{p}}\|\sum_{i\in\overline{X}{}^{p}(\boldsymbol{\delta}^{*})}\left[\boldsymbol{x}_{p}^{(i)}-\sum_{j\in G}\frac{e^{(j)}}{\bar{s}}\boldsymbol{x}_q^{(j)}+(1-\frac{\bar{s}}{s})\sum_{j\in G}\frac{e^{(j)}}{\bar{s}}\boldsymbol{x}_q^{(j)}-\sum_{j\in B}\frac{e^{(j)}}{s}\boldsymbol{x}_q^{(j)}\right]\|_{\infty}\\
\le & \underbrace{\frac{1}{n_{p}}\|\sum_{i\in\overline{X}{}^{p}(\boldsymbol{\delta}^{*})}\boldsymbol{x}_{p}^{(i)}-\sum_{j\in G}\frac{e^{(j)}}{\bar{s}}\boldsymbol{x}_q^{(j)}||_{\infty}}_{a(n,d)}
+||(1-\frac{\bar{s}}{s})\sum_{j\in G}\frac{e^{(j)}}{\bar{s}}\boldsymbol{x}_q^{(j)}-\sum_{j\in B}\frac{e^{(j)}}{s}\boldsymbol{x}_q^{(j)}\|_{\infty}\\
\le & a(n,d)+\frac{s-\bar{s}}{s}\sum_{j\in G}\|\frac{e^{(j)}}{\bar{s}}\boldsymbol{x}_q^{(j)}\|_{\infty}+\sum_{j\in B}\|\frac{e^{(j)}}{s}\boldsymbol{x}_q^{(j)}\|_{\infty}\\
\le & a(n,d)+\frac{C_{r}^{2}|B|}{n_{q}}\cdot\frac{1}{|G|}\sum_{j\in G}\|\boldsymbol{x}^{(j)}\|_{\infty}+\frac{C_{r}}{n_{q}}\sum_{j\in B}\|\boldsymbol{x}_q^{(j)}\|_{\infty}\\
\le & a(n,d)+\frac{C_{r}^{2}|B|C_{q}}{n_{q}}+\frac{C_{r}|B|C_{q}}{n_{q}}\\
\le & a(n,d)+\frac{2C_{r}^{2}|B|C_{q}}{n_{q}}
\end{align*}
Now, as $\overline{X}^p(\bolddelta^*)$ and $G$ contains only i.i.d. samples and due to the definition of $\bolddelta^*$, we can invoke Lemma 2 again from \cite{Liu2016a} to bound $a(n,d)$. That is if Assumptions \ref{ass:lemma21} and \ref{ass:lemma22} hold and $n_q = \Omega(n_p^2)$, and $\lambda_n \ge \sqrt{\frac{K'_1 \log d}{|\overline{X}{}^{p}(\boldsymbol{\delta}^{*})|}}$
\begin{align}
\label{eq:nablarate3}
P\left(a(n,d) \ge \lambda_n \right) \le \exp\left( - c'_1 |\overline{X}{}^{p}(\boldsymbol{\delta}^{*})| \right),
\end{align}
where $K'_1$ and $c'_1$ are constants. By taking the extra $\frac{2C_{r}^{2}|B|C_{q}}{n_{q}}$ into account, we obtain Lemma \ref{lem:supnorm}. 

\subsection{Proof of Lemma \label{subsec:lemma3} \ref{lem:boundeddiff2}}

Before we start, we need to define a few empirical counterparts of
population quantities used in Section \ref{subsec:trimmedsetting}.
\begin{itemize}
\item $P_{n}$ is the empirical distribution of $P$.
\item $\hat{t}(\boldsymbol{\delta})$ is the empirical version of $t(\boldsymbol{\delta})$
and is defined according to 
\[
P_{n_{p}}\left[\hat{z}{}_{\boldsymbol{\delta}}<\hat{t}{}_{\nu}(\boldsymbol{\delta}))|X_{q}\right]\le\nu,\:\:\:P_{n_{p}}\left[\hat{z}{}_{\boldsymbol{\delta}}\le\hat{t}{}_{\nu}(\boldsymbol{\delta}))|X_{q}\right]\ge\nu
\]
\item The set $\overline{X}_{n}(\boldsymbol{\delta})$ is similar to $\overline{X}(\boldsymbol{\delta})$ but defined by $\hat{z}$ and $\hat{t}$: 
\[
\overline{X}_{n}(\boldsymbol{\delta}):=\left\{ \boldsymbol{x}\in\mathbb{R}^{d}|\hat{z}{}_{\boldsymbol{\delta}}(\boldsymbol{x})<\hat{t}(\boldsymbol{\delta})\right\} .
\]
\item $\overline{X}_{n}^{p}(\boldsymbol{\delta}):=X_{p}\cap\overline{X}_{n}(\boldsymbol{\delta})$. 
\item The ``borderline points'' of $X_{p}$: $X_{\mathrm{border}}(\boldsymbol{\delta}):=\{\boldsymbol{x}\in X_{p}|\hat{z}{}_{\boldsymbol{\delta}}(\boldsymbol{x})=\hat{t}{}_{\nu}(\boldsymbol{\boldsymbol{\delta}}))\}$. 
\end{itemize}
\begin{proof}
We first expand $\|\nabla_{\boldsymbol{\delta}}\mathcal{L}(\hat{\boldsymbol{\delta}},\hat{\boldsymbol{w}})-\nabla_{\boldsymbol{\delta}}\mathcal{L}(\hat{\boldsymbol{\delta}},\boldsymbol{w}^{*})\|_{\infty}$
as 
\begin{align}
 & \|\nabla_{\boldsymbol{\delta}}\mathcal{L}(\boldsymbol{\delta}^{*}+\hat{\boldsymbol{u}},\hat{\boldsymbol{w}})-\nabla_{\boldsymbol{\delta}}\mathcal{L}(\boldsymbol{\delta}^{*}+\hat{\boldsymbol{u}},\boldsymbol{w}^{*})\|_{\infty}\nonumber \\
= & \|\sum_{i,w_{i}\neq0}\hat{w}_{i}\boldsymbol{x}_{p}^{(i)}-\frac{1}{n_{p}}\sum_{i\in\overline{X}^{p}(\boldsymbol{\delta}^{*})}\boldsymbol{x}_{p}^{(i)}\|_{\infty}\nonumber \\
\le & \frac{1}{n_{p}}\sum_{i\in\underbrace{\overline{X}_{n}^{p}(\boldsymbol{\delta}^{*}+\hat{\boldsymbol{u}})\backslash\overline{X}^{p}(\boldsymbol{\delta}^{*})}_{M_{1}(\hat{\boldsymbol{u}})}}\|\boldsymbol{x}_{p}^{(i)}\|_{\infty}+\frac{1}{n_{p}}\sum_{i\in\underbrace{\overline{X}^{p}(\boldsymbol{\delta}^{*})\backslash\overline{X}_{n}^{p}(\boldsymbol{\delta}^{*}+\hat{\boldsymbol{u}})}_{M_{2}(\hat{\boldsymbol{u}})}}\|\boldsymbol{x}_{p}^{(i)}\|_{\infty}\nonumber \\
 & \qquad\qquad\qquad\qquad\qquad\qquad\qquad\;\;\:\:+\frac{1}{n_{p}}\sum_{i\in X_{\mathrm{border}}(\boldsymbol{\delta}^{*}+\hat{\boldsymbol{u}})}\|\boldsymbol{x}_{p}^{(i)}\|_{\infty}\nonumber \\
= & \frac{1}{n_{p}}\sum_{i\in M(\hat{\boldu})}\|\boldsymbol{x}_{p}^{(i)}\|_{\infty}+\frac{1}{n_{p}}\sum_{i\in X_{\mathrm{border}}(\boldsymbol{\delta}^{*}+\hat{\boldsymbol{u}})}\|\boldsymbol{x}_{p}^{(i)}\|_{\infty},\label{eq:aa}
\end{align}
where $M(\boldu):=M_{1}(\boldu)\cup M_{2}(\boldu),\text{ given }\boldu\in\mathrm{Ball}(\rho).$
Note we isolate the borderline points $X_{\mathrm{border}}$ in our
analysis as they may have interior weights, i.e., $w_{i}\text{\ensuremath{\in}}[0,\frac{1}{n_{p}}].$

We first figure out the cardinality of $M(\boldu)$, a set where samples
are \emph{likely }to be ``misplaced'' to the other set under a small
perturbation. However, direct quantifying $M(\boldu)$ is hard but
we now show that $M(\boldu)\subseteq X_{p}\cap T(\boldsymbol{u},\epsilon)$
whose cardinality is bounded by our assumptions. See Figure \ref{fig:The-relationship-of}
for details.  
\begin{figure}
\centering{}\includegraphics[width=0.7\textwidth]{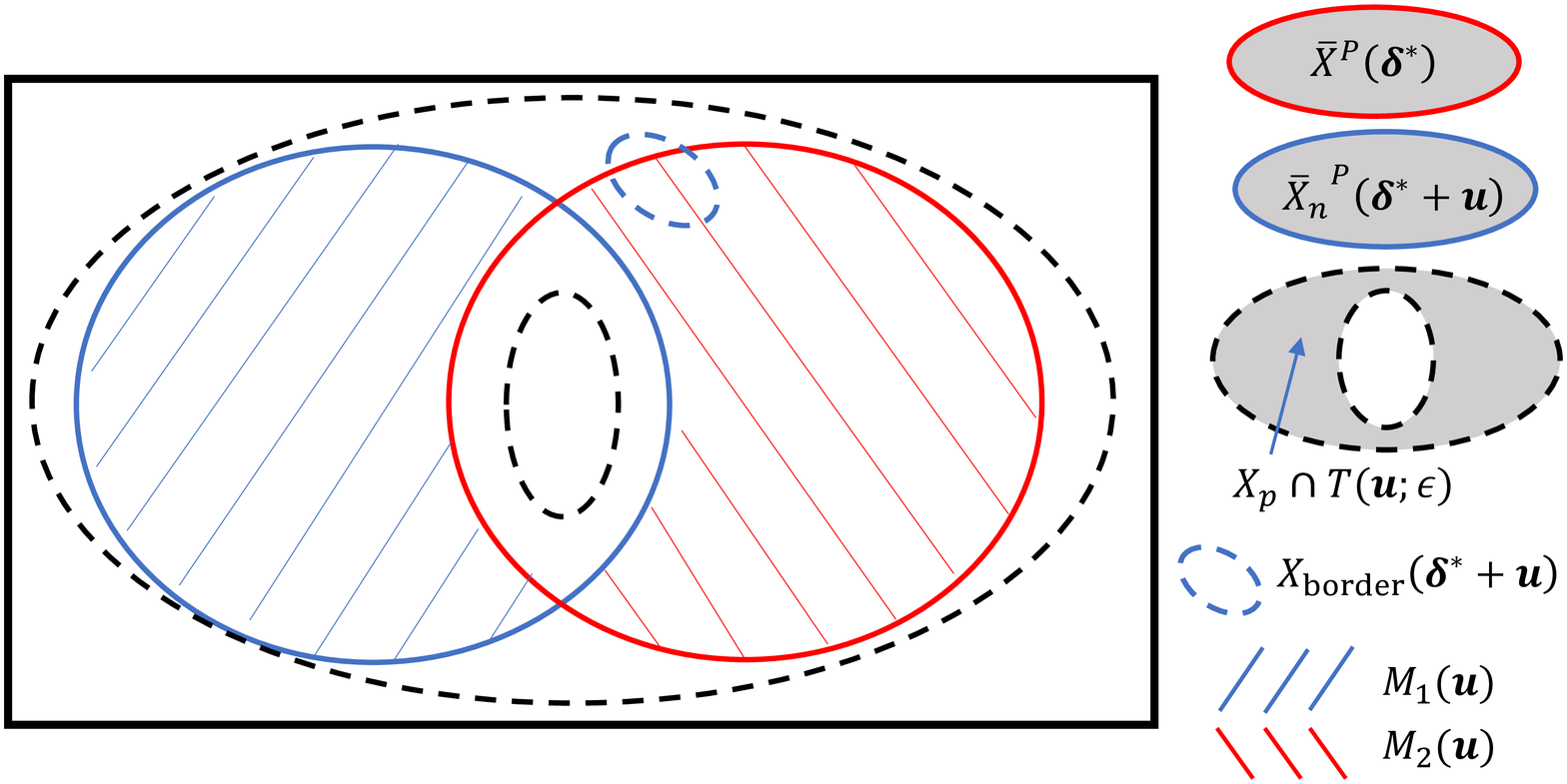}\caption{The relationship of $\overline{X}^{P}(\boldsymbol{\delta}^{*})$,
$\overline{X}_{n}^{P}(\boldsymbol{\delta}^{*}+\boldsymbol{u})$, $X_{p}\cap T(\boldsymbol{u},\epsilon)$
and $X_{\mathrm{border}}(\boldsymbol{\delta}^{*}+\boldsymbol{u})$.\label{fig:The-relationship-of}}
\end{figure}

First, we show that if $z_{\boldsymbol{\delta}^{*}}(\boldsymbol{x}_{p})\ge t(\boldsymbol{\delta}^{*})+2C_{\mathrm{lip}}||\boldsymbol{u}||+\epsilon,$
then $\boldsymbol{x}_{p}\notin\overline{X}^{p}(\boldsymbol{\delta}^{*})\cup\overline{X}_{n}^{p}(\boldsymbol{\delta}^{*}+\boldsymbol{u})$.As we will see, $\epsilon \in (0,1)$ is chosen afterwards.

Under this setting, obviously, $\boldsymbol{x}_{p}\notin\overline{X}^{p}(\boldsymbol{\delta}^{*})$, thus it is suffice to show that $\boldsymbol{x}_{p}\notin\overline{X}_n^{p}(\boldsymbol{\delta}^{*} + \boldu)$. 
Note that for any constant c, the quantile of $z':=z(\bolddelta^*)+c $ is $t(\bolddelta^*)+c$. 

Since $z_{\boldsymbol{\delta}^{*}}$
and $\hat{z}_{\boldsymbol{\delta}^{*}}$ differ only by their normalization
functions, we have $z_{\boldsymbol{\delta}^{*}}(\boldsymbol{x}_{p})-t(\boldsymbol{\delta}^{*})=\hat{z}_{\boldsymbol{\delta}^{*}}(\boldsymbol{x}_{p})-t'(\boldsymbol{\delta}^{*})$,
where $t'(\boldsymbol{\delta}^{*})$ is defined as $P\left[\hat{z}{}_{\boldsymbol{\delta}^*}< t'{}_{\nu}(\boldsymbol{\delta}^*))|X_{q}\right]\le\nu$ and 
$P\left[\hat{z}{}_{\boldsymbol{\delta}^*}\le t'{}_{\nu}(\boldsymbol{\delta}^*))|X_{q}\right]\ge\nu$
for a given $X_{q}$, so we have $\hat{z}_{\boldsymbol{\delta}^{*}}(\boldsymbol{x}_{p})\ge t'(\boldsymbol{\delta}^{*})+2C_{\mathrm{lip}}||\boldsymbol{u}||+\epsilon.$
Combining this inequality with Assumption \ref{ass.ratio.lip}, we
have 
\begin{equation}
\hat{z}_{\boldsymbol{\delta}^{*}+\boldsymbol{u}}(\boldsymbol{x}_{p})\ge\hat{z}_{\boldsymbol{\delta}^{*}}(\boldsymbol{x}_{p})-C_{\mathrm{lip}}||\boldsymbol{u}||\ge t'(\boldsymbol{\delta}^{*})+C_{\mathrm{lip}}||\boldsymbol{u}||+\epsilon\label{eq:bb}
\end{equation}
From Dvoretzky–Kiefer–Wolfowitz inequality if $n_{p}$ is large enough,
with high probability $\left|t'(\boldsymbol{\delta}^{*})-\hat{t}(\boldsymbol{\delta}^{*})\right|\le\frac{L_{1}}{\sqrt{n_{p}}}\le1$
which is independent of the choice of $X_{q}$. Thus we set $\epsilon=\frac{L_{1}}{\sqrt{n_{p}}}$,
and 
\begin{equation}
t'(\boldsymbol{\delta}^{*})+\frac{L_{1}}{\sqrt{n_{p}}}+C_{\mathrm{lip}}||\boldsymbol{u}\|\ge\hat{t}(\boldsymbol{\delta}^{*})+C_{\mathrm{lip}}||\boldsymbol{u}||\text{ w.h.p.}\label{eq:cc}
\end{equation}
From Assumption \ref{ass.ratio.lip}, $\hat{z}_{\boldsymbol{\delta}^{*}+\boldsymbol{u}}$
and $\hat{z}_{\boldsymbol{\delta}^{*}}$ differ only by $C_{\mathrm{lip}}\|\boldsymbol{u}\|$,
which means their $\nu$-percentile $\hat{t}(\boldsymbol{\delta}^{*}+\boldsymbol{u})$
and $\hat{t}(\boldsymbol{\delta}^{*})$ differ by $C_{\mathrm{lip}}\|\boldsymbol{u}\|$
at most. Thus, 
\begin{equation}
\hat{t}(\boldsymbol{\delta}^{*})+C_{\mathrm{lip}}||\boldsymbol{u}||\ge\hat{t}(\boldsymbol{\delta}^{*}+\boldsymbol{u})\label{eq:dd}
\end{equation}
From \eqref{eq:bb} \eqref{eq:cc} and \eqref{eq:dd}, we now have
$\hat{z}_{\boldsymbol{\delta}^{*}+\boldsymbol{u}}(\boldsymbol{x}_{p})\ge\hat{t}(\boldsymbol{\delta}^{*}+\boldsymbol{u})$
which means 
\begin{align*}
\boldsymbol{x}_{p}\notin\overline{X}_{n}^{p}(\boldsymbol{\delta}^{*}+\boldsymbol{u})
\end{align*}
with high probability. As we have mentioned earlier, it is obvious
that $\boldsymbol{x}_{p}\notin\overline{X}^{p}(\boldsymbol{\delta}^{*})$,
so 
\begin{align}
\boldsymbol{x}_{p}\notin\overline{X}^{p}(\boldsymbol{\delta}^{*})\cup\overline{X}_{n}^{p}(\boldsymbol{\delta}^{*}+\boldsymbol{u}).\label{eq:b1}
\end{align}
Similarly, one can show if $z_{\boldsymbol{\delta}^{*}}(\boldsymbol{x}_{p})\le t(\boldsymbol{\delta}^{*})-2C_{\mathrm{lip}}||\boldsymbol{u}||-\epsilon$,
then 
\begin{align}
\boldsymbol{x}_{p}\in\overline{X}_{n}^{p}(\boldsymbol{\delta}^{*}+\boldsymbol{u})\cap\overline{X}^{p}(\boldsymbol{\delta}^{*})\label{eq:b2}
\end{align}
(which is the center-most region in Figure \ref{fig:The-relationship-of})
with high probability. Now we can conclude that:
\begin{align}
M(\boldu)\subseteq X_{p}\cap T(\boldsymbol{u},\frac{L_{1}}{\sqrt{n_{p}}})\text{ w.h.p}.\label{eq:ee}
\end{align}
Due to Dvoretzky–Kiefer–Wolfowitz inequality, 
\begin{align*}
P_{n_{p}}(\boldsymbol{x}_{p} & \in T(\boldsymbol{u},\frac{L_{1}}{\sqrt{n_{p}}}))-P(\boldsymbol{x}_{p}\in T(\boldsymbol{u},\frac{L_{1}}{\sqrt{n_{p}}}))\le\frac{L_{2}}{\sqrt{n_{p}}}
\end{align*}
holds with probability at least $\exp\left[-2L_{2}^{2}\right],\forall L_{2}>0$.
Thus, using Assumption \ref{ass.ratio.lip} we have 
\begin{align*}
P_{n_{p}}(\boldsymbol{x}_{p} & \in T(\boldsymbol{u},\frac{L_{1}}{\sqrt{n_{p}}}))\le C_{\mathrm{CDF}}\cdot\|\boldsymbol{u}\|+\frac{L_{1}}{\sqrt{n_{p}}}+\frac{L_{2}}{\sqrt{n_{p}}}\text{ w.h.p}.
\end{align*}
Now we know the cardinality of $X_{p}\cap T(\boldsymbol{u},\frac{L_{1}}{\sqrt{n_{p}}})$
can be bounded by $\left(C_{\mathrm{CDF}}\cdot\|\boldsymbol{u}\|+\frac{L_{1}+L_{2}}{\sqrt{n_{p}}}\right)\cdot n_{p}$
with high probability. Finally, we have 
\begin{align}
\frac{1}{n_{p}}\sum_{i\in X_{p}\cap T(\boldsymbol{u},\frac{L_{1}}{\sqrt{n_{p}}})}\|\boldx_{p}^{(i)}\|_{\infty} & \le\frac{1}{n_{p}}\left(C_{\mathrm{CDF}}\cdot\|\boldsymbol{u}\|+\frac{L_{1}+L_{2}}{\sqrt{n_{p}}}\right)\cdot n_{p}C_{p}\nonumber \\
 & \le C_{\mathrm{CDF}}\cdot\|\boldsymbol{u}\|C_{p}+\frac{(L_{1}+L_{2})\cdot C_{p}}{\sqrt{n_{p}}}\label{eq:ff}
\end{align}

Now, we show $X_{\mathrm{border}}(\boldsymbol{\delta}^{*}+\boldsymbol{u})\subseteq X_{p}\cap T(\boldsymbol{u},\frac{L_{1}}{\sqrt{n_{p}}})$.
The proof for this is similar to the arguments above. Using Assumption \ref{ass.ratio.lip}, it can be shown
that 
\[
\hat{t}(\boldsymbol{\delta}^{*}+\boldsymbol{u})\in\left[t'(\boldsymbol{\delta}^{*})-C_{\mathrm{lip}}\|\boldsymbol{u}\|-\frac{L_{1}}{\sqrt{n_{p}}},t'(\boldsymbol{\delta}^{*})+C_{\mathrm{lip}}\|\boldsymbol{u}\|+\frac{L_{1}}{\sqrt{n_{p}}}\right],
\]
 and from definition, $\forall\boldsymbol{x}\in X_{\mathrm{border}}(\boldsymbol{\delta}^{*}+\boldsymbol{u}),\hat{z}_{\boldsymbol{\delta}^{*}+\boldsymbol{u}}(\boldsymbol{x})=\hat{t}(\boldsymbol{\delta}^{*}+\boldsymbol{u})$,
\[
\hat{z}_{\boldsymbol{\delta}^{*}+\boldsymbol{u}}(\boldsymbol{x})\in\left[t'(\boldsymbol{\delta}^{*})-C_{\mathrm{lip}}\|\boldsymbol{u}\|-\frac{L_{1}}{\sqrt{n_{p}}},t'(\boldsymbol{\delta}^{*})+C_{\mathrm{lip}}\|\boldsymbol{u}\|+\frac{L_{1}}{\sqrt{n_{p}}}\right],
\]
and due to Assumption \ref{ass.ratio.lip}, 
\[
\hat{z}_{\boldsymbol{\delta}^{*}}(\boldsymbol{x})\in\left[t'(\boldsymbol{\delta}^{*})-2C_{\mathrm{lip}}\|\boldsymbol{u}\|-\frac{L_{1}}{\sqrt{n_{p}}},t'(\boldsymbol{\delta}^{*})+2C_{\mathrm{lip}}\|\boldsymbol{u}\|+\frac{L_{1}}{\sqrt{n_{p}}}\right].
\]
Again, this relationship does not change if we replace $\hat{z}$
and $t'$ at the same time with $z$ and $t$ 
\begin{align}
z_{\boldsymbol{\delta}^{*}}(\boldsymbol{x})\in\left[t(\boldsymbol{\delta}^{*})-2C_{\mathrm{lip}}\|\boldsymbol{u}\|-\frac{L_{1}}{\sqrt{n_{p}}},t(\boldsymbol{\delta}^{*})+2C_{\mathrm{lip}}\|\boldsymbol{u}\|+\frac{L_{1}}{\sqrt{n_{p}}}\right] & \subseteq T(\boldsymbol{u},\frac{L_{1}}{\sqrt{n_{p}}}).\label{eq:gg}
\end{align}

Inequalities \eqref{eq:aa}, \eqref{eq:ee}, \eqref{eq:ff} and \eqref{eq:gg}
complete the proof. 
\end{proof}

\section{Numerical Analysis\label{sec:Numerical-Analysis}}

In this section, we present a few numerical experimental results under
outlier and truncation setting. In all experiments, we set $n_{p}=n_{q}=5000,\lambda=0,$
and the solution of $\hat{\boldsymbol{\delta}}$ was obtained using
Algorithm \ref{alg}. We let $f(x)=x.$ Note this is the correct log-ratio
model for two Gaussian distributions with different means.

\paragraph{Outlier Setting}

In this setting, we first generate two ``good'' datasets $G\iid p(x)=N(0,1),$
and $X_{q}\iid q(x)=N(-.75,1)$. The outlier set $B_{b}$ is generated
from a uniform distribution $U(-0.4+b,0.4+b),b\in[0,6]$. The density
ratio estimation is performed using two sets of data: $X_{p,b}=\left\{ G,B_{b}\right\} $
and $X_{q}$, where the cardinality of $B$ is 1000. We repeat the
estimation using different choices of $b$ and test its influence
on our estimate $\hat{r}(\boldsymbol{x};\hat{\boldsymbol{\delta}}_{b})$.
The results can be seen from Figure \ref{fig:Outlier-Setting}, where
the histograms of $G$ and $X_{q}$ are colored red and green respectively.
The true density ratio $\frac{p(x)}{q(x)}$ is plotted as a dotted
line. The histograms of $B_{b}$ with different choices of $b$ was
plotted using gradient colors from light blue to purple (we skipped
some choices of $b$ for better visualization). For each $b$, we
run the density ratio estimation, and plot learned $\hat{r}(\boldsymbol{x};\hat{\boldsymbol{\delta}}_{b})$
using the same gradient color. In the figure, we resale $\hat{r}(\boldsymbol{x};\hat{\boldsymbol{\delta}}_{b})$
and the true density ratio using a same constant, so they can be plotted
alongside with the histogram. Here, we test two methods: the log-Linear KLIEP and the robust estimator proposed in this paper.

It can be easily seen that as $b\rightarrow6$, KLIEP (Figure \ref{fig:Non-robust-Density-Ratio})
tends to significantly overestimate the density ratio and is sensitive
to the change of $b$. The proposed method (Figure \ref{fig:Robust-Density-Ratio}),
tends to underestimate the density ratio when $b$ is small. However,
as $b$ gradually shifts away from the center of $X_{p}$, leaving
the ``gap'' between inlier and outlier, the robust estimator converges
to the true density ratio function.

\paragraph{Truncated Setting}

In this setting, we generate samples $X_{p}\iid p(x)=N(0,1)$ without
any contamination. Usually, the $\nu$-th quantile of $z(\boldsymbol{x}_{p};\boldsymbol{\delta}^{*})$
cannot be analytically computed as we do not know the true density
ratio. However, it can be seen that for a strictly monotone increasing
$z(\boldsymbol{x}_{p},\boldsymbol{\delta}^{*})$, samples in the $\nu$-th
quantile of $z(\boldsymbol{x}_{p},\boldsymbol{\delta}^{*})$ must
be in the $\nu$-th quantile of $x_{p}$ since the relative
order among $x_{p}$ is preserved after a strictly monotone
transform. Thus, we obtain the truncation domain $\overline{X}(\boldsymbol{\delta}^{*})=\left\{ -\infty\le x\le\Phi^{-1}(\nu)\right\} $,
where $\Phi^{-1}$ is the inverse CDF of $N(0,1)$. We then generate
samples $X_{q}\sim TN(-0.5,1,-\infty,\Phi^{-1}(\nu))$, where $TN$
is a truncated Gaussian distribution and the last two parameters are
the truncation borders. Note we set the mean of $Q$ to be a negative
value so that the true density ratio $\bar{p}/\bar{q}$ is a monotone
increasing function.

The results for $\nu=0.5$ are plotted on Figure \ref{fig:Truncated-Setting}
where the true truncated ratio is plotted as a dotted line. It can
be seen that the learned $\hat{r}(\boldsymbol{x};\hat{\boldsymbol{\delta}})$
is fairly close to the true truncated density ratio.

\begin{figure}[t]
	\subfloat[Non-robust Density Ratio Estimation\label{fig:Non-robust-Density-Ratio}]{ \includegraphics[width=0.49\textwidth]{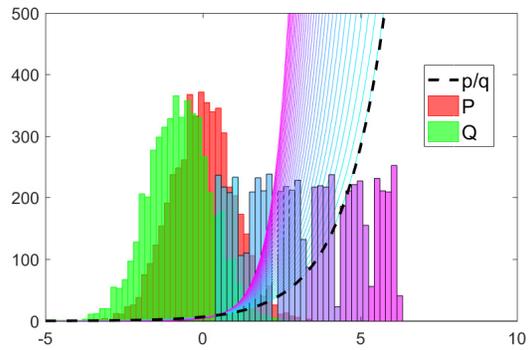}
	}
	\subfloat[Robust Density Ratio Estimation\label{fig:Robust-Density-Ratio}]{\includegraphics[width=0.49\textwidth]{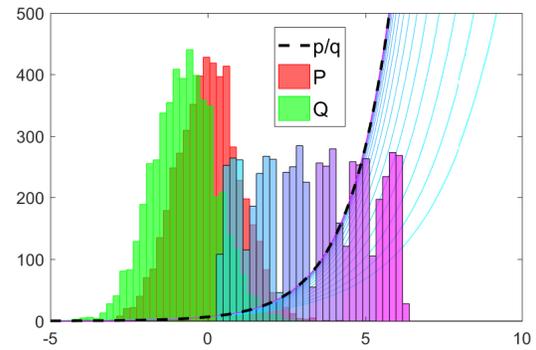}
	}
	\centering{}\caption{Outlier Setting\label{fig:Outlier-Setting}}
\end{figure}
\begin{figure}[t]
	\centering{}\includegraphics[width=0.7\textwidth]{}\caption{Truncated Set\label{fig:Truncated-Setting}ting}
\end{figure}

\end{document}